\def\eqref#1{equation~\ref{#1}}
\def\1{\bm{1}}
\DeclareMathAlphabet{\mathsfit}{\encodingdefault}{\sfdefault}{m}{sl}
\SetMathAlphabet{\mathsfit}{bold}{\encodingdefault}{\sfdefault}{bx}{n}
\definecolor {processblue}{cmyk}{0.96,0,0,0}
\tikzstyle{int}=[draw, fill=blue!20, minimum size=2em]
\tikzstyle{init} = [pin edge={to-,thin,black}]
\pgfplotsset{compat=1.14}
\newtheorem{lem}{Lemma}
\newtheorem{rem}{Remark}
\newtheorem{thm}{Theorem}
\newtheorem{cor}{Corollary}
\newcommand{\ex}[2]{{\ifx&#1& \mathbb{E} \else \underset{#1}{\mathbb{E}} \fi \left[#2\right]}}
\title{On the Generalization of Models Trained with SGD: Information-Theoretic Bounds and Implications}
\author{%
  Ziqiao Wang
  %\thanks{Use footnote for providing further information
    % about author (webpage, alternative address)---\emph{not} for acknowledging
    % funding agencies.} 
    \\
  University of Ottawa \\
  \texttt{zwang286@uottawa.ca} \\
%   examples of more authors
  \And
  Yongyi Mao \\
  University of Ottawa \\
  \texttt{ymao@uottawa.ca} \\
}
\begin{document}

\maketitle

\begin{abstract}

%Understanding the generalization behaviour of deep neural networks is an important theme of modern research in machine learning. 
This paper
follows up on a recent work of \citet{Neu2021InformationTheoreticGB} and 
presents some new information-theoretic upper bounds for the generalization error of machine learning models, such as neural networks, trained with SGD.  
%Following the setup of (Neu, 2021), our bounds are made tighter in two ways. One is via removing an unnecessary condition in (Neu, 2021) and applying the HWI inequality, and the other is through replacing the sample-level mutual information term in (Neu, 2021) with an instance-level mutual information counter-part, exploiting a recent result of (Bu et al, 2020). 
We apply these bounds to analyzing the generalization behaviour of linear and two-layer ReLU networks. Experimental study  of these bounds provide some insights on the SGD training of neural networks. They also point to a new and simple regularization scheme which we show performs comparably to the current state of the art. 
\end{abstract}

\section{Introduction}

%The outstanding performance of deep learning has brought to the surface some intriguing properties of deep neural networks, one of which is the observation that despite their high capacity, deep neural networks tend to generalize well \citep{ZhangBHRV17}. This contradicts classical wisdom in statistical learning theory. (e.g., \cite{Vapnik1998}  %\cite{bartlett2002rademacher}) and has stimulated intense research interest in understanding the generalization behaviour of modern neural networks. 

The observation that high capacity deep neural networks trained with mini-batched stochastic gradient descent, referred to SGD in this paper, tend to generalize well \citep{ZhangBHRV17} contradicts the classical wisdom in statistical learning theory
(e.g., \cite{Vapnik1998}  %\cite{bartlett2002rademacher}
) and has stimulated intense research interest in understanding the generalization behaviour of modern neural networks.

%There are several themes of active research in this direction. \textcolor{blue}{[]to summarize]}. 

%One theme of research focuses on the study of over-parameterized neural networks, where 

In this direction, generalization bounds for over-parameterized neural networks are obtained \citep{allen2019convergence,bartlett2017spectrally,neyshabur2015norm,neyshabur2018pac,neyshabur2018role,arora2018stronger,arora2019fine} and a curious ``double descent'' phenomenon is observed and analyzed \citep{belkin2019reconciling,nakkiran2019deep,yang2020rethinking}.
 %New bounding techniques for analyzing generalization have also been developed, utilizing information theoretic quantities \citep{russo2016controlling,russo2019much,xu2017information,asadi2018chaining,bu2020tightening,SteinkeZ20,dwork2015generalization,bassily2018learners,asadi2020chaining,hafez2020conditioning,zhou2020individually}. The bounds provided by these techniques have the advantages of accounting for both the data structure and the learning algorithm. 
%A handful of generalization bounds for  overparameterized  neural networks have been proposed \cite{du2018gradient,allen2019convergence,bartlett2017spectrally,neyshabur2015norm,neyshabur2018pac,neyshabur2018role,arora2018stronger,arora2019fine}. It's worth noting that beyond the generalization bounds, the thought-provoking \textit{double descent risk curve} is observed and studied in the  overparameterized setting \cite{belkin2019reconciling,nakkiran2019deep,yang2020rethinking}.
 %The generalization ability of neural networks trained with mini-batched stochastic gradient descent \citep{robbins1951stochastic}, simply referred to SGD in this paper,  
 %has also been widely studied. Specifically, 
 Built on a connection between stability and generalization \citep{bousquet2002stability}, a stability-based bound is first presented in  \cite{hardt2016train}, followed  by a surge of research effort exploiting similar approaches  \citep{london2017pac,chen2018stability,feldman2019high,lei2020fine,bassily2020stability}. 
% Most of the existing SGD bounds make additional assumptions on the loss function such as convexity and smoothness. In contrast, the only assumption that information-theoretic bounds need is requiring the loss is subGaussian, which is naturely satisified for bounded loss.  Generally, information theoretical generalization bound refers to those based on the mutual information between inputs and outputs of learning algorithms. This kind of bound was first explicitly proposed by \citet{russo2016controlling,russo2019much} and \citet{xu2017information}. \citet{asadi2018chaining} consider the dependency between different hypotheses and  use the chaining method to improve the bound in \cite{xu2017information}. \citet{bu2020tightening} extend the technique in \citet{xu2017information} and derive an individual sample based mutual information bound. More recently, \citet{SteinkeZ20} provide a conditional mutual information (CMI) bound. Unlike the bound in \citet{xu2017information} that may go to infinity, the CMI bound is upper bounded by a constant number thanks to the fact that the conditional mutual information term is upper bounded by the entropy of a  Rademacher random variable. 
%Other mutual information based bound also see \cite{dwork2015generalization,bassily2018learners,asadi2020chaining,hafez2020conditioning,zhou2020individually}. 
Information-theoretic bounding techniques established recently 
\citep{russo2016controlling,russo2019much,xu2017information,asadi2018chaining,bu2020tightening,SteinkeZ20,dwork2015generalization,bassily2018learners,asadi2020chaining,hafez2020conditioning,zhou2020individually} have also demonstrated great power in analyzing SGD-like algorithms.  For example, \cite{pensia2018generalization} is the first to utilize information-theoretic bound in analyzing the generalization ability of SGLD \citep{gelfand1991recursive,welling2011bayesian}.
The bound was subsquently improved by \cite{negrea2019information,haghifam2020sharpened,rodriguez2020random,wang2021learning}.
%Since the Lipschitz condition of loss in \cite{pensia2018generalization} is too pessimistic, several works \cite{negrea2019information,haghifam2020sharpened,rodriguez2020random,wang2021learning} subsequently make the bound become either data-dependent or distribution-dependent. 
Inspired by the work of \cite{pensia2018generalization}, \cite{Neu2021InformationTheoreticGB} presents an information-theoretic analysis of the models trained with SGD. The analysis of \cite{Neu2021InformationTheoreticGB} constructs an auxiliary weight process parallel to SGD training and upper-bounds the generalization error through this auxiliary process. 

Another line of research connects the generalization of neural networks with the flatness of loss minima \citep{hochreiter1997flat} found by SGD or its variant \citep{keskar2017large,dinh2017sharp,DziugaiteR17,NeyshaburBMS17,ChaudhariCSLBBC17,jastrzkebski2017three,jiang2019fantastic, zheng2020regularizing,foret2020sharpness}. This understanding has led to the discovery of new SGD-based training algorithms for improved generalization. For example, in a concurrent development by \cite{zheng2020regularizing} and \cite{foret2020sharpness}, a local ``max-pooling'' operation is applied to the loss landscape prior to the SGD updates. This approach, referred to as AMP \citep{zheng2020regularizing} or SAM \citep{foret2020sharpness}, 
is shown to make SGD favor flatter minima and %appears to have achieved
achieve the state-of-the-art performance among various competitive regularization schemes.% \citep{zheng2020regularizing}. 

%Recently, \citet{zheng2020regularizing} and \citet{foret2020sharpness} concurrently proposed a new optimization objective that minimizes both the loss value and loss sharpness, which is highly related to our proposed regularization technique. Specifically, their works can be view as a "max pooling" of the trained parameters' neighborhoods while our method is a "average pooling" version.

In this paper, we focus on investigating the generalization of machine learning models trained with SGD. Although we are primarily motivated by the curiosity to understanding neural networks, the results of this paper in fact apply broadly to any model trained with SGD.

% \textcolor{red}
This work
%builds upon that of \cite{Neu2021InformationTheoreticGB}. 
follows the same construction of the auxiliary weight process in \cite{Neu2021InformationTheoreticGB} and develops
upper bounds of generalization error that extend the work of \cite{Neu2021InformationTheoreticGB}.
Like those in \cite{Neu2021InformationTheoreticGB}, the bounds we obtain can be decomposed into two terms, one measuring the impact of training trajectories (``the trajectory term'') and the other measuring the impact of the flatness of the found solution (``the flatness term\footnote{We note this term is correlated with flatness rather than precisely measures the flatness.}'').
Having an identical flatness term as that in \cite{Neu2021InformationTheoreticGB}, empirical evidence hints that our bounds have a improved trajectory term. Figure \ref{fig:bound-compare} shows an experimental comparison of the trajectory term in  our bound (Theorem \ref{thm:re-neu-bound}) with that in the bound in \cite{Neu2021InformationTheoreticGB} (re-stated as Lemma \ref{lem:neu-bound} in this paper)
% \footnote{The original bound in \cite{neu2021information} has an error in a coefficient, which we fixed in this paper.}) 
for two neural network models. The trajectory terms are compared in a deterministic setting for two different values of $\sigma$ (the variance parameter of the noise in the auxiliary weight process, details given in appropriate context and in Appendix \ref{appendix:sec-more-compare}). %We will elaborate on these two trajectory terms in an appropriate context.

The trajectory term in the bounds of \cite{Neu2021InformationTheoreticGB} accumulates two terms over training steps: {\em local gradient sensitivity}, measuring the sensitivity of the SGD gradient signal to weight perturbations, and {\em gradient dispersion}\footnote{The quantity is often referred to as gradient variance in the literature, %\cite{neu2021information,wang2021learning}, 
but we prefer ``dispersion'' to ``variance'' so as to better comply with the mathematical conventions and avoid possible confusion.}, measuring the extent to which the gradient signal  spreads around its mean. Usually gradient dispersion vanishes with training iterations but local gradient sensitivity does not. Our improvement over \cite{Neu2021InformationTheoreticGB} is achieved by removing the local sensitivity term and 
% revising the definition of gradient dispersion. 
involving the logarithmic of a revised gradient dispersion. Although our new definition of gradient dispersion is in general larger than that in \cite{Neu2021InformationTheoreticGB}, as long as the number of iterations is not too small, the fact that our gradient dispersion also vanishes with training allows our bounds to be tighter than \cite{Neu2021InformationTheoreticGB}.

\begin{figure}[ht!]
% \vspace{-10pt}
\begin{subfigure}[b]{0.24\columnwidth}%
\centering%
\captionsetup{font=small}%
\scalebox{0.35}{
\begin{tikzpicture}
\begin{axis}
[legend style={nodes={scale=1.25, transform shape}},
% ymode=log,
legend cell align={left},
legend pos=north west,
xlabel=epoch,
% xmajorgrids=true,
% ymajorgrids=true,
% grid style=dashed,
xmin=0,
    xmax=50,
ylabel=Trajectory Term.
]
% \addplot[line width=2pt, color={rgb:red,1;green,2;blue,5}] table [y=TrAcc0, x=epoch0]{MLP.txt};
% \addlegendentry{corruption=$0$}
\addplot[line width=2pt, color=red] table [y=Ours2, x=epoch]{data/MLP_var_sen_std05.txt};
\addlegendentry{Ours}
\addplot[line width=2pt, color=orange] table [y=Neu, x=epoch]{data/MLP_var_sen_std05.txt};
\addlegendentry{\cite{Neu2021InformationTheoreticGB}}
\end{axis}
\end{tikzpicture}
}
\caption{$\sigma=10^{-5}$ (MNIST)}%
\end{subfigure}
\begin{subfigure}[b]{0.24\columnwidth}%
\centering%
\captionsetup{font=small}%
\scalebox{0.35}{
\begin{tikzpicture}
\begin{axis}
[legend style={nodes={scale=1.25, transform shape}},
% ymode=log,
legend cell align={left},
legend pos=north west,
xlabel=epoch,
% xmajorgrids=true,
% ymajorgrids=true,
% grid style=dashed,
xmin=0,
    xmax=100,
ylabel=Trajectory Term.
]
% \addplot[line width=2pt, color={rgb:red,1;green,2;blue,5}] table [y=TrAcc0, x=epoch0]{MLP.txt};
% \addlegendentry{corruption=$0$}
\addplot[line width=2pt, color=red] table [y=Ours2, x=epoch]{data/MLP_var_sen_std06.txt};
\addlegendentry{Ours}
\addplot[line width=2pt, color=orange] table [y=Neu, x=epoch]{data/MLP_var_sen_std06.txt};
\addlegendentry{\cite{Neu2021InformationTheoreticGB}}
\end{axis}
\end{tikzpicture}
}
\caption{$\sigma=10^{-6}$ (MNIST)}%
\end{subfigure}
\begin{subfigure}[b]{0.24\columnwidth}%
\centering%
\captionsetup{font=small}%
\scalebox{0.35}{
\begin{tikzpicture}
\begin{axis}
[legend style={nodes={scale=1.25, transform shape}},
% ymode=log,
legend cell align={left},
legend pos=north west,
xlabel=epoch,
% xmajorgrids=true,
% ymajorgrids=true,
% grid style=dashed,
xmin=0,
    xmax=50,
ylabel=Trajectory Term.
]
% \addplot[line width=2pt, color={rgb:red,1;green,2;blue,5}] table [y=TrAcc0, x=epoch0]{MLP.txt};
% \addlegendentry{corruption=$0$}
\addplot[line width=2pt, color=red] table [y=Ours2, x=epoch]{data/CNN_var_sen_std05.txt};
\addlegendentry{Ours}
\addplot[line width=2pt, color=orange] table [y=Neu, x=epoch]{data/CNN_var_sen_std05.txt};
\addlegendentry{\cite{Neu2021InformationTheoreticGB}}
\end{axis}
\end{tikzpicture}
}
\caption{$\sigma=10^{-5}$ (CIFAR10)}%
\end{subfigure}
\begin{subfigure}[b]{0.24\columnwidth}%
\centering%
\captionsetup{font=small}%
\scalebox{0.35}{
\begin{tikzpicture}
\begin{axis}
[legend style={nodes={scale=1.25, transform shape}},
% ymode=log,
legend cell align={left},
legend pos=north west,
xlabel=epoch,
% xmajorgrids=true,
% ymajorgrids=true,
% grid style=dashed,
xmin=0,
    xmax=100,
ylabel=Trajectory Term.
]
% \addplot[line width=2pt, color={rgb:red,1;green,2;blue,5}] table [y=TrAcc0, x=epoch0]{MLP.txt};
% \addlegendentry{corruption=$0$}
\addplot[line width=2pt, color=red] table [y=Ours2, x=epoch]{data/CNN_var_sen_std06.txt};
\addlegendentry{Ours}
\addplot[line width=2pt, color=orange] table [y=Neu, x=epoch]{data/CNN_var_sen_std06.txt};
\addlegendentry{\cite{Neu2021InformationTheoreticGB}}
\end{axis}
\end{tikzpicture}
}
\caption{$\sigma=10^{-6}$ (CIFAR10)}%
\end{subfigure}
\caption{Comparison of the trajectory term between our bound (Theorem \ref{thm:re-neu-bound}) and the bound in \cite{Neu2021InformationTheoreticGB}.
(a)(b) MLP trained on MNIST. (c)(d) CNN trained on CIFAR-10.
}
% \vspace{-5pt}
\label{fig:bound-compare}
\end{figure}
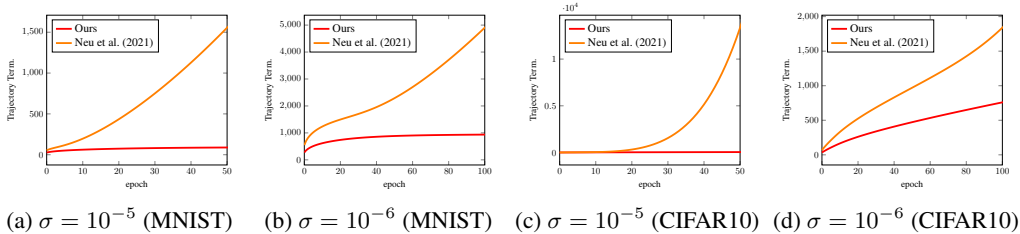

We also provide an application of our bounds in analyzing the generalization behaviour of linear and two-layer ReLU networks, where we show that the activation state in ReLU networks plays an important role in generalization.

It is remarkable that removing the local gradient sensitivity term makes our bound have a simple closed form (after optimizing the noise variances), much easier to evaluate. We empirically validate the derived bounds, and provide various insights pertaining to the generalization behavior of models trained with SGD.
%For example, 
%the batch size of SGD appears to impact the trajectory term and the flatness term in opposite ways, which complicates the overall dependency of generalization error on batch sizes.  Aa particular interesting observation from our experiments is that  a key quantity arising in the trajectory term of the bounds, which we refer to as { gradient dispersion}, reveals a double descent phenomenon with respect to training epochs. 
For example, gradient dispersion is seen to reveal a double descent phenomenon with respect to training epochs, where the valley in the double descent curve appears to mark the great divide between the ``generalization regime'' and the ``memorization regime'' of training. Furthering from this observation, we also show that it is possible to reduce the memorization effect by dynamically clipping the gradient and reducing its dispersion.

%Additionally, different phases in the double descent curve appear to indicate different phases of the training process.  For example, the initial descending phase of gradient dispersion appears to correlate with synchronized increasing of the training and testing accuracies, where the generalization error is negligible. In the ascending phase after that, the network starts to overfit. To the best of our knowledge, this observation has not been reported prior to this work. 

Our bounds also inspire a natural and simple solution to alleviate generalization error. Specifically, we propose a new training scheme, referred to as {\em Gaussian model perturbation} (GMP),  aiming at reducing the flatness term of the bounds. This scheme effectively applies a local ``average pooling'' to the empirical risk surface prior to SGD, greatly resembling the ``max-pooling'' approach adopted in AMP \citep{zheng2020regularizing}. We demonstrate experimentally that GMP achieves a competitive performance with the current art of regularization schemes.

Proofs, additional discussions and experimental results are presented in Appendices.
\paragraph{Other Related Literature}
Gradient dispersion is mostly studied from optimization perspectives \citep{bottou2018optimization,roux2012stochastic,johnson2013accelerating,wen2020empirical,faghri2020study}. Prior to this work,  only a few works relate gradient dispersion with the generalization behaviour of the networks. In \cite{Neu2021InformationTheoreticGB} and \cite{wang2021learning}, gradient dispersion also appears in the generalization bounds. 
and there are limited studies characterizing the generalization performance and the gradient variance. Particularly, 
in \cite{jiang2019fantastic}, gradient dispersion is argued to capture a notion of “flatness” of the local minima of the loss landscape, thereby correlating with generalization.
%and it is highly correlated with the generalization gap. 
%Our work builds on \citet{neu2021information} and \citet{wang2021learning}. 
%We provide the theoretical view of point to the relation between generalization and gradient variance, and we present more empirical evidence.
%\paragraph{Noise Injection}
Injecting noise in the training process has been proposed in various regularization schemes  \citep{bishop1995training,CamutoWSRH20,camuto2021asymmetric,srivastava2014dropout,wei2020implicit}.
But unlike GMP derived in this paper, where noise is injected to the model parameters, noise in those schemes is injected either to the training data or to the network activation. In addition, noise can also be implicitly  injected during training, for example, by adjusting the momentum terms \cite{xie2021positive}. The training objective in \cite{xie2021artificial} is similar to our GMP but their NVRM-SGD minimizes a region directly while our GMP considers a trade-off between the empirical loss and the flatness term.
Gradient clipping is a common technique for
preventing gradient exploding (see, e.g., \cite{merity2018regularizing,peters2018deep}). 
This technique is also used in \cite{zhang2019gradient} to accelerate training.
In this paper, gradient clipping is used to investigate and control the impact of gradient dispersion on generalization error.

\section{Preliminaries}
\label{sec:pre}
\paragraph{Population Risk, Empirical Risk and Generalization Error}
Unless otherwise noted, 
a random variable will be denoted by 
a capitalized letter (e.g., $Z$), and  its realization denoted by the corresponding lower-case letter (e.g. $z$).
Let $\mathcal{Z}$ be the instance space of interest and $\mu$ be an unknown distribution on $\mathcal {Z}$, specifying random variable $Z$.  
%In this paper, we use letter $Z$ to denote the random variable with distribution $\mu$.
Let ${\mathcal W}\subseteq \mathbb{R}^d$ be the space of hypotheses. %or model parameters
%and let $Z$ be an instance drawn from an unknown distribution $\mu$. 
Suppose that a training sample $S=(Z_1, Z_2, \dots, Z_n)$ is drawn i.i.d. from $\mu$ and that a stochastic learning algorithm $\mathcal{A}$ takes $S$ as its input and outputs a hypothesis $W\in {\mathcal W}$ according to some  conditional distribution 
%(or stochastic kernel) 
$P_{W|S}$ mapping ${\mathcal Z}^n$ to ${\cal W}$. 
%to denote a learning algorithm, which can randomly map a training sample $S$ to a hypothesis $W$ from the hypothesis space $\mathcal{W}\subseteq \mathbb{R}^d$ according to a conditional distribution $P_{W|S}$ (i.e., $\mathcal{A}: S\rightarrow W$).
Let $\ell: \mathcal{W}\times\mathcal{Z}\rightarrow \mathbb{R}^{+}$ be a loss function, where 
$\ell (w, z)$ 
measures the ``unfitness'' or ``error'' of any $z\in {\mathcal Z}$ with respect to a hypothesis $w\in {\mathcal W}$. 
The population risk, for any $w\in {\mathcal W}$,  is defined as
\[
L_\mu(w) \triangleq \mathbb{E}_{Z\sim \mu}[\ell(w,Z)].
\]
The goal of learning is to find a hypothesis $w$ that minimizes the population risk. But since $\mu$ is only partially accessible via the sample $S$, in practice, we instead turn to the empirical risk, defined as
\[
L_S(w) \triangleq \frac{1}{n}\sum_{i=1}^n \ell(w,Z_i).
\]
The expected generalization error of the learning algorithm $\mathcal{A}$ is then defined as 
\[
{\rm gen}(\mu,P_{W|S})\triangleq\mathbb{E}_{W,S}[L_\mu(W)-L_S(W)],
\]
where the expectation is taken over the joint distribution of $(S,W)$ (i.e., $\mu^n\otimes P_{W|S}$).

%(adopting the usual notion ``surrogate loss'' \cite{shalev2014understanding})
Throughout this paper, 
we take $\ell$ as a continuous function (adopting the usual notion ``surrogate loss'' 
\citep{shalev2014understanding}
).
Additionally, we assume that $\ell$ is differentiable almost everywhere with respect to both $w$ and $z$. Furthermore we assume that $\ell(w, Z)$ is $R$-subgaussian\footnote{A random variable $X$ is $R$-subgaussian if for any $\rho$, $\log {\mathbb E} \exp\left( \rho \left(
X- {\mathbb E}X
\right) \right) \le \rho^2R^2/2$.}
for any $w\in\mathcal{W}$. Note that a bounded loss is guaranteed to be subgaussian.
Let $I(X; Y)$ denote the mutual information 
\citep{cover2012elements}
between any pair of random variables $(X, Y)$. 
The following result is known.

\begin{lem}[{\citet[Theorem~1.]{xu2017information}}]
Assume the loss $\ell(w, Z)$ is $R$-subgaussian for any $w \in \mathcal{W}$. 
The generalization error of  ${\cal A}$ is bounded by
% \vspace{-5pt}
\[
|{\rm gen}(\mu,P_{W|S})|\leq \sqrt{\frac{2R^2}{n}I(W;S)},
\]
% \vspace{-10pt}
%where $I(W;S)$ is the mutual information \cite{cover2012elements} between $W$ and $S$.
\label{lem:xu's-bound}
\end{lem}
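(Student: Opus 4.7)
The plan is the classical change-of-measure argument via the Donsker-Varadhan variational representation of KL divergence. Define $g(W, S) := L_\mu(W) - L_S(W)$ so that $\mathrm{gen}(\mu, P_{W|S}) = \E_{P_{W,S}}[g(W,S)]$. The first observation I would use is that under the product measure $P_W \otimes P_S$ the sample $S$ is independent of $W$, so $\E[\ell(W, Z_i) \mid W] = L_\mu(W)$ for every $i$ and hence $\E_{P_W \otimes P_S}[g(W,S)] = 0$. This lets me write the generalization error as the gap between $\E_P[g]$ and $\E_Q[g]$ with $P = P_{W,S}$ and $Q = P_W \otimes P_S$, which is precisely the setting for Donsker-Varadhan.

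Next I would control the cumulant generating function of $g$ under $Q$. Condition on $W = w$: the variables $\ell(w, Z_1), \ldots, \ell(w, Z_n)$ are i.i.d.\ and $R$-subgaussian with mean $L_\mu(w)$, so $L_\mu(w) - L_S(w)$ is $(R/\sqrt{n})$-subgaussian by the standard tensorization of subgaussian MGFs. Because this bound is uniform in $w$, taking the outer expectation over $W \sim P_W$ preserves it and yields
\[
  \log \E_{P_W \otimes P_S} \exp\bigl(\lambda\, g(W,S)\bigr) \;\le\; \frac{\lambda^2 R^2}{2n}, \qquad \lambda \in \R.
\]

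Combining these two ingredients with Donsker-Varadhan in the form $\lambda\, \E_P[g] - \log \E_Q[e^{\lambda g}] \le \KL(P\Vert Q)$ applied to the measures above (so that $\KL(P\Vert Q) = I(W;S)$) gives
\[
  \lambda\, \mathrm{gen}(\mu, P_{W|S}) \;\le\; \frac{\lambda^2 R^2}{2n} + I(W;S) \qquad \text{for every } \lambda \in \R.
\]
Optimizing the right-hand side over $\lambda > 0$ bounds the positive part of $\mathrm{gen}$; rerunning the same argument with $-g$ in place of $g$ bounds the negative part, and together they deliver the stated absolute-value bound $|\mathrm{gen}(\mu, P_{W|S})| \le \sqrt{2R^2 I(W;S)/n}$.

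I do not expect a genuine obstacle here, as the argument is short and entirely standard. The only subtle step to double-check is the lifting from per-hypothesis subgaussianity (which is all the assumption literally guarantees) to joint subgaussianity of $g(W,S)$ under $P_W \otimes P_S$; this works precisely because $R$ does not depend on $w$, so the inner MGF bound is deterministic and the outer expectation over $W$ is trivial. If instead the subgaussian parameter were $R(w)$, one would need a uniform or at least integrable envelope to close the argument.
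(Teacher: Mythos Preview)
Your argument is correct and is exactly the standard Donsker--Varadhan proof from \cite{xu2017information}. Note that the present paper does not give its own proof of this lemma; it simply quotes the result from the literature, so there is nothing further to compare against.
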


% \begin{lem}[{\citet[Proposition~1.]{bu2020tightening}}]
% %Assume the loss $\ell(w, Z)$ is $R$-subgaussian for any $w \in \mathcal{W}$. 
% The  generalization error of ${\cal A}$ is bounded by
% \vspace{-5pt}
% \[
% |{\rm gen}(\mu,P_{W|S})|\leq \frac{1}{n}\sum_{i=1}^n\sqrt{2R^2I(W;Z_i)},
% \]
% \label{lem:individual-bound}
% \vspace{-10pt}
% \end{lem}

\paragraph{Stochastic Gradient Descent}
We now restrict the learning algorithm ${\cal A}$ to be the mini-batched SGD algorithm for empirical risk minimization. For each training epoch, the dataset $S$ is randomly split into $m$ disjoint mini-batches, each having size $b$, namely, $n=mb$. Based on each batch, one parameter update is performed. Specifically, let $B_t$ denote the batch used for the $t^{\rm th}$ update.  Define 
\[
g(w,B_t) \triangleq \frac{1}{b}\sum_{z\in B_t}\nabla_w {\ell}(w, z),
\]
namely, 
$g(w,B_t)$ is the
average gradient computed for the batch $B_t$ with respect to parameter $w$. The rule for the $t^{\rm th}$ parameter update is then
%: $S=\bigcup_{i=1}^m S_i$, and $|S_i|=b$. 
%At $t$-th iteration step, the gradient of a mini-batch is defined as $g(w,B_t) \triangleq \frac{1}{b}\sum_{Z\in B_t}\nabla_w \hat{\ell}(w, Z)$, where $B_t \in [m]$ is the mini-batch index and $\hat{\ell}$ is a surrogate loss of $\ell$. The updating rule of this iterative algorithm is 
\[
    W_t \triangleq W_{t-1} - \lambda_t g(W_{t-1}, B_t),
%    \label{eq:sgd-update}
\]
where $\lambda_t$ is the learning rate at the step $t$. The initial parameter setting $W_0$ is assumed to be drawn from the zero-mean spherical Gaussian ${\cal N}(0, \sigma_0^2{\bf I}_d)$ with variance $\sigma_0^2$ in each dimension. We will assume that the SGD algorithm stops after $T$ updates and outputs $W_T$ as the learned model parameter.

Given the training sample $S$, let $\xi$ govern the randomness in the  sequence $(B_1, B_2, \ldots, B_T)$ of batches. For the simplicity of notion, we will fix the configuration of $\xi$. That is, we will assume a fixed ``batching trajectory'', or a fixed way to shuffle the example indices $\{1, \ldots, n\}$ and divide them into $m$ batches in each epoch. The presented generalization bounds of this paper can be extended to the case where the batching trajectory is uniformly random (as we set up above). This merely involves averaging over all batching trajectories or taking expectation over $\xi$.

\paragraph{Auxiliary Weight Process}
We now associate with the SGD algorithm an auxiliary weight process $\{\widetilde{W}_t\}$.
Let $\sigma_1, \sigma_2, \ldots, \sigma_T$ be a sequence of positive real numbers. 
%such that $\sum\limits_{t=1}^T \sigma_t^2 = \sigma^2$. 
Define
%Suppose we also have another auxiliary iterative algorithm that will take the same training dataset $S$ as input and take $\widetilde{W}_T$ as the output. The updating rule is:
%\begin{eqnarray}
\[
    \widetilde{W}_0  \triangleq  W_0,\quad {\rm and}~~
    \widetilde{W}_t  \triangleq  \widetilde{W}_{t-1} - \lambda_t g(W_{t-1}, B_t) + N_t, ~{\rm for} ~t>0,
    %\label{eq:sgld-update}
\]
%\end{eqnarray}
where $N_t \sim \mathcal{N}(0, \sigma_t^2\mathbf{I}_d)$ is a Gaussian noise.  The relationship between this auxiliary weight process 
$\{\widetilde{W}_t\}$ and the weight process $\{W_t\}$ in SGD is shown in the Bayesian network 
below.
%in Figure \ref{fig:twoProcesses}.
%\begin{figure}
\begin{center}
% \vspace{-10pt}
{\small 
    \begin{tabular}{clclclclclc}
          &  & $N_1$  &  & $N_2$&   & $\cdots$ &  & $N_{T-1}$&  & $N_{T}$ \\
          & &$\downarrow$ & & $\downarrow$ & & & &$\downarrow$& &$\downarrow$\\
         $\widetilde{W}_0$ &  $\rightarrow$ & $\widetilde{W}_1$ & $\rightarrow$ & $\widetilde{W}_2$& $\rightarrow$  & $\cdots$& $\rightarrow$ & $\widetilde{W}_{T-1}$& $\rightarrow$ & $\widetilde{W}_{T}$ \\
          $\parallel$ & $\nearrow$&  & $\nearrow$&  &$\nearrow$ & & & &$\nearrow$ &\\
         $W_0$& $\rightarrow$  & $W_1$ & $\rightarrow$ & $W_2$& $\rightarrow$ & $\cdots$& $\rightarrow$ & $W_{T-1}$& $\rightarrow$ & $W_{T}$\\
    \end{tabular}
}
% \vspace{-10pt}
\end{center}
%     \caption{The relationship between the auxiliary weight process and the weight process in SGD}
%     \label{fig:twoProcesses}
% \end{figure}

Let $\Delta_t=\sum_{\tau=1}^tN_\tau$. Noting that the weight updates in $\{\widetilde{W}_t\}$ uses the same gradient signal as that used in $\{W_t\}$ (which depends on $W_{t-1}$ not $\widetilde{W}_{t-1}$), it is immediate that
%Notice that the second term of RHS in Eq.\ref{eq:sgld-update} is exactly the second term of RHS in Eq.\ref{eq:sgd-update}, that is, the two updates use the same gradient signal. Thus, if two algorithms have the same initialization (i.e., $W_0=\widetilde{W}_0$), $\widetilde{W}_T$ is the perturbed version of $W_T$. More specifically, 
%\begin{align}
$   \widetilde{W}_t = W_t + \Delta_t.$
%\end{align}
%where $\Delta_T=\sum_{t=1}^TN_t$ and has distribution $\mathcal{N}(0, \sum_{t=1}^T\sigma_t^2\mathbf{I}_d)$.
Note that this auxiliary process follows the same construction as \cite{Neu2021InformationTheoreticGB}, which we will use to study the generalization error of SGD.
%We will exploit this auxiliary process to study the generalization error of SGD.  Notice that the auxiliary process only exists in our analysis.
% A key ingredient of our development is strong data processing inequalities or the contraction properties of stochastic kernels\cite{cohen1998comparisons,polyanskiy2015dissipation,polyanskiy2017strong,raginsky2016strong}. 
%Before we delve into the generalization bound, we introduce two important quantities that will appear in our bound.
To that end, let's define \textit{gradient dispersion} by
\begin{align}
\label{eq:def-gradient-dispersion}
    \mathbb{V}_t(w)\triangleq \ex{}{||g(w,B_t)-\ex{}{\nabla_w\ell(W,Z)}||_2^2},
\end{align}
where the inside expectation is taken over $(W,Z)\sim\mu\otimes P_{W|Z}$. 
For a given sample $s \in {\cal Z}^n$, define
% $
\begin{align*}
    \gamma(w,s)\triangleq \mathbb{E}\left[L_s(w+ \Delta_T)-L_s(w)\right],
\end{align*}
% $
where the expectation is taken over $\Delta_T$ and $L_s(w)$ is the empirical risk of $s$ at parameter $w$.

In the remainder of the paper, let $S'$ denote another sample drawn from $\mu^n$, independent of all other random variables. The main generalization bound in \cite{Neu2021InformationTheoreticGB} is re-stated below. 

%here for self-contained\footnote{Different scaling fator here is caused by the definition of $\mathrm{V}(w)$ and the statement of Lemma \ref{lem:xu's-bound} in \citet{neu2021information}}.
\begin{lem}[{\cite[Theorem~1.]{Neu2021InformationTheoreticGB}}]
%Assume the loss $\ell(w, Z)$ is $R$-subgaussian for any $w \in \mathcal{W}$. 

The generalization error of SGD is upper bounded by
\begin{align*}
|\mathrm{gen}(\mu,P_{W_T|S})|\leq 2\sqrt{\frac{2R^2}{n}\sum_{t=1}^T\frac{\lambda_t^2 }{\sigma_t^2}\ex{}{\Psi(W_{t-1})+ \mathbb{\widetilde{V}}_t(W_{t-1})}}+\left|\ex{}{\gamma(W_T,S)-\gamma(W_T,S')}\right|,
\end{align*}
where $\Psi(w_{t-1}) \triangleq \ex{}{||\ex{}{\nabla_w\ell(w_{t-1},Z)}-\ex{}{\nabla_w\ell(w_{t-1}+\zeta,Z)}||^2_2}$, $\zeta\sim\mathcal{N}(0,\sum_{i=1}^{t-1}\sigma^2_i \mathrm{I}_d)$ and $\mathbb{\widetilde{V}}_t(w)\triangleq \ex{}{||g(w,B_t)-\ex{}{\nabla_w\ell(w,Z)}||_2^2}$.
\label{lem:neu-bound}
\end{lem}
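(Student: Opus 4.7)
The plan is to introduce the noisy auxiliary weights $\widetilde{W}_T = W_T + \Delta_T$ and split the generalization error into a piece governed by $\widetilde{W}_T$ (where an information-theoretic bound applies cleanly) and a correction that produces the $\gamma$ terms.

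First I would add and subtract $L_\mu(\widetilde{W}_T)$ and $L_S(\widetilde{W}_T)$ inside the generalization error:
\begin{align*}
\mathrm{gen}(\mu, P_{W_T|S}) = \mathbb{E}\bigl[L_\mu(W_T) - L_\mu(\widetilde{W}_T)\bigr] + \mathrm{gen}(\mu, P_{\widetilde{W}_T|S}) + \mathbb{E}\bigl[L_S(\widetilde{W}_T) - L_S(W_T)\bigr].
\end{align*}
Writing $L_\mu(\cdot) = \mathbb{E}_{S' \sim \mu^n}[L_{S'}(\cdot)]$ with an independent ghost sample $S'$, the first bracket becomes $-\mathbb{E}[\gamma(W_T, S')]$ and the third becomes $\mathbb{E}[\gamma(W_T, S)]$. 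A triangle inequality isolates the flatness term and leaves $|\mathrm{gen}(\mu, P_{\widetilde{W}_T|S})|$ to be controlled.

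For the latter, since $\ell$ is $R$-subgaussian, Lemma~\ref{lem:xu's-bound} yields $|\mathrm{gen}(\mu, P_{\widetilde{W}_T|S})| \leq \sqrt{2R^2 I(\widetilde{W}_T;S)/n}$. I would then bound the mutual information by data processing and the chain rule,
\[
I(\widetilde{W}_T; S) \leq \sum_{t=1}^{T} I\bigl(\widetilde{W}_t; S \bigm| \widetilde{W}_{0:t-1}\bigr),
\]
using the crucial observation that, because the batching trajectory $\xi$ is fixed and the true iterate $W_\tau = \widetilde{W}_\tau - \Delta_\tau$ is recoverable from $(\widetilde{W}_{0:\tau}, S)$ through the deterministic SGD recursion, the conditional law of $\widetilde{W}_t$ given $(\widetilde{W}_{0:t-1}, S)$ is $\mathcal{N}(\widetilde{W}_{t-1} - \lambda_t g(W_{t-1}, B_t),\, \sigma_t^2 \mathbf{I}_d)$.

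For each $t$, I would upper-bound $I(\widetilde{W}_t; S \mid \widetilde{W}_{0:t-1})$ by $\mathbb{E}[\mathrm{KL}(\mathcal{N}(\widetilde{W}_{t-1}-\lambda_t g(W_{t-1}, B_t), \sigma_t^2 \mathbf{I}_d) \,\|\, \mathcal{N}(m_t^\ast, \sigma_t^2 \mathbf{I}_d))]$ for any $\widetilde{W}_{0:t-1}$-measurable reference mean $m_t^\ast$ (valid because the KL to the true $S$-marginal is smaller than to any fixed reference). Choosing $m_t^\ast = \widetilde{W}_{t-1} - \lambda_t \nabla L_\mu(\widetilde{W}_{t-1})$ and applying the Gaussian KL formula reduces the right-hand side to $(\lambda_t^2/2\sigma_t^2)\,\mathbb{E}\|g(W_{t-1},B_t)-\nabla L_\mu(\widetilde{W}_{t-1})\|_2^2$. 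Inserting the pivot $\nabla L_\mu(W_{t-1})$ via $\|a-b\|^2 \leq 2\|a-c\|^2 + 2\|c-b\|^2$ separates this into $\mathbb{E}[\Psi(W_{t-1})]$ (using that $\widetilde{W}_{t-1} - W_{t-1} = \Delta_{t-1} \sim \mathcal{N}(0,\sum_{i<t}\sigma_i^2 \mathbf{I}_d)$) and $\mathbb{E}[\widetilde{\mathbb{V}}_t(W_{t-1})]$, with the factor of $2$ feeding the constants in the claim. Summing over $t$, plugging back into Xu--Raginsky, and reattaching the $\gamma$-difference finishes the proof. The main obstacle lies in this last stretch: one must carefully justify that conditioning on $(\widetilde{W}_{0:t-1}, S)$ indeed collapses $\widetilde{W}_t$ to a Gaussian whose mean depends on $S$ only through the batched gradient, and one must pick the reference $m_t^\ast$ so that both $\Psi$ (sensitivity to the accumulated noise $\Delta_{t-1}$) and $\widetilde{\mathbb{V}}_t$ (dispersion around the population gradient) emerge naturally, rather than entangling them with a less interpretable conditional-mean reference.
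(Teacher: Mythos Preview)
Your argument is correct and, in fact, gives a slightly sharper constant than the stated bound (you end up with $\sqrt{2R^2/n\sum_t(\lambda_t^2/\sigma_t^2)\mathbb{E}[\Psi+\widetilde{\mathbb V}_t]}$, without the leading factor $2$). The decomposition of $\mathrm{gen}$ via the auxiliary process and the use of Lemma~\ref{lem:xu's-bound} are identical to the paper's first step (Eq.~\ref{ineq:xu-bound}), but from that point on the two routes diverge.

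The paper does not prove Lemma~\ref{lem:neu-bound} directly. Instead it first establishes Theorem~\ref{thm:tightest-bound} by (i) decomposing $I(\widetilde W_T;S)$ using Lemma~\ref{lem:sgd-chain}, which conditions only on the \emph{single} previous noisy iterate $\widetilde W_{t-1}$ (not the full history $\widetilde W_{0:t-1}$), and (ii) bounding each $I(G_t+N_t;S\mid \widetilde W_{t-1})$ via the HWI-based Lemma~\ref{lem:keylemma}, choosing $\Omega=\mathbb E_{\Delta_{t-1},Z}[g(\widetilde w_{t-1}-\Delta_{t-1},Z)]$. It then \emph{recovers} Lemma~\ref{lem:neu-bound} as Corollary~\ref{cor:recover-bound}, by weakening Theorem~\ref{thm:tightest-bound} through $\log(1+x)\le x$, Jensen, and two applications of $\|a+b\|^2\le 2\|a\|^2+2\|b\|^2$ to extract $\Psi$ and $\widetilde{\mathbb V}_t$ separately; this indirect route produces the constants $(3,2)$ in front of $(\Psi,\widetilde{\mathbb V}_t)$.

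Your route is more elementary: by conditioning on the full trajectory $\widetilde W_{0:t-1}$ you make the conditional law of $\widetilde W_t$ exactly Gaussian with \emph{deterministic} mean (because $W_{t-1}$ is recoverable), so the golden-formula/reference-Gaussian step reduces the conditional mutual information to a single squared-norm expectation, and one pivot at $\nabla L_\mu(W_{t-1})$ suffices. This avoids both the HWI inequality and Lemma~\ref{lem:keylemma}, and is essentially the original argument of \cite{Neu2021InformationTheoreticGB} streamlined. What the paper's detour buys is the intermediate Theorem~\ref{thm:tightest-bound}, which is strictly tighter and from which Lemma~\ref{lem:neu-bound} falls out as a weakening; your direct argument reaches Lemma~\ref{lem:neu-bound} faster but does not pass through the improved logarithmic bound.
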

The term $\Psi(w_{t-1})$ in the bound is referred to as ``local gradient sensitivity'' in \cite{Neu2021InformationTheoreticGB}.
% \textcolor{red}{\begin{rem}
Note that the inside expectation of $\mathbb{\widetilde{V}}_t(w)$ is taken over $Z\sim\mu$ instead of $(W,Z)\sim\mu\otimes P_{W|Z}$. Thus, ${\mathbb{\widetilde{V}}_t(w)}$ is in general not worse than our gradient dispersion ${\mathbb{{V}}_t(w)}$ for a fixed $w$ (see Figure \ref{fig:CompareMore} (d) in Appendix \ref{appendix:sec-more-compare}). However, the difference between these two terms is very small when $W$ is close to local minima due to the tiny gradient norm. 
In addition, the definition of our gradient dispersion is implicitly used in a bound in Section 5.2 of \cite{Neu2021InformationTheoreticGB}, which can be regarded as a weaker version of Eq. \ref{ineq:sgd-bound} in our Theorem \ref{thm:re-neu-bound}.
% \end{rem}}
% \input{SDPI}

\section{New Generalization Bounds for SGD}
\label{sec:theory}
%\subsection{Bounds}

Removal of the local sensitivity term  $\Psi(w_{t-1})$ requires invoking a special instance of the HWI inequality \cite[Lemma~3.4.2]{raginsky2018concentration}, which we first state.

\begin{lem}
Let X and Y be two random vectors in $\mathbb{R}^d$, and let $N \sim \mathcal{N}(0,\mathbf{I}_d)$ be independent of $(X, Y )$. Then, for any $t,t' > 0$,
%\begin{align*}
$
    \mathrm{D_{KL}}(P_{X+\sqrt{t}N}||P_{Y+\sqrt{t'}N})\leq \frac{1}{2t'}\mathbb{E}\left[||X-Y||_2^2\right]+\frac{d}{2}(\log\frac{t'}{t}+\frac{t}{t'}-1),
$
%\end{align*}
\label{lem:HWI}
where $\mathrm{D_{KL}}$ is the KL divergence.
\end{lem}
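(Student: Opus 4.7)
The plan is to represent each of the two smoothed distributions as a Gaussian mixture indexed by the underlying source random vector, and then invoke joint convexity of KL divergence to reduce the claim to an average of KL divergences between pairs of spherical Gaussians, each of which admits a known closed form.

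Concretely, first I would note that since $N$ is independent of $(X,Y)$, we have the conditional laws
\begin{align*}
P_{X+\sqrt{t}N \mid X=x} = \mathcal{N}(x, t\mathbf{I}_d), \qquad P_{Y+\sqrt{t'}N \mid Y=y} = \mathcal{N}(y, t'\mathbf{I}_d),
\end{align*}
so the marginals can be written as the mixtures
\begin{align*}
P_{X+\sqrt{t}N} = \int \mathcal{N}(x, t\mathbf{I}_d)\, dP_{X,Y}(x,y), \qquad P_{Y+\sqrt{t'}N} = \int \mathcal{N}(y, t'\mathbf{I}_d)\, dP_{X,Y}(x,y),
\end{align*}
where I have deliberately represented both mixtures using the \emph{same} measure $P_{X,Y}$, which plays the role of a coupling. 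The key step is then joint convexity of the KL divergence: for any probability measure $\pi$ on the index set and any two families of distributions $\{Q_\theta\}$ and $\{Q'_\theta\}$, one has $D_{\mathrm{KL}}\bigl(\int Q_\theta\, d\pi(\theta)\,\bigl\|\,\int Q'_\theta\, d\pi(\theta)\bigr) \leq \int D_{\mathrm{KL}}(Q_\theta \| Q'_\theta)\, d\pi(\theta)$. Applying this with $\pi = P_{X,Y}$ yields
\begin{align*}
D_{\mathrm{KL}}(P_{X+\sqrt{t}N} \| P_{Y+\sqrt{t'}N}) \leq \mathbb{E}_{(X,Y)}\bigl[D_{\mathrm{KL}}(\mathcal{N}(X, t\mathbf{I}_d) \| \mathcal{N}(Y, t'\mathbf{I}_d))\bigr].
\end{align*}

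Next I would substitute the standard closed form for the KL divergence between two spherical Gaussians of equal dimension,
\begin{align*}
D_{\mathrm{KL}}(\mathcal{N}(\mu_1, \sigma_1^2\mathbf{I}_d) \| \mathcal{N}(\mu_2, \sigma_2^2\mathbf{I}_d)) = \frac{1}{2\sigma_2^2}\|\mu_1-\mu_2\|_2^2 + \frac{d}{2}\Bigl(\log\frac{\sigma_2^2}{\sigma_1^2} + \frac{\sigma_1^2}{\sigma_2^2} - 1\Bigr),
\end{align*}
taking $\mu_1=X$, $\mu_2=Y$, $\sigma_1^2=t$, $\sigma_2^2=t'$. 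Pulling the deterministic trace-plus-log term out of the expectation and leaving only $\mathbb{E}[\|X-Y\|_2^2]$ inside gives exactly the right-hand side of the lemma.

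There is no real obstacle here; the content of the lemma is the combination of two standard facts (joint convexity of KL and the Gaussian KL formula), and the only modest subtlety is the choice of coupling. The natural instinct might be to couple $X+\sqrt{t}N$ and $Y+\sqrt{t'}N$ via sharing the noise $N$, but this does not by itself produce the cleanest bound; the cleaner route is the mixture-plus-convexity argument above, which automatically produces $\mathbb{E}[\|X-Y\|_2^2]$ with the supplied joint law $P_{X,Y}$ without needing any further optimization over couplings.
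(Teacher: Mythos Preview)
Your proposal is correct and takes essentially the same approach as the paper: both reduce to the inequality $D_{\mathrm{KL}}(P_{X+\sqrt{t}N}\|P_{Y+\sqrt{t'}N})\leq \mathbb{E}_{X,Y}[D_{\mathrm{KL}}(\mathcal{N}(X,t\mathbf{I}_d)\|\mathcal{N}(Y,t'\mathbf{I}_d))]$ and then substitute the closed-form Gaussian KL. The only cosmetic difference is that the paper phrases the reduction via the chain rule of KL (data processing for the marginalization $(X,Y,X+\sqrt{t}N)\mapsto X+\sqrt{t}N$) rather than via joint convexity of mixtures, but these are equivalent formulations of the same step.
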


% Here $\mathrm{D_{KL}}$ is the KL divergence. Note that the bound in Lemma \ref{lem:neu-bound} relies on a similar result which however requires the independence of $X$ and $Y$. 

%Using Lemma \ref{lem:HWI}, we obtain the following theorem.
%We defer the all the proofs to Appendix \ref{proof}. Removing the independence condition of $X$ and $Y$ will conspicuously simplify the proof of the information theoretic generalization bound for SGD. Let $S'$ be an independent copy of $S$ such that $S'\sim\mu$ and $S'\perp\!\!\!\perp W_T$. We now re-derive Lemma \ref{lem:neu-bound} by using Lemma \ref{lem:HWI} and state the results as follows:

Note that \cite{Neu2021InformationTheoreticGB} also makes uses of a similar lemma in their revision. However, the development in \cite{Neu2021InformationTheoreticGB} requires constructing a ghost auxiliary weight process in which an independent noise perturbation $\Delta_t'$ is introduced. Due to the mismatch between $\Delta_t$ and $\Delta_t'$, the local gradient sensitivity term $\Psi(W_{t-1})$ appears in their bound. In this paper, we waive this $\Delta_t'$ by using the following lemma.
\begin{lem}
\label{lem:keylemma}
Let random variables $X$,$Y$ and $\Delta$ be independent of $N\sim \mathcal{N}(0,\mathbf{I}_d)$. Then for any $\sigma>0$, any ${\mathbb R}^d$-valued function $f$, and any random variable $\Omega\in\mathbb{R}^d$ that is a function of $Y$, we have
\[
I(f(Y+\Delta,X)+\sigma N;X|Y)\leq \frac{d}{2}\ex{}{\log\left(\frac{\ex{}{||f(Y+\Delta,X)-\Omega||^2}}{d\sigma^2}+1\right)}.
\]
\end{lem}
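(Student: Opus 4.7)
The plan is to reduce the conditional mutual information to a family of unconditional bounds indexed by $y$, and then invoke a Gaussian-channel-capacity argument for each $y$. By the standard decomposition of conditional mutual information,
\[
I(f(Y+\Delta,X)+\sigma N;X\mid Y) \;=\; \mathbb{E}_Y\bigl[\,I(f(y+\Delta,X)+\sigma N;X)\bigr]\bigg|_{y=Y},
\]
so it suffices to bound the inner term for each fixed $y$ and then average over $Y$.

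Fix $y$, and write $B \triangleq f(y+\Delta,X)+\sigma N$ together with $\Omega_y \triangleq \Omega(y)$, which is a deterministic vector once $Y=y$ is fixed. First I control $h(B\mid X)$ from below. Since $\sigma N$ is independent of $(\Delta,X)$, adding an independent random vector only increases differential entropy, hence $h(B\mid X=x) = h(f(y+\Delta,x)+\sigma N) \ge h(\sigma N) = \tfrac{d}{2}\log(2\pi e\,\sigma^2)$. Combined with $I(B;X)=h(B)-h(B\mid X)$, this gives $I(B;X) \le h(B) - \tfrac{d}{2}\log(2\pi e\,\sigma^2)$.

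Next I upper bound $h(B)$. By Gaussian maximum entropy, $h(B) \le \tfrac{1}{2}\log\bigl((2\pi e)^d \det\mathrm{Cov}(B)\bigr)$, and AM--GM on the eigenvalues gives $\det\mathrm{Cov}(B) \le (\mathrm{tr}\,\mathrm{Cov}(B)/d)^d$. Since the mean minimises MSE and $\Omega_y$ is a constant given $y$, $\mathrm{tr}\,\mathrm{Cov}(B) \le \mathbb{E}\|B-\Omega_y\|^2$, and by the independence of $\sigma N$ with mean zero,
\[
\mathbb{E}\|B-\Omega_y\|^2 \;=\; \mathbb{E}\|f(y+\Delta,X)-\Omega_y\|^2 \;+\; d\sigma^2.
\]
Substituting these estimates, the $2\pi e\,\sigma^2$ factors cancel and I obtain
\[
I(B;X) \;\le\; \tfrac{d}{2}\log\!\left(\frac{\mathbb{E}\|f(y+\Delta,X)-\Omega_y\|^2}{d\sigma^2}+1\right),
\]
after which averaging over $Y$ delivers the claim.

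The only technical points needing care are the independence bookkeeping (specifically, that $N$ remains independent of $(\Delta,X)$ after conditioning on $Y$, which is why $h(\sigma N)$ survives conditioning) and the use of $\Omega_y$ as the centring point in the MSE comparison, which is valid precisely because $\Omega$ depends only on $Y$ and is therefore a constant under the inner expectation. No step requires a serious technical obstacle; the whole argument is a conditional version of the logarithmic bound $I(X;X+\sigma N)\le \tfrac{d}{2}\log(1+\mathrm{Var}/(d\sigma^2))$ applied with $\Omega$ acting as a variational ``reference'' vector.
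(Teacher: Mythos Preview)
Your argument is correct and coincides with the paper's second proof: both write $I=h(B\mid Y)-h(B\mid X,Y)$, lower-bound the second term by $h(\sigma N)$ (you via ``adding an independent vector increases entropy'', the paper by further conditioning on $\Delta$; these are the same inequality), and then cap $h(B\mid Y)$ using Gaussian maximum entropy together with the second-moment identity $\mathbb{E}\|f-\Omega\|^2+d\sigma^2$. The paper also gives an alternative first proof that bypasses differential entropy: it expresses the mutual information via the golden formula with a free reference $\mathcal{N}(0,\sigma'^2\mathbf{I}_d)$, drops the nonnegative KL term, applies the HWI-type Lemma~\ref{lem:HWI} to the remaining KL, and optimises over $\sigma'$, arriving at the identical bound.
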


% The proof of this lemma is inspired by the techniques used in \cite{wang2021analyzing} and \cite{guo2005mutual}. 
Note that the outside expectation is taken over $Y$ and the inside expectation is taken over $(X,\Delta)$. Then, exploiting Lemma \ref{lem:keylemma} by letting $X=S$, $Y=\widetilde{W}$ and $\Omega=\ex{}{g(\widetilde{w}-\Delta,Z)}\triangleq\ex{}{\nabla\ell(\widetilde{w}-\Delta,Z)}$ will enable us to have the bound below.
\begin{thm}
\label{thm:tightest-bound}
The generalization error of SGD is upper bounded by
\begin{align*} 
\sqrt{\frac{R^2d}{n}\sum_{t=1}^T\ex{}{\log\left(\frac{\lambda_t^2\ex{}{||g(W_{t-1},B_t)-\ex{}{g(W_{t-1},Z)}||^2}}{d\sigma_t^2}+1\right)}}+\left|\ex{}{\gamma(W_T,S)-\gamma(W_T,S')}\right|.
\end{align*}
\end{thm}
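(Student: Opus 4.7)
The plan is to route the generalization error of $W_T$ through the auxiliary weight $\widetilde{W}_T = W_T+\Delta_T$, which differs from $W_T$ only by a Gaussian noise independent of both the training sample $S$ and any fresh sample $S'$. A direct calculation then gives $\mathbb{E}[\gamma(W_T,S')]=\mathbb{E}[L_\mu(\widetilde{W}_T)-L_\mu(W_T)]$ (since $S'$ is independent of $(W_T,\Delta_T)$, the $L_{S'}$ averages collapse to $L_\mu$) and $\mathbb{E}[\gamma(W_T,S)]=\mathbb{E}[L_S(\widetilde{W}_T)-L_S(W_T)]$, leading to the identity
\[
\mathrm{gen}(\mu,P_{W_T|S}) \;=\; \mathbb{E}[L_\mu(\widetilde{W}_T)-L_S(\widetilde{W}_T)] \;+\; \mathbb{E}[\gamma(W_T,S)-\gamma(W_T,S')].
\]
The second summand matches the $\gamma$-difference in the advertised bound, so it suffices to control $\mathrm{gen}(\mu,P_{\widetilde{W}_T|S})$. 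Since $\ell(w,Z)$ is $R$-subgaussian for every fixed $w$, Lemma~\ref{lem:xu's-bound} applied to $\widetilde{W}_T$ gives $|\mathrm{gen}(\mu,P_{\widetilde{W}_T|S})|\le\sqrt{2R^2 I(\widetilde{W}_T;S)/n}$.

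Next I split the information cost across training iterations. Data processing together with the chain rule yields $I(\widetilde{W}_T;S) \le I(\widetilde{W}_{0:T};S) = \sum_{t=1}^T I(\widetilde{W}_t;S\mid \widetilde{W}_{0:t-1})$. Each summand will be handled by Lemma~\ref{lem:keylemma}. To see the template, observe that conditional on the history $\widetilde{W}_{0:t-1}$, the increment is $\widetilde{W}_t-\widetilde{W}_{t-1} = -\lambda_t\, g(\widetilde{W}_{t-1}-\Delta_{t-1},B_t)+N_t$, where the fresh noise $N_t=\sigma_t N$ with $N\sim\mathcal{N}(0,\mathbf{I}_d)$ is independent of $(S,\widetilde{W}_{0:t-1},\Delta_{t-1})$.

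I then apply Lemma~\ref{lem:keylemma} with $Y = \widetilde{W}_{0:t-1}$, $X = S$, $\Delta$ encoding $-\Delta_{t-1}$ (so that $Y+\Delta$ recovers $W_{t-1}$ in the appropriate coordinate), $f(y,x)=-\lambda_t g(y,B_t(x))$, and $\sigma=\sigma_t$. The output $f(Y+\Delta,X)+\sigma_t N$ then equals $\widetilde{W}_t-\widetilde{W}_{t-1}$, and because $Y$ determines $\widetilde{W}_{t-1}$, the induced mutual information is precisely $I(\widetilde{W}_t;S\mid \widetilde{W}_{0:t-1})$. Choosing $\Omega = -\lambda_t\,\mathbb{E}[g(W_{t-1},Z)]$, which is (at worst) a degenerate function of $Y$, the lemma delivers
\[
I(\widetilde{W}_t;S\mid \widetilde{W}_{0:t-1}) \le \frac{d}{2}\,\mathbb{E}\!\left[\log\!\left(\frac{\lambda_t^{2}\,\mathbb{E}\!\left[\|g(W_{t-1},B_t)-\mathbb{E}[g(W_{t-1},Z)]\|^{2}\right]}{d\sigma_t^{2}}+1\right)\right].
\]
Summing over $t$, substituting into the Xu–Raginsky bound, and absorbing the factor $d/2$ inside the square root produces the advertised $\sqrt{R^{2}d/n\cdot\sum_t(\cdots)}$ term; combined with the $\gamma$-difference from the decomposition this finishes the argument.

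The main technical hurdle is verifying the hypotheses of Lemma~\ref{lem:keylemma}: one must check that $(X,Y,\Delta)=(S,\widetilde{W}_{0:t-1},-\Delta_{t-1})$ are jointly independent of $N_t$, which is true because $N_t$ is drawn after—and separately from—every earlier source of randomness. A second point of care is that $\widetilde{W}_T$ alone is not Markov in the auxiliary chain, which is why the chain rule is applied to the whole history $\widetilde{W}_{0:T}$ rather than to single-step transitions. Everything else—the decomposition identity, the choice of $\Omega$, and the final invocation of Lemma~\ref{lem:xu's-bound}—is a routine bookkeeping of expectations.
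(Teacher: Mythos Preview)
Your argument tracks the paper's proof closely: decompose through $\widetilde{W}_T$, apply Lemma~\ref{lem:xu's-bound}, then split $I(\widetilde{W}_T;S)$ along iterations and invoke Lemma~\ref{lem:keylemma} at each step. Two differences are worth noting. First, the paper conditions only on $\widetilde{W}_{t-1}$ at each step (this is their Lemma~\ref{lem:sgd-chain}) rather than on the full history $\widetilde{W}_{0:t-1}$ as you do; both chain-rule decompositions are valid and lead to legitimate bounds. Second, and more consequential for matching the statement exactly, the paper takes $\Omega=\mathbb{E}_{\Delta_{t-1},Z}\bigl[g(\widetilde{w}_{t-1}-\Delta_{t-1},Z)\bigr]$ as a \emph{genuine} function of the conditioning variable $\widetilde{w}_{t-1}$, so that the innermost expectation $\mathbb{E}[g(W_{t-1},Z)]$ in the theorem is meant to be read as conditional on $\widetilde{W}_{t-1}$, not as the global mean you use. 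Your constant choice of $\Omega$ is still admissible in Lemma~\ref{lem:keylemma} and produces a valid upper bound, but a slightly different (and generally looser) centering than the one stated; to reproduce the expression as written you should take $\Omega$ to be the conditional mean rather than the unconditional one.
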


This bound is strictly tighter than the bound in Lemma \ref{lem:neu-bound}. In fact, from Theorem \ref{thm:tightest-bound}, one can recover Lemma \ref{lem:neu-bound} with a smaller constant factor (see Appendix \ref{appendix:recover-bound} for more details). 

To completely remove $\Psi(w_{t-1})$ and to obtain a closed form of the optimal bound, we will let $\Omega=\ex{}{g(W,Z)}$, then
Lemma \ref{lem:keylemma} gives us a crisp way to have the following upper bound that is independent of the distribution of $\Delta$.

\begin{lem}
Let $G_t = -\lambda_t g(W_{t-1},B_t)$. Then,
$
    I(G_t+N_t;S|\widetilde{W}_{t-1})\leq \frac{1}{2}d\log\left(\frac{\lambda_t^2\ex{}{\mathbb{V}_t(W_{t-1})}}{d\sigma_t^2}+1\right).
$
\label{lem:sample-cmi-var}
\end{lem}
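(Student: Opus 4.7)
The plan is to apply Lemma \ref{lem:keylemma} directly with carefully chosen variables, and then invoke Jensen's inequality to move an outer expectation inside the logarithm, yielding the closed-form bound.

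First, I would set the variables in Lemma \ref{lem:keylemma} as follows: take $X = S$, $Y = \widetilde{W}_{t-1}$, $\Delta = -\Delta_{t-1} = -\sum_{\tau=1}^{t-1} N_\tau$, and write the injected noise as $N_t = \sigma_t N$ with $N \sim \mathcal{N}(0, \mathbf{I}_d)$ and $\sigma = \sigma_t$. Define $f(y + \Delta, X) \triangleq -\lambda_t\, g(y + \Delta, B_t)$, where $B_t$ is extracted from $S$ under the fixed batching trajectory. Since $Y + \Delta = \widetilde{W}_{t-1} - \Delta_{t-1} = W_{t-1}$, we obtain $f(Y + \Delta, X) + \sigma N = -\lambda_t g(W_{t-1}, B_t) + N_t = G_t + N_t$, so the conditional mutual information on the left-hand side of Lemma \ref{lem:keylemma} is exactly $I(G_t + N_t; S \mid \widetilde{W}_{t-1})$. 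For the auxiliary variable $\Omega$, I would choose the constant vector $-\lambda_t\, \mathbb{E}[\nabla_w \ell(W, Z)]$, which is trivially a function of $Y$, so that $\|f(Y+\Delta, X) - \Omega\|^2 = \lambda_t^2\, \|g(W_{t-1}, B_t) - \mathbb{E}[\nabla_w \ell(W, Z)]\|^2$.

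Before invoking the lemma, I would verify the required independence: $N_t$ is drawn fresh at step $t$ and is independent of $(S, W_0, N_1, \ldots, N_{t-1})$, hence independent of $(X, Y, \Delta)$. Lemma \ref{lem:keylemma} then yields
\[
I(G_t + N_t; S \mid \widetilde{W}_{t-1}) \le \frac{d}{2}\, \mathbb{E}_{\widetilde{W}_{t-1}}\!\left[\log\!\left(\frac{\lambda_t^2\, \mathbb{E}\bigl[\|g(W_{t-1}, B_t) - \mathbb{E}[\nabla_w \ell(W, Z)]\|^2 \,\bigm|\, \widetilde{W}_{t-1}\bigr]}{d \sigma_t^2} + 1\right)\right].
\]
The final step is to apply Jensen's inequality to the concave function $\log(\cdot)$, pulling the outer expectation over $\widetilde{W}_{t-1}$ inside the logarithm. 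By the tower property, the resulting unconditional second moment $\mathbb{E}[\|g(W_{t-1}, B_t) - \mathbb{E}[\nabla_w \ell(W, Z)]\|^2]$ coincides with $\lambda_t^{-2}$ times $\mathbb{E}[\mathbb{V}_t(W_{t-1})]$, which produces the claimed bound.

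I anticipate that the main obstacle, beyond the essentially mechanical substitutions, is the bookkeeping around independence: one has to confirm both that the preconditions of Lemma \ref{lem:keylemma} are met (in particular that $N_t$ is independent of $(X, Y, \Delta)$, which includes the accumulated Gaussian perturbation $\Delta_{t-1}$), and that after Jensen and the tower property the conditional second moment term truly collapses to $\mathbb{E}[\mathbb{V}_t(W_{t-1})]$. The latter relies on the examples in $B_t$ being i.i.d.\ $\mu$-samples and on $B_t$ being independent of $W_{t-1}$ under the fixed batching trajectory, so that the inner expectation over $B_t$ given $w$ reduces to $\mathbb{V}_t(w)$. Once these points are settled, the derivation collapses to a one-line application of Lemma \ref{lem:keylemma} followed by Jensen's inequality.
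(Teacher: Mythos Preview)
Your proposal is correct and follows essentially the same route as the paper: apply Lemma~\ref{lem:keylemma} with $X=S$, $Y=\widetilde{W}_{t-1}$, and the constant centering $\Omega=-\lambda_t\,\mathbb{E}[\nabla_w\ell(W,Z)]$, then use Jensen's inequality on the concave logarithm to pull the outer expectation inside. One small slip to clean up: the unconditional second moment after Jensen equals $\mathbb{E}[\mathbb{V}_t(W_{t-1})]$ directly (no $\lambda_t^{-2}$ factor), and you do not actually need $B_t$ to be independent of $W_{t-1}$---the tower property alone collapses $\mathbb{E}_{\widetilde{W}_{t-1}}\mathbb{E}_{S,\Delta_{t-1}}[\cdot]$ to the joint expectation that the paper denotes $\mathbb{E}[\mathbb{V}_t(W_{t-1})]$.
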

In this lemma, the mutual information $I(G_t+N_t;S|\widetilde{W}_{t-1})$ roughly indicates the degree by which the SGD's updating signal $G_t$ (smoothed with noise) depends on the training sample $S$, when $B_t$ is used for computing the gradient. When this dependency is strong (giving rise to a high value of the mutual information), the model conceivably
tends to overfit the training sample. This lemma suggests that the strength of this dependency can be upper-bounded by the expected gradient dispersion at the current weight configuration. In our experiments, we will estimate the expected gradient dispersion and validate this intuition. 

% It is remarkable that the noise $\{N_t\}$ plays an important role for the bound to hold. To see this, consider $b=n$ and ${\cal Z}$ is countable and large. Then $I(G_t; S|W_{t-1})$ is merely the conditional entropy $H(S|W_{t-1})$, which would grow with sample size $n$ at least as $\log n$. Upper-bounding it with 
% a quantity independent of $n$ would be impossible -- This justifies the  construction of the auxiliary weight process.

%\textcolor{red}{Since Lemma \ref{lem:HWI} does not require the independence condition between random variables $X$ and $Y$, we could construct more dependent coupling random variables. In contract, Lemma \ref{lem:neu-bound} is derived by using an independent copy of the noise perturbation (e.g., $N'_t$ or $\Delta'_t$) in the construction, which leads to the appearance of the local gradient sensitivity term in the bound.  We will elaborate more details in the proof of the following Lemma.}

% \begin{lem}
% Let $G_t = -\lambda_t g(W_{t-1},B_t)$. Then,
% $
%     I(G_t+N_t;S|\widetilde{W}_{t-1})\leq \frac{2\lambda_t^2}{\sigma_t^2b}\ex{}{\mathbb{V}(W_{t-1})}.
% $
% \label{lem:sample-cmi-var}
% \end{lem}

We are now in a position to state our main theorem. 
% \textcolor{red}{Using Lemma \ref{lem:sample-cmi-var}, we have the following theorem.}
\begin{thm}
%Assume the loss $\ell(w, Z)$ is $R$-subgaussian for any $w \in \mathcal{W}$. 
The generalization error of 
%the final iterate of 
SGD is upper bounded by
\begin{align}
|\mathrm{gen}(\mu,P_{W_T|S})|\leq \sqrt{\frac{R^2d}{n}\sum_{t=1}^T\log\left(\frac{\lambda_t^2\ex{}{\mathbb{V}_t(W_{t-1})}}{d\sigma_t^2}+1\right)}+\left|\ex{}{\gamma(W_T,S)-\gamma(W_T,S')}\right|.
\label{ineq:sgd-bound}
\end{align}
Further, assume $L_\mu(w_T)\leq \mathbb{E}_{\Delta}\left[L_\mu(w_T+ \Delta_T)\right]$, $\ell$ is twice differentiable, and
$\sigma^2_t$ is 
independent of $t$. Denote by $\mathrm{H}_{W_T}$ the Hessian of the loss with respect to $W_T$ and let  $\mathrm{Tr}(\cdot)$ denote trace.
%i.e., $\sigma^2_t=\sigma^2$. 
Then
\begin{align}
    % \mathrm{gen}(\mu,P_{W_T|S})\leq3\left(\frac{R}{2}\sqrt{\sum_{t=1}^T\frac{\lambda_t^2}{n} \ex{}{\mathbb{V}_t(W_{t-1})}}\right)^{\frac{2}{3}}\left(\frac{T }{2}\ex{}{\mathrm{Tr}\left({\mathrm{H}_{W_T}(Z)} \right)}\right)^\frac{1}{3}
    \mathrm{gen}(\mu,P_{W_T|S})\leq\frac{3}{2}\left(\sum_{t=1}^T\frac{R^2\lambda_t^2T}{n} \ex{}{\mathbb{V}_t(W_{t-1})}\ex{}{\mathrm{Tr}\left({\mathrm{H}_{W_T}(Z)} \right)}\right)^{\frac{1}{3}}.
    % \left(\ex{}{\mathrm{Tr}\left({\mathrm{H}_{W_T}(Z)} \right)}\right)^\frac{1}{3}
    \label{eq:optimal-bound}
\end{align}
\label{thm:re-neu-bound}
% \vspace{-10pt}
\end{thm}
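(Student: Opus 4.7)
The plan is to prove the two displayed inequalities in sequence. For inequality \eqref{ineq:sgd-bound}, I would first pass from $W_T$ to the auxiliary $\widetilde{W}_T = W_T+\Delta_T$ using the trick that $L_\mu(w)=\mathbb{E}_{S'}[L_{S'}(w)]$ when $S'\sim\mu^n$ is independent of everything else. This gives $\mathbb{E}[\gamma(W_T,S)]=\mathbb{E}[L_S(\widetilde{W}_T)-L_S(W_T)]$ and $\mathbb{E}[\gamma(W_T,S')]=\mathbb{E}[L_\mu(\widetilde{W}_T)-L_\mu(W_T)]$, so rearrangement produces the identity $\mathrm{gen}(\mu,P_{W_T|S})=\mathrm{gen}(\mu,P_{\widetilde{W}_T|S})+\mathbb{E}[\gamma(W_T,S)-\gamma(W_T,S')]$. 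Applying the triangle inequality isolates $|\mathrm{gen}(\mu,P_{\widetilde{W}_T|S})|$, which is controlled by Lemma \ref{lem:xu's-bound} as $\sqrt{2R^2 I(\widetilde{W}_T;S)/n}$.

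The next step is to bound $I(\widetilde{W}_T;S)$ by the telescoping sum $\sum_{t=1}^T I(G_t+N_t; S\mid \widetilde{W}_{0:t-1})$, using that the trajectory $\widetilde{W}_{0:T}$ is in one-to-one correspondence with $(\widetilde{W}_0,G_1+N_1,\ldots,G_T+N_T)$ and that $\widetilde{W}_0$ is independent of $S$. For each summand, I would fix the realization $\widetilde{w}_{0:t-1}$ and apply Lemma \ref{lem:keylemma} conditionally with $Y=\widetilde{W}_{t-1}$, $\Delta=-\Delta_{t-1}$ (so that $Y+\Delta=W_{t-1}$), $X=S$, and $\Omega=\mathbb{E}[\nabla_w\ell(W,Z)]$, obtaining the per-step estimate of Lemma \ref{lem:sample-cmi-var}. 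Taking the outer expectation over the past, summing, and combining the $\tfrac12$ from Lemma \ref{lem:sample-cmi-var} with the $2R^2/n$ from Lemma \ref{lem:xu's-bound} produces the coefficient $R^2 d/n$ and the logarithmic sum in \eqref{ineq:sgd-bound}.

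For the closed-form bound \eqref{eq:optimal-bound}, the flatness-monotonicity hypothesis $L_\mu(w_T)\le\mathbb{E}[L_\mu(w_T+\Delta_T)]$ implies $\mathbb{E}[\gamma(W_T,S')]\ge 0$, which lets me drop the absolute value and keep only the positive $\mathbb{E}[\gamma(W_T,S)]$ piece. Because $\ell$ is twice differentiable and $\Delta_T\sim\mathcal{N}(0,T\sigma^2\mathbf{I}_d)$ under the equal-$\sigma_t$ assumption, a second-order Taylor expansion with the linear term vanishing in expectation gives $\mathbb{E}[\gamma(W_T,S)]=\tfrac{T\sigma^2}{2}\mathbb{E}[\mathrm{Tr}(\mathrm{H}_{W_T}(Z))]$. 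Invoking $\log(1+x)\le x$ in the trajectory sum turns it into $\sqrt{\tfrac{R^2}{n\sigma^2}\sum_t \lambda_t^2\mathbb{E}[\mathbb{V}_t(W_{t-1})]}$, so the full bound reads $A/\sigma+B\sigma^2$ with $A=\sqrt{R^2\sum_t\lambda_t^2\mathbb{E}[\mathbb{V}_t(W_{t-1})]/n}$ and $B=\tfrac{T}{2}\mathbb{E}[\mathrm{Tr}(\mathrm{H}_{W_T}(Z))]$. Minimizing over $\sigma>0$ gives $\sigma^{\star}=(\sqrt{A}/(2B))^{1/3}$ and optimal value $3\cdot 2^{-2/3}(AB)^{1/3}$; collecting constants matches exactly the $\tfrac{3}{2}(\sum_t R^2\lambda_t^2 T\,\mathbb{E}[\mathbb{V}_t]\,\mathbb{E}[\mathrm{Tr}(\mathrm{H})]/n)^{1/3}$ on the right-hand side.

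The main obstacle is the chain-rule/conditional application in the second paragraph. The joint process $(W_{0:T},\widetilde{W}_{0:T})$ lacks a clean Markov structure: $G_t$ depends on $W_{t-1}$, which is not a function of $\widetilde{W}_{0:t-1}$ alone because the fresh noises $N_\tau$ cannot be separated from the gradient increments once added. Making the application of Lemma \ref{lem:keylemma} rigorous when the ``$Y$'' slot morally carries the whole prior trajectory will require either stating a mild extension of Lemma \ref{lem:keylemma} allowing $Y$ to be vector- or tuple-valued, or freezing $\widetilde{w}_{0:t-1}$ and applying the original lemma to the resulting conditional law before taking the outer expectation. Verifying that in either formulation the effective $f(Y+\Delta,X)$ reduces to $G_t=-\lambda_t g(W_{t-1},B_t)$ with $\Omega=\mathbb{E}[\nabla_w\ell(W,Z)]$ is the delicate step tying the auxiliary process back to SGD.
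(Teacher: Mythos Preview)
Your proposal is correct and mirrors the paper's argument closely. The obstacle you flag in the last paragraph is genuine if you insist on conditioning on the full past $\widetilde{W}_{0:t-1}$, but the paper sidesteps it entirely: rather than the chain rule on the whole trajectory, it iterates the two-term decomposition
\[
I(\widetilde{W}_t;S)\;\le\; I(\widetilde{W}_{t-1},G_t+N_t;S)\;=\;I(\widetilde{W}_{t-1};S)+I(G_t+N_t;S\mid\widetilde{W}_{t-1}),
\]
so that each summand is conditioned only on the single vector $\widetilde{W}_{t-1}$ (this is the content of Lemma~\ref{lem:sgd-chain}). That vector is exactly the $Y$-slot of Lemma~\ref{lem:keylemma}; since $W_{t-1}=\widetilde{W}_{t-1}-\Delta_{t-1}$ with $-\Delta_{t-1}$ in the role of $\Delta$ and $N_t$ fresh, the lemma applies verbatim with the constant choice $\Omega=\mathbb{E}_{W,Z}[\nabla_w\ell(W,Z)]$, and Jensen then pulls the outer expectation over $\widetilde{W}_{t-1}$ inside the logarithm. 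No tuple-valued extension of Lemma~\ref{lem:keylemma} is needed.

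One minor slip in your optimisation: the minimum of $A/\sigma+B\sigma^2$ is $3\cdot 2^{-2/3}(A^{2}B)^{1/3}$ (at $\sigma^{3}=A/(2B)$), not $3\cdot 2^{-2/3}(AB)^{1/3}$; with your $A^{2}=\tfrac{R^{2}}{n}\sum_t\lambda_t^{2}\mathbb{E}[\mathbb{V}_t]$ and $B=\tfrac{T}{2}\mathbb{E}[\mathrm{Tr}(\mathrm{H}_{W_T})]$ this indeed collapses to the stated $\tfrac{3}{2}(\cdot)^{1/3}$, so your final line is correct despite the intermediate typo.
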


\begin{rem}
With a single draw of $S$ and under the deterministic setting (i.e., with fixed weight initialization and batching trajectory), removing $\Psi(W_{t-1})$ will make the bound much tighter as shown in Figure \ref{fig:bound-compare}. This remarkable improvement should come at no surprise, since $\Psi(W_{t-1})$ monotonically increases with training epochs (noting that $\Psi(W_{t-1})$ has the cumulative variance $\sum_{i=1}^{t-1}\sigma^2_i \mathrm{I}_d$
) while $\mathbb{V}_t(W_{t-1})$ appears decreasing (see Figure \ref{fig:acc-var-mlp}). 
Figure \ref{fig:bound-compare} also indicates that if the noise variance $\sigma_t$ is made small, one expects  the gradient dispersion to dominate the trajectory term in Lemma \ref{lem:neu-bound}. However,  the factor $1/\sigma_t^2$ will become too large, make the bound loose. More discussions about the stochastic setting with multiple draws of $S$ are deferred to Appendix \ref{appendix:sec-more-compare}.
\end{rem}

\begin{rem}
The condition $L_\mu(w_T)\leq \mathbb{E}_{\Delta_T}\left[L_\mu(w_T+ \Delta_T)\right]$
indicates that the perturbation does
 not decrease the population risk. This is also assumed in \cite{foret2020sharpness} in the derivation of a PAC-Bayesian
 generalization bound.
\end{rem}
% We note that 

% which is also assumed to derive a PAC-Bayesian bound in \cite{foret2020sharpness},  
% indicates that the perturbation does not decrease the population risk. 
% This  is also used to derive a PAC-Bayesian bound in \cite{foret2020sharpness}. 
% in the derivation of a PAC-Bayesian generalization bound. 

In the bound of Eq.\ref{ineq:sgd-bound},  the first term captures the impact of the training trajectory (``trajectory term''), and the second term captures the impact of the final solution, which in fact 
measures the flatness for the loss landscape at the found solution (``flatness term''). 
% It is worth noting that using both the gradient dispersion (or the trace of the gradient's covariance matrix), 
% and the trace of Hessian to characterize generalization is not new. For example, the Takeuchi information criterion used by \cite{thomas2020interplay} consists of Hessian and gradient dispersion (referred to as ``curvature'' and ``noise'' by the authors), which is shown relevant to estimating the generalization gap.
% and
% has a similar form with Eq.\ref{eq:optimal-bound}, and they show that flatness and gradient dispersion, which they called curvature and noise, respectively, are highly relevant to estimate the generalization gap.
% The similar terms also appear in 
% the generalization bound in \cite{wei2020implicit.
The learning rate in SGD has explicitly appeared in the trajectory term of Eq.\ref{eq:optimal-bound}.
From the way it appears in the bound, one may be tempted to assert that a small learning rate will improve generalization. This would then contradict some previous observations \citep{jastrzkebski2017three,wu2018sgd,HeLT19}, in which large learning rate will benefit generalization. In addition, the bound also suggests that the batch size has some direct impact on the trajectory term through gradient dispersion.  We investigate this by performing experiments with varying learning rates and  batch sizes (see Figure \ref{fig:lr-bs} in Appendix \ref{sec:lr-bs}).
As seen in the experiments, increasing the learning rate  impacts the trajectory term and the flatness term in opposite ways, i.e. making one increase and the other decrease. A similar (but reverted) behaviour is also observed with batch sizes.
%A key observation is that the learning rate  and the batch size both impact the trajectory term and the flatness term, in opposite ways. 
This makes the generalization bound in Eq.\ref{eq:optimal-bound}, %i.e., the sum of the two terms, 
have a rather complex relationship with the settings of learning rate and batch size.

Eq.\ref{eq:optimal-bound} follows from Eq.\ref{ineq:sgd-bound} by minimizing the bound over $\sigma$. The bound in Eq.\ref{eq:optimal-bound} is thus independent of a choice of $\sigma_t$'s and can be computed easily and efficiently.

To summarize, we remark that these bounds suggest that in order for the model to generalize well, both the trajectory term and the flatness term need to be small --- the former involves the learning rate with the gradient dispersion along the training trajectory, whereas the latter depends on the flatness of the empirical risk surface at the found solution.

\section{Application in Linear Networks and Two-Layer ReLU Networks.} \label{sec:NNbound}

We now apply Theorem \ref{thm:re-neu-bound} 
to two neural network models 
in a regression setting. Let $Z=(X,Y)$ with $X\in\mathbb{R}^{d_0}$ and $Y\in\mathbb{R}$. Assume $||X||=1$. We use SGD to train a model $f(W,\cdot):\mathbb{R}^{d_0}\rightarrow\mathbb{R}$ and define the loss as $\ell(W,Z)=1/2(Y-f(W,X))^2$. The bounds below are both in the form of the product of the flatness term and the trajectory term.

% \paragraph{Application in Linear Networks.} $f(W,X)=W^TX$ and $d=d_0$. 
\begin{thm}[Linear Networks] Let the model be a linear network, i.e. $f(W,X)=W^TX$, then
% \vspace{-5pt}
$
    \mathrm{gen}(\mu,P_{W_T|S})\leq
3\left(\sum_{t=1}^T\frac{R^2\lambda_t^2T}{4n} \ex{}{\ell(W_{t-1},Z)}\right)^{\frac{1}{3}}.
$
% \vspace{-5pt}
\label{thm:optimal-bound-linear}
\end{thm}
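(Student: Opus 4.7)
The plan is to specialize the closed-form bound in Eq.\ref{eq:optimal-bound} of Theorem \ref{thm:re-neu-bound} to the linear regression setting. Only three ingredients are needed: an upper bound on the expected gradient dispersion $\mathbb{E}[\mathbb{V}_t(W_{t-1})]$ in terms of the training loss, a computation of $\mathbb{E}[\mathrm{Tr}(H_{W_T}(Z))]$, and verification that the hypotheses of Theorem \ref{thm:re-neu-bound} hold.

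First I would differentiate $\ell(W,Z) = \frac{1}{2}(Y - W^T X)^2$ to obtain $\nabla_W \ell(W,Z) = -(Y - W^T X)X$. Using $\|X\|=1$, this gives the pointwise identity $\|\nabla_W \ell(W,Z)\|_2^2 = (Y - W^T X)^2 = 2\,\ell(W,Z)$. Since $\mathbb{V}_t(w) \leq \mathbb{E}\|g(w, B_t)\|_2^2$ (subtracting the population mean can only shrink the squared norm) and Jensen's inequality on the batch average yields $\|g(w,B_t)\|_2^2 \leq \frac{1}{b}\sum_{z\in B_t}\|\nabla_w\ell(w,z)\|_2^2$, it follows that $\mathbb{V}_t(w) \leq \mathbb{E}_Z\|\nabla_w\ell(w,Z)\|_2^2 = 2\,\mathbb{E}_Z[\ell(w,Z)]$, and hence $\mathbb{E}[\mathbb{V}_t(W_{t-1})] \leq 2\,\mathbb{E}[\ell(W_{t-1},Z)]$.

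Next I would compute the Hessian. For the squared loss $H_{W_T}(Z) = X X^T$, so $\mathrm{Tr}(H_{W_T}(Z)) = \|X\|^2 = 1$ and $\mathbb{E}[\mathrm{Tr}(H_{W_T}(Z))] = 1$. To apply Theorem \ref{thm:re-neu-bound} I would also verify its remaining hypotheses: $\ell$ is quadratic in $W$ and hence twice differentiable, while the inequality $L_\mu(w_T) \leq \mathbb{E}_\Delta[L_\mu(w_T + \Delta_T)]$ follows from Jensen's inequality, because $L_\mu$ is convex in $W$ (an expectation of convex quadratics) and $\Delta_T$ is zero-mean.

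Finally, plugging these ingredients into Eq.\ref{eq:optimal-bound} yields the bound $\frac{3}{2}\left(\sum_{t=1}^T \frac{2 R^2 \lambda_t^2 T}{n}\, \mathbb{E}[\ell(W_{t-1},Z)]\right)^{1/3}$. Pulling the factor $2$ outside the cube root and using $\frac{3}{2}\cdot 2^{1/3} = 3\cdot 4^{-1/3}$ recovers the stated form. The derivation is essentially bookkeeping; the only place that requires care is tracking the constants so that the factor $1/(4n)$ inside the cube root and the leading coefficient $3$ come out precisely.
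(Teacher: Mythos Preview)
Your approach matches the paper's in spirit---bound the gradient-norm quantity by $2\ell$ via $\|X\|=1$, compute $\mathrm{Tr}(H_{W_T}(Z))=1$, and track the constants---but there is one slip. You assert $\mathbb{V}_t(w) \leq \mathbb{E}\|g(w,B_t)\|_2^2$ for \emph{fixed} $w$, justified by ``subtracting the population mean can only shrink the squared norm.'' However, by the definition in Eq.~\ref{eq:def-gradient-dispersion} the centering vector in $\mathbb{V}_t(w)$ is $\mathbb{E}_{W,Z}[\nabla_w\ell(W,Z)]$ with $W$ itself random; this is \emph{not} the mean of $g(w,B_t)$ for a fixed $w$, so the variance inequality you invoke does not apply pointwise. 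The inequality $\mathbb{E}\|X-c\|^2 \le \mathbb{E}\|X\|^2$ holds only when $c$ is the mean of $X$ (or more generally lies in the right half-space), which you have not established here.

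The paper sidesteps this issue entirely. Rather than starting from Eq.~\ref{eq:optimal-bound} and then trying to pass from $\mathbb{V}_t$ to $\|g\|^2$, it first states a companion bound (Lemma~\ref{lem:sgd-bound-norm}) obtained by rerunning the proof of Theorem~\ref{thm:re-neu-bound} with the choice $\Omega=0$ in Lemma~\ref{lem:keylemma}; this yields the same closed form but with $\mathbb{E}\|g(W_{t-1},B_t)\|_2^2$ in place of $\mathbb{E}[\mathbb{V}_t(W_{t-1})]$ from the outset. From there your Jensen step $\|g(w,B_t)\|^2 \le \frac{1}{b}\sum_{z\in B_t}\|\nabla\ell(w,z)\|^2$, the identity $\|\nabla\ell(w,z)\|^2 = 2\ell(w,z)$, and the Hessian-trace computation go through exactly as you wrote, and your constant bookkeeping $\tfrac{3}{2}\cdot 2^{1/3}=3\cdot 4^{-1/3}$ is correct. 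Your explicit verification of the hypothesis $L_\mu(w_T)\leq\mathbb{E}_\Delta[L_\mu(w_T+\Delta_T)]$ via convexity of the quadratic loss is a nice addition that the paper leaves implicit.
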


% This bound also implies that a solution with small norm will generalize better as $\ell(W,Z)\leq Y^2+f^2(W,X)=Y^2+||W||^2$.

%\paragraph{Application in Two-Layer ReLu Neural Networks.} We follow the same setting in \citep{arora2019fine}. Let $f(W,X)=\frac{1}{\sqrt{m}}\sum_{r=1}^m A_r \mathrm{ReLU}(W_r^TX)$ where $m$ is the wide of the neural network, $A_r$ is the Rademacher varaible (i.e., $A_r\sim\mathrm{unif}(\{+1,-1\})$) and $\mathrm{ReLU}(W_r^TX)=\max\{0,W_r^TX\}$ is the ReLU activation. We only train the first layer parameter $W=[W_1,W_2,\dots,W_m]\in\mathbb{R}^{d_0\times m}$ and fix the second layer parameter $A=[A_1,A_2,\dots,A_m]\in\mathbb{R}^m$ during training. 

\begin{thm}[Two-Layer ReLU Networks]
Following \cite{arora2019fine}, consider $f(W,X)=\frac{1}{\sqrt{m}}\sum_{r=1}^m A_r \mathrm{ReLU}(W_r^TX)$ where $m$ is the width of the neural network,  $A_r\sim\mathrm{unif}(\{+1,-1\})$ and $\mathrm{ReLU}(\cdot)$ is the ReLU activation. We only train the first layer parameters $W=[W_1,W_2,\dots,W_m]\in\mathbb{R}^{d_0\times m}$ and fix the second layer parameters $A=[A_1,A_2,\dots,A_m]\in\mathbb{R}^m$ during training.  Then,
\[
    \mathrm{gen}(\mu,P_{W_T|S})\leq
3\left(\sum_{r=1}^m\ex{}{\frac{\mathbb{I}_{r,i,T}}{m}}\sum_{t=1}^T\frac{R^2\lambda_t^2T}{4n} \ex{}{\sum_{r=1}^m\frac{\mathbb{I}_{r,i,t}}{m}\ell(W_{t-1},Z)}\right)^{\frac{1}{3}},
\]
where $\mathbb{I}_{r,i,t}=\mathbb{I}\{W_{t-1,r}^TX_i\geq0\}$ and $\mathbb{I}$ is the indicator function.
\label{thm:optimal-bound-relu}
\end{thm}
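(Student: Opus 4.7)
The plan is to apply the ready-made bound in Eq.\ref{eq:optimal-bound} of Theorem \ref{thm:re-neu-bound} and to compute its two ingredients --- the expected gradient dispersion $\ex{}{\mathbb{V}_t(W_{t-1})}$ and the expected Hessian trace $\ex{}{\mathrm{Tr}(\mathrm{H}_{W_T}(Z))}$ --- in closed form by exploiting the explicit architecture of the two-layer ReLU network. Both quantities will turn out to be governed by the ReLU activation pattern $\mathbb{I}_{r,i,t}$, so the final bound will follow by direct substitution and by pulling the $t$-independent Hessian factor outside the sum over $t$.

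Direct differentiation gives $\nabla_{W_r}\ell(W,Z) = -(Y-f(W,X))\frac{A_r}{\sqrt{m}}\mathbb{I}\{W_r^TX\geq 0\}X$, so $A_r^2=1$ and $\|X\|=1$ collapse the full squared gradient to $\|\nabla_W\ell(W,Z)\|^2 = 2\ell(W,Z)\sum_{r=1}^m \mathbb{I}\{W_r^TX\geq 0\}/m$. To pass from $\mathbb{V}_t$ to this quantity I would expand the squared norm in its definition, take the outer expectation over $W_{t-1}$ so that the cross term cancels (using $\ex{W_{t-1}}{\nabla L_\mu(W_{t-1})} = \ex{W,Z}{\nabla_W\ell(W,Z)}$), and then upper-bound the expected squared batch-gradient by the per-example second moment via the standard i.i.d.-batch Jensen step $\ex{B_t|W}{\|g(W,B_t)\|^2}\leq \ex{Z|W}{\|\nabla_W\ell(W,Z)\|^2}$; this yields $\ex{}{\mathbb{V}_t(W_{t-1})}\leq 2\,\ex{}{\sum_{r=1}^m(\mathbb{I}_{r,i,t}/m)\,\ell(W_{t-1},Z)}$. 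For the Hessian, away from the hyperplane arrangement $\{W_r^TX=0\}$ the ReLU is locally linear and derivatives of the indicator vanish, so only the term factoring through $\nabla f$ survives in the second differentiation; the diagonal entries then sum to $\sum_{r=1}^m(\mathbb{I}_r/m)\|X\|^2 = \sum_{r=1}^m \mathbb{I}_{r,i,T}/m$. Substituting both pieces into Eq.\ref{eq:optimal-bound} and combining the constants via $\frac{3}{2}\cdot 2^{1/3} = 3\cdot 4^{-1/3}$ produces exactly the announced bound.

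The main obstacle is the non-smoothness of ReLU on $\{W_r^TX=0\}$, which formally violates the twice-differentiability hypothesis of Theorem \ref{thm:re-neu-bound}. I would handle this either by smoothing (replace ReLU by a softplus-type approximation and pass to the limit in the resulting bound) or by noting that for continuously distributed $X$ the bad set has measure zero and can be excluded along the training trajectory, the standard workaround adopted in the ReLU NTK literature of \cite{arora2019fine}. A smaller bookkeeping issue is the one-step index mismatch between $\mathbb{I}_{r,i,T}$ (defined at $W_{T-1}$) and the Hessian, which should be evaluated at $W_T$; these coincide generically since a single SGD step almost surely does not flip any activation sign for a sufficiently small learning rate.
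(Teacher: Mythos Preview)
Your plan is close to the paper's, and your computations of $\|\nabla_W\ell\|^2$, of the Hessian trace, and of the constant $\tfrac{3}{2}\cdot 2^{1/3}=3\cdot 4^{-1/3}$ are exactly what the paper does. The one substantive difference is the starting point: the paper does \emph{not} invoke Eq.~\ref{eq:optimal-bound} of Theorem~\ref{thm:re-neu-bound}. Instead it first proves an auxiliary Lemma~\ref{lem:sgd-bound-norm}, obtained by re-running the Theorem~\ref{thm:re-neu-bound} argument with the choice $\Omega=0$ (rather than $\Omega=\ex{W,Z}{\nabla\ell(W,Z)}$) in Lemma~\ref{lem:keylemma}. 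That lemma reads
\[
\mathrm{gen}(\mu,P_{W_T|S})\le \tfrac{3}{2}\Bigl(\textstyle\sum_{t=1}^T\tfrac{R^2\lambda_t^2 T}{n}\,\ex{}{\|g(W_{t-1},B_t)\|^2}\,\ex{}{\mathrm{Tr}(\mathrm H_{W_T}(Z))}\Bigr)^{1/3},
\]
so the raw second moment $\ex{}{\|g\|^2}$ appears from the outset rather than the dispersion $\ex{}{\mathbb V_t(W_{t-1})}$. From there the paper bounds $\ex{}{\|g\|^2}\le\ex{}{\|\nabla\ell(W_{t-1},Z)\|^2}$ by convexity (your ``Jensen step'') and plugs in your gradient and Hessian identities.

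This difference matters because your passage from $\ex{}{\mathbb V_t(W_{t-1})}$ to $\ex{}{\|g(W_{t-1},B_t)\|^2}$ has a gap. The identity $\ex{W_{t-1}}{\nabla L_\mu(W_{t-1})}=\ex{W,Z}{\nabla\ell(W,Z)}=:c$ is correct, but the cross term you actually need to control is $-2\bigl\langle\ex{W_{t-1},S}{g(W_{t-1},B_t)},\,c\bigr\rangle$, and $\ex{W_{t-1},S}{g(W_{t-1},B_t)}$ equals $c$ only when $B_t$ is independent of $W_{t-1}$ --- true in the first epoch with disjoint batches, but not afterwards, since $W_{t-1}$ has already been updated using the samples in $B_t$. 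Without $\ex{}{g}=c$ you cannot conclude $\ex{}{\|g-c\|^2}\le\ex{}{\|g\|^2}$. The paper's $\Omega=0$ route sidesteps this entirely; equivalently, since $\Omega$ is a free parameter in Lemma~\ref{lem:keylemma}, you may simply take $\Omega=0$ there and skip the dispersion altogether.

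Your remarks on the measure-zero ReLU non-smoothness and on the one-step index mismatch in $\mathbb I_{r,i,T}$ are reasonable side observations; the paper does not comment on either.
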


% \begin{rem}
Compared with Theorem \ref{thm:optimal-bound-linear}, we notice that in the two-layer ReLU network, the ReLU activation state along the training trajectory plays a key role in the bound of Theorem \ref{thm:optimal-bound-relu}. Specifically, the weights of the deactivated neurons do not contribute to the bound of Theorem \ref{thm:optimal-bound-relu},  making the bound not explicitly depend on the model dimension $d$. This result also suggests that sparsely activated ReLU networks are expected to generalize better.  
Despite various empirical evidence pointing to this behaviour (see, e.g., \cite{glorot2011deep}), to the best of our knowledge, this theorem provides the first theoretical justification in this regard.
% \end{rem}

The comparison between these bounds and the dynamics of generalization gap is deferred to Appendix \ref{appendix:sec-NN-bound}. In the remainder of this paper, we will
study more complex network architectures beyond the linear and two-layer ReLU networks and implications of our bounds in those settings.

%To compare our bound with other norm-based generalization bound, it requires a high probability version of the bound in Theorem \ref{thm:optimal-bound-relu}. We leave this as a future work.

% experimentally show that our generalization bound can still characterize the generalization behaviour of more generally used neural networks on realistic datasets, and we will also discuss more about the two key quantities in our bound, namely, the gradient dispersion term and the flatness term.

% \begin{wrapfigure}{r}{0.31\textwidth} %this figure will be at the right
% % \vspace{-10pt}
%     \centering
%     \includegraphics[width=0.3\textwidth]{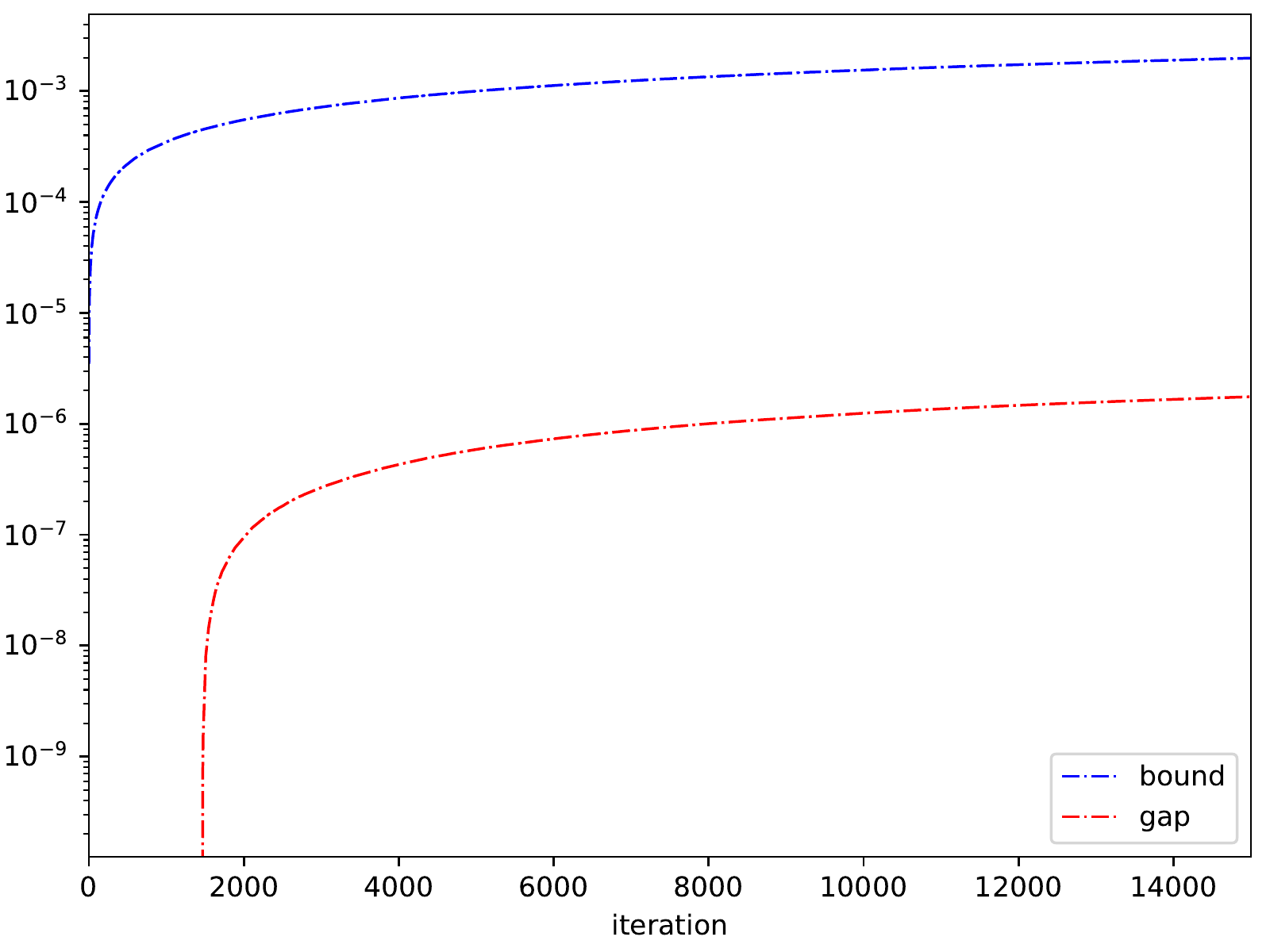}
%     \caption{Generalization gap of a two-layer ReLU network v.s. Theorem \ref{thm:optimal-bound-relu}.}
% \label{fig:NNbound}
% % \vspace{-20pt}
% \end{wrapfigure}

\section{Experimental Study}
\label{sec:experiment}
\paragraph{Bound Verification} We first verify our bound of Eq.\ref{eq:optimal-bound} in Theorem \ref{thm:re-neu-bound} by training an MLP (with one hidden layer) and an AlexNet \citep{krizhevsky2012imagenet} on MNIST and CIFAR10 \citep{krizhevsky2009learning}, respectively. %Corollary \ref{cor:taylor-expand} enables us to estimate the bound for a trained model.  
To simplify estimation, we fix the weight initialization and set $\sigma_t$ and $\lambda_t$ to be constants $\sigma$ and $\lambda$, respectively. 
% \textcolor{red}{In this setting, we will use the optimal bound in Corollary \ref{cor:taylor-expand} to characterize the generalization gap.} 
 %$\sigma_t =\sigma$ and $\lambda_t =\lambda$. 
To compute $\sum_{t=1}^T\ex{}{\mathbb{V}_t(W_{t-1})}$, 
%in Corollary \ref{cor:taylor-expand}, 
we compute the gradient dispersion as its empirical estimate from a batch,
%of every batch as $\mathbb{V}(W_{t-1})$ for each iteration, where 
utilizing a PyTorch \citep{paszke2019pytorch} library BackPack \citep{DangelKH20}.
%is used. 
To compute $\mathrm{Tr}\left( \ex{}{\mathrm{H}_{W_T}(Z)}\right)$, we 
%  randomly sample 10\% of the training data and 
 use the PyHessian library \citep{yao2020pyhessian} 
 to compute the Hessian. 
%  To choose the variance proxy $R$, we first collected all the per-instance losses $\ell(W_{t-1},Z_i)$ that were observed during training, then we let $R=(max_{i,t}\ell(W_{t-1},Z_i)-min_{i,t}\ell(W_{t-1},Z_i))/2$ in our experiments.
% Since every choice of $\sigma$ gives a valid generalization bound in 
%  Corollary \ref{cor:taylor-expand}, we need to find the optimal $\sigma$, which gives the tightest bound. This can be done by simply utilizing the fact
%Given that the variance $\sigma$ is a constant,  having these terms on hand gives us a crisp way to compute the optimal bound without explicitly knowing $\sigma$ due to the fact that
%  $A/\sigma+\sigma^2B\geq 3(A/2)^{2/3}B^{1/3}$ for any positive $A$ and $B$, where the equality is achieved by the optimal $\sigma$. We set the sub-gaussian parameter $R=0.1$. 

We perform experiments with varying network width and varying levels of label noise. Specifically, noise level $\epsilon$ refers to the setting where we replace the labels of  $\epsilon$ fraction of the training and testing instances with random labels. The estimated bound is compared against the true generalization gap, namely, the difference between the training loss and testing loss, and is shown in Figure \ref{fig:bound-gap}.

% Subsequently, we apply an Gaussian model to fit all the per sample loss values that appear in the training phase, and take the variance of this fitted Gaussian as the variance proxy of sub-Gaussian, namely, $R$.

\begin{figure}[ht!]
% \vspace{-5pt}
\begin{subfigure}{0.25\columnwidth}%
\centering%
\captionsetup{font=small}%
\scalebox{0.35}{
\begin{tikzpicture}
\begin{axis}
[legend style={nodes={scale=0.9, transform shape}},
ymode=log,
legend pos=north east,
% axis y line*=left,
% ymin=0.071, ymax=0.35,
xlabel=width,
table/col sep=comma,
grid style=dashed,
% xmin=0.0025,
%     xmax=0.042,
% width=5cm,
%     height=7cm,
% ylabel=Gap Values
]
\addplot[mark=*, mark options={scale=1}, line width=2pt, color=blue, error bars/.cd, y dir=both, y explicit] table [y=gap1, x=wth, y error=g1_error]{data/Bound.csv};
\addlegendentry{gap}
\addplot[mark=*, mark options={scale=1}, line width=2pt, color=red, error bars/.cd, y dir=both, y explicit] table [y=b1, x=wth, y error=b1_error]{data/Bound.csv};
\addlegendentry{bound}
\label{plot-gap}
\end{axis}
% \begin{axis}
% [legend style={nodes={scale=1.25, transform shape}},
% % ymode=log,
% legend cell align={left},
% axis y line*=right,
%  ymin=0, ymax=3.3,
% table/col sep=comma,
% % grid style=dashed,
% % xmajorgrids=true,
% % ymajorgrids=true,
% ylabel style={rotate=-180},
%   ylabel=Bound Values
% ]
% \addlegendimage{/pgfplots/refstyle=plot-gap}\addlegendentry{gap}
% \addplot[mark=*, mark options={scale=1}, line width=2pt, color=red, error bars/.cd, y dir=both, y explicit] table [y=b1, x=wth, y error=b1_error]{data/Bound.csv};
% \addlegendentry{bound}
% \end{axis}
\end{tikzpicture}
}
\caption{MLP on MNIST}%
\label{fig:gap-width}
\end{subfigure}%
\begin{subfigure}{0.25\columnwidth}%
\centering%
\captionsetup{font=small}%
\scalebox{0.35}{
\begin{tikzpicture}
\begin{axis}
[legend style={nodes={scale=0.9, transform shape}},
ymode=log,
legend pos=north east,
% axis y line*=left,
% ymin=1.8, ymax=7.5,
xlabel=filters,
table/col sep=comma,
grid style=dashed,
% ylabel=Gap Values
]
\addplot[mark=*, mark options={scale=1}, line width=2pt, color=blue, error bars/.cd, y dir=both, y explicit] table [y=gap3, x=flt, y error=g3_error]{data/Bound.csv};
\addlegendentry{gap}
\addplot[mark=*, mark options={scale=1}, line width=2pt, color=red, error bars/.cd, y dir=both, y explicit] table [y=b3, x=flt, y error=b3_error]{data/Bound.csv};
\addlegendentry{bound}
\label{plot-gap3}
\end{axis}
% \begin{axis}
% [legend style={nodes={scale=1.25, transform shape}},
% % ymode=log,
% legend cell align={left},
% axis y line*=right,
%  ymin=0, ymax=10,
% table/col sep=comma,
% % grid style=dashed,
% % xmajorgrids=true,
% % ymajorgrids=true,
% ylabel style={rotate=-180},
%   ylabel=Bound Values
% ]
% \addlegendimage{/pgfplots/refstyle=plot-gap3}\addlegendentry{gap}
% \addplot[mark=*, mark options={scale=1}, line width=2pt, color=red, error bars/.cd, y dir=both, y explicit] table [y=b3, x=flt, y error=b3_error]{data/Bound.csv};
% \addlegendentry{bound}
% \end{axis}
\end{tikzpicture}
}
\caption{AlexNet on CIFAR10}%
\label{fig:gap-width2}
\end{subfigure}%
\begin{subfigure}{0.25\columnwidth}%
\centering%
\captionsetup{font=small}%
\scalebox{0.35}{
\begin{tikzpicture}
\begin{axis}
[legend style={nodes={scale=0.9, transform shape}},
ymode=log,
legend pos=south east,
% axis y line*=left,
% ymin=0.075, ymax=100,
legend style={nodes={scale=1.25, transform shape}},
xlabel=label noise level,
table/col sep=comma,
grid style=dashed,
% ylabel=Gap Values
]
\addplot[mark=*, mark options={scale=1}, line width=2pt, color=blue, error bars/.cd, y dir=both, y explicit] table [y=gap2, x=noise2, y error=g2_error]{data/Bound.csv};
\addlegendentry{gap}
\addplot[mark=*, mark options={scale=1}, line width=2pt, color=red, error bars/.cd, y dir=both, y explicit] table [y=b2, x=noise2, y error=b2_error]{data/Bound.csv};
\addlegendentry{bound}
\label{plot-gap2}
\end{axis}
% \begin{axis}
% [legend style={nodes={scale=1.25, transform shape}},
% % ymode=log,
% legend cell align={left},
% legend pos=north west,
%     % xmajorgrids=true,
%     % ymajorgrids=true,
% ymode=log,
% axis y line*=right,
% %  ymin=-0, ymax=160,
% table/col sep=comma,
% grid style=dashed,
% ylabel style={rotate=-180},
%   ylabel=Bound Values
% ]
% \addlegendimage{/pgfplots/refstyle=plot-gap2}\addlegendentry{gap}
% \addplot[mark=*, mark options={scale=1}, line width=2pt, color=red, error bars/.cd, y dir=both, y explicit] table [y=b2, x=noise2, y error=b2_error]{data/Bound.csv};
% \addlegendentry{bound}
% \end{axis}
\end{tikzpicture}
}
\caption{MLP on MNIST}%
\label{fig:gap-noise}
\end{subfigure}%
\begin{subfigure}{0.25\columnwidth}%
\centering%
\captionsetup{font=small}%
\scalebox{0.35}{
\begin{tikzpicture}
\begin{axis}
[legend style={nodes={scale=0.9, transform shape}},
ymode=log,
legend pos=south east,
% axis y line*=left,
% ymin=2, ymax=13,
xlabel=label noise level,
table/col sep=comma,
grid style=dashed,
% ylabel=Gap Values
]
\addplot[mark=*, mark options={scale=1}, line width=2pt, color=blue, error bars/.cd, y dir=both, y explicit] table [y=gap4, x=noise4, y error=g4_error]{data/Bound.csv};
\addlegendentry{gap}
\addplot[mark=*, mark options={scale=1}, line width=2pt, color=red, error bars/.cd, y dir=both, y explicit] table [y=b4, x=noise4, y error=b4_error]{data/Bound.csv};
\addlegendentry{bound}
\label{plot-gap4}
\end{axis}
% \begin{axis}
% [legend style={nodes={scale=1.25, transform shape}},
% % ymode=log,
% legend cell align={left},
% legend pos=north west,
%     % xmajorgrids=true,
%     % ymajorgrids=true,
% % ymode=log,
% axis y line*=right,
%  ymin=0,
%  ymax=15,
% table/col sep=comma,
% grid style=dashed,
% ylabel style={rotate=-180},
%   ylabel=Bound Values
% ]
% \addlegendimage{/pgfplots/refstyle=plot-gap4}\addlegendentry{gap}
% \addplot[mark=*, mark options={scale=1}, line width=2pt, color=red, error bars/.cd, y dir=both, y explicit] table [y=b4, x=noise4, y error=b4_error]{data/Bound.csv};
% \addlegendentry{bound}
% \end{axis}
\end{tikzpicture}
}
\caption{AlexNet on CIFAR10}%
\label{fig:gap-noise2}
\end{subfigure}%
\caption{Estimated bound and empirical generalization gap (``gap'') as functions of network width ((a) and (b)) and label noise level ((c) and (d)). Note that Y-axis is in log scale.
%(a)(b) show the bound decaying with the network width. (c)(d) show the bound increasing with the noise level.
}
\label{fig:bound-gap}
% \vspace{-5pt}
\end{figure}
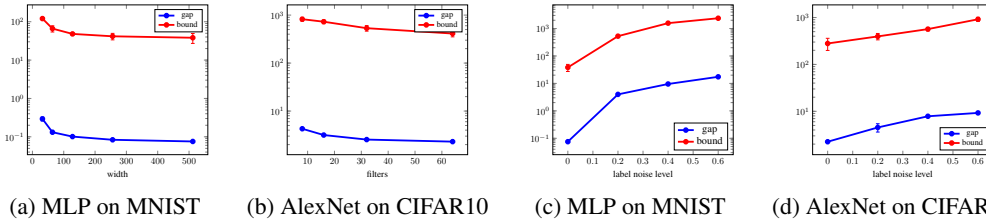

In Figure \ref{fig:bound-gap}, we see that in all cases the estimated bound follows closely the trend of the true generalization gap.  The fact that the bound curve consistently tracks the gap curve under various label noise levels indicates that our bound very well captures the changes of the data distribution.  Note that in Figure \ref{fig:bound-gap} (a) and (b), our bound decays with the increase of the model size, showing a trend as opposite to the bounds obtained in classical learning theory. But such a trend clearly better explains the generalization behaviour of modern neural networks.

\paragraph{{Epoch-wise Double Descent of Gradient Dispersion}}
\label{sec:double descent}
%Subsequently, we show the relationship between generalization and the dynamic of gradient variance during training. 
%To exclude the influence of other hyperparameters of training, 
We experimentally investigate the impact of gradient dispersion on the training of the neural networks by
fixing the learning rate, batch size and weight initialization for the each model (MLP for MNIST, AlexNet for CIFAR10). For each model and various 
label noise levels, we plot in Figure \ref{fig:acc-var} the 
evolution of the (empirical) gradient dispersion $\widehat{\mathbb{V}}_t(w_{t-1})$, training accuracy and testing accuracy across training epochs. 
%We repeat the corrupted label experiments and record the change of the (empirical) gradient dispersion $\widehat{\mathbb{V}}(w_t)$, and the results are shown in Figure \ref{fig:acc-var}.

An intriguing epoch-wise ``double descent'' phenomenon is observed, particularly when the labels are noisy.
%Intriguingly, when the label noise is injected to the training data, gradient dispersion is endowed with an epoch-wise double descent property. 
According to the double descent curve, the training may be split into three phases (e.g., Figure \ref{fig:acc-var} (h)). 
%Figure \ref{fig:acc-var} can be view as conveying a message that there are roughly three phase for the dynamic of gradient dispersion when model is trained with corrupted label. 
In the first phase, the gradient dispersion rapidly descends and maintains a very low level. 
%At the early phase of training (e.g., the first several epochs), gradient dispersion will exponentially decrease and 
In this phase, both training and test accuracies increase while maintaining a very small generalization gap. This suggests that the network in this phase is extracting useful patterns and generalizes well. In the second phase, the gradient dispersion starts increasing until it reaches a peak value. In this phase, the training and testing accuracies gradually diverge, marking the model entering an overfitting or ``memorization'' regime -- when the data contains the noisy labels, the network mostly tries to memorize the labels in the training set. In the third phase, the gradient dispersion descends again, reaching a low value. In this phase, the model continuously overfits the training data, until the training and testing curves  reach their respective maximum and minimum. It appears that the timing of the three phases depends on the dataset and the label noise level. For simpler data (e.g. MNIST) and cleaner datasets 
% (e.g.  CIFAR10 with low label noise)
, the first phase may be shorter. This is arguably because in these datasets, extracting useful patterns is relatively easier. Nonetheless, the valley in the double-descent curve appears to mark a ``great divide'' between generalization and memorization. 

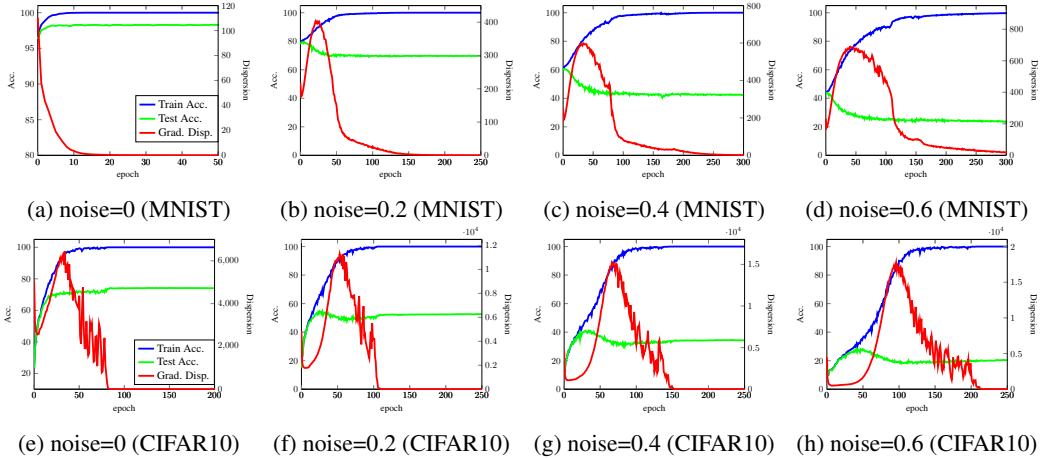
\begin{figure}[ht!]
% \vspace{-10pt}
\begin{subfigure}[b]{0.25\columnwidth}%
\centering%
\captionsetup{font=small}%
\scalebox{0.35}{
\begin{tikzpicture}
\begin{axis}
[%legend style={nodes={scale=2, transform shape}},
% ymode=log,
axis y line*=left,
ymin=80, ymax=101,
xlabel=epoch,
% grid style=dashed,
% ymajorgrids=true,
xmin=0,
    xmax=50,
    % ylabel near ticks,
% width=5cm,
%     height=7cm,
ylabel=Acc. 
]
\addplot[line width=2pt, color=blue] table [y=TrAcc0, x=epoch0]{data/MLP.txt};
\label{plot-tracc-mlp}
\addplot[line width=2pt, color=green] table [y=TsAcc0, x=epoch0]{data/MLP.txt};
\label{plot-tsacc-mlp}
\end{axis}
\begin{axis}
[%legend pos=south east,
legend style={nodes={scale=1.25, transform shape},at={(1,0.4)}},
% ymode=log,
legend cell align={left},
axis y line*=right,
 ymin=0, ymax=120,
% grid style=dashed,
% xmajorgrids=true,
  ylabel=Dispersion,
  ylabel style={rotate=-180},
  xmin=0,
    xmax=50,
]
\addlegendimage{/pgfplots/refstyle=plot-tracc-mlp}\addlegendentry{Train Acc.}
\addlegendimage{/pgfplots/refstyle=plot-tsacc-mlp}\addlegendentry{Test Acc.}
\addplot[line width=2pt, color=red] table [y=GVar0, x=epoch0]{data/MLP.txt};
\addlegendentry{Grad. Disp.}
\end{axis}
\end{tikzpicture}
}
\caption{noise=0 (MNIST)}%
\label{fig:acc-var-mlp}
\end{subfigure}%
\begin{subfigure}[b]{0.25\columnwidth}%
\centering%
\captionsetup{font=small}%
\scalebox{0.35}{
\begin{tikzpicture}
\begin{axis}
[
axis y line*=left,
ymin=0, ymax=105,
xlabel=epoch,
grid style=dashed,
% ymajorgrids=true,
xmin=0,
    xmax=250,
ylabel=Acc., 
]
\addplot[line width=2pt, color=blue] table [y=TrAcc1, x=epoch0]{data/MLP.txt};
\addplot[line width=2pt, color=green] table [y=TsAcc1, x=epoch0]{data/MLP.txt};
\end{axis}
\begin{axis}
[legend pos=south east,
axis y line*=right,
 ymin=0, ymax=450,
grid style=dashed,
% xmajorgrids=true,
  ylabel=Dispersion,
  ylabel style={rotate=-180},
  xmin=0,
    xmax=250,
]
\addplot[line width=2pt, color=red] table [y=GVar1, x=epoch0]{data/MLP.txt};
\end{axis}
\end{tikzpicture}
}
\caption{noise=0.2 (MNIST)}%
\label{fig:acc-var-mlp02}
\end{subfigure}%
\begin{subfigure}[b]{0.25\columnwidth}%
\centering%
\captionsetup{font=small}%
\scalebox{0.35}{
\begin{tikzpicture}
\begin{axis}
[%legend style={nodes={scale=0.9, transform shape}},
% ymode=log,
axis y line*=left,
ymin=0, ymax=105,
xlabel=epoch,
grid style=dashed,
% ymajorgrids=true,
xmin=0,
    xmax=300,
ylabel=Acc., 
]
\addplot[line width=2pt, color=blue] table [y=TrAcc2, x=epoch0]{data/MLP.txt};
\addplot[line width=2pt, color=green] table [y=TsAcc2, x=epoch0]{data/MLP.txt};
\end{axis}
\begin{axis}
[
axis y line*=right,
 ymin=0, ymax=800,
grid style=dashed,
% xmajorgrids=true,
  ylabel=Dispersion,
  ylabel style={rotate=-180},
  xmin=0,
    xmax=300,
]
\addplot[line width=2pt, color=red] table [y=GVar2, x=epoch0]{data/MLP.txt};
\end{axis}
\end{tikzpicture}
}
\caption{noise=0.4 (MNIST)}%
\label{fig:acc-var-mlp04}
\end{subfigure}%
\begin{subfigure}[b]{0.25\columnwidth}%
\centering%
\captionsetup{font=small}%
\scalebox{0.35}{
\begin{tikzpicture}
\begin{axis}
[%legend style={nodes={scale=0.9, transform shape}},
% ymode=log,
axis y line*=left,
ymin=0, ymax=105,
xlabel=epoch,
grid style=dashed,
% ymajorgrids=true,
xmin=0,
    xmax=300,
ylabel=Acc. 
]
\addplot[line width=2pt, color=blue] table [y=TrAcc3, x=epoch0]{data/MLP.txt};
\addplot[line width=2pt, color=green] table [y=TsAcc3, x=epoch0]{data/MLP.txt};
\end{axis}
\begin{axis}
[legend pos=south east,
axis y line*=right,
 ymin=0, ymax=950,
grid style=dashed,
% xmajorgrids=true,
  ylabel=Dispersion,
  ylabel style={rotate=-180},
  xmin=0,
    xmax=300,
]
\addplot[line width=2pt, color=red] table [y=GVar3, x=epoch0]{data/MLP.txt};
\end{axis}
\end{tikzpicture}
}
\caption{noise=0.6 (MNIST)}%
\label{fig:acc-var-mlp06}
\end{subfigure}%

\begin{subfigure}[b]{0.25\columnwidth}%
\centering%
\captionsetup{font=small}%
\scalebox{0.35}{
\begin{tikzpicture}
\begin{axis}
[
axis y line*=left,
ymin=10, ymax=105,
xlabel=epoch,
grid style=dashed,
% ymajorgrids=true,
xmin=0,
    xmax=200,
ylabel=Acc. 
]
\addplot[line width=2pt, color=blue] table [y=TrAcc0, x=epoch0]{data/Alex.txt};
\label{plot-tracc-alex}
\addplot[line width=2pt, color=green] table [y=TsAcc0, x=epoch0]{data/Alex.txt};
\label{plot-tsacc-alex}
\end{axis}
\begin{axis}
[%legend pos=south east,
legend style={nodes={scale=1.2, transform shape},at={(1,0.33)}},
% ymode=log,
legend cell align={left},
axis y line*=right,
 ymin=0, ymax=7000,
grid style=dashed,
% xmajorgrids=true,
  ylabel=Dispersion,
  ylabel style={rotate=-180},
  xmin=0,
    xmax=200,
]
\addlegendimage{/pgfplots/refstyle=plot-tracc-alex}\addlegendentry{Train Acc.}
\addlegendimage{/pgfplots/refstyle=plot-tsacc-alex}\addlegendentry{Test Acc.}
\addplot[line width=2pt, color=red] table [y=GVar0, x=epoch0]{data/Alex.txt};
\addlegendentry{Grad. Disp.}
\end{axis}
\end{tikzpicture}
}
\caption{noise=0 (CIFAR10)}%
\label{fig:acc-var-alex}
\end{subfigure}%
\begin{subfigure}[b]{0.25\columnwidth}%
\centering%
\captionsetup{font=small}%
\scalebox{0.35}{
\begin{tikzpicture}
\begin{axis}
[
axis y line*=left,
ymin=0, ymax=105,
xlabel=epoch,
grid style=dashed,
% ymajorgrids=true,
xmin=0,
    xmax=250,
ylabel=Acc. 
]
\addplot[line width=2pt, color=blue] table [y=TrAcc1, x=epoch0]{data/Alex.txt};
\addplot[line width=2pt, color=green] table [y=TsAcc1, x=epoch0]{data/Alex.txt};
\end{axis}
\begin{axis}
[legend pos=south east,
axis y line*=right,
 ymin=0, ymax=12500,
grid style=dashed,
% xmajorgrids=true,
  ylabel=Dispersion,
  ylabel style={rotate=-180},
  xmin=0,
    xmax=250,
]
\addplot[line width=2pt, color=red] table [y=GVar1, x=epoch0]{data/Alex.txt};
\end{axis}
\end{tikzpicture}
}
\caption{noise=0.2 (CIFAR10)}%
\label{fig:acc-var-alex02}
\end{subfigure}%
\begin{subfigure}[b]{0.25\columnwidth}%
\centering%
\captionsetup{font=small}%
\scalebox{0.35}{
\begin{tikzpicture}
\begin{axis}
[%legend style={nodes={scale=0.9, transform shape}},
% ymode=log,
axis y line*=left,
ymin=0, ymax=105,
xlabel=epoch,
grid style=dashed,
% ymajorgrids=true,
xmin=0,
    xmax=250,
ylabel=Acc. 
]
\addplot[line width=2pt, color=blue] table [y=TrAcc2, x=epoch0]{data/Alex.txt};
\addplot[line width=2pt, color=green] table [y=TsAcc2, x=epoch0]{data/Alex.txt};
\end{axis}
\begin{axis}
[legend pos=south east,
axis y line*=right,
 ymin=0, ymax=18000,
grid style=dashed,
% xmajorgrids=true,
  ylabel=Dispersion,
  ylabel style={rotate=-180},
  xmin=0,
    xmax=250,
]
\addplot[line width=2pt, color=red] table [y=GVar2, x=epoch0]{data/Alex.txt};
\end{axis}
\end{tikzpicture}
}
\caption{noise=0.4 (CIFAR10)}%
\label{fig:acc-var-alex04}
\end{subfigure}%
\begin{subfigure}[b]{0.25\columnwidth}%
\centering%
\captionsetup{font=small}%
\scalebox{0.35}{
\begin{tikzpicture}
\begin{axis}
[%legend style={nodes={scale=0.9, transform shape}},
% ymode=log,
axis y line*=left,
ymin=0, ymax=105,
xlabel=epoch,
grid style=dashed,
% ymajorgrids=true,
xmin=0,
    xmax=250,
ylabel=Acc. 
]
\addplot[line width=2pt, color=blue] table [y=TrAcc3, x=epoch0]{data/Alex.txt};
\addplot[line width=2pt, color=green] table [y=TsAcc3, x=epoch0]{data/Alex.txt};
\end{axis}
\begin{axis}
[legend pos=south east,
axis y line*=right,
 ymin=0, ymax=21000,
grid style=dashed,
% xmajorgrids=true,
  ylabel=Dispersion,
  ylabel style={rotate=-180},
  xmin=0,
    xmax=250,
]
\addplot[line width=2pt, color=red] table [y=GVar3, x=epoch0]{data/Alex.txt};
\end{axis}
\end{tikzpicture}
}
\caption{noise=0.6 (CIFAR10)}%
\label{fig:acc-var-alex06}
\end{subfigure}%
\caption{Dynamics of gradient dispersion, in relation to training/testing accuracies.
}
\label{fig:acc-var}
% \vspace{-15pt}
\end{figure}

\paragraph{Dynamic Gradient Clipping}
% Although many norm-based generalization bounds suggest to restrict the model capacity by restricting the norm of model parameters, we argued that a more straightforward way to control the model capacity is to control its gradient norm. Accordingly, we propose a dynamic gradient clipping method to effectively control the model capacity. 
Inspired by our generalization bounds and above observations, one way to reduce the generalization error is to control the trajectory term of the bounds by reducing the gradient dispersion in each training step. 
%It's also desirable to regularize the model via restricting the trajectory term or equivalently, the gradient dispersion. 
Here we investigate a simple scheme that dynamically clips the gradient norm so as to reduce the gradient dispersion.
%Inspired by the double descent curve of gradient dispersion shown in Figure \ref{fig:acc-var}, we use a simple method, dynamic gradient clipping, to restrict the gradient norm while the neural network is fitting noisy data. 
Specifically, 
whenever the current gradient norm is larger than the gradient norm $K$ steps earlier, or  
$||g(W_{t},B_t)||_2>||g(W_{t-K},B_{t-K})||_2$ (i.e., the model is expected to have entered the ``memorization'' regime), we reduce the norm of the current gradient $g(W_{t},B_t)$ to $\alpha$ fraction of $||g(W_{t-K},B_{t-K})||_2$, for some prescribed value $\alpha<1$. %we let $g(W{t-1},B_{t-1}) = \alpha||g(W{t-1},B_{t-1})||_2\cdot g(W_{t},B_t)/||g(W_{t},B_t)||_2$.  ``dynamic gradient clipping''

\begin{wrapfigure}{r}{0.48\textwidth}
% \begin{figure}[ht!]
% \vspace{-10pt}
\begin{subfigure}[b]{0.23\columnwidth}%
\centering%
\captionsetup{font=small}%
\scalebox{0.35}{
\begin{tikzpicture}
\begin{axis}
[legend style={nodes={scale=1.25, transform shape}},
% ymode=log,
legend cell align={left},
legend pos=south east,
ymin=0, ymax=105,
xlabel=epoch,
% xmajorgrids=true,
% ymajorgrids=true,
grid style=dashed,
xmin=0,
    xmax=200,
ylabel=Acc.
]
% \addplot[line width=2pt, color={rgb:red,1;green,2;blue,5}] table [y=TrAcc0, x=epoch0]{MLP.txt};
% \addlegendentry{corruption=$0$}
\addplot[line width=2pt, color=black!30!green] table [y=TrAcc1, x=epoch0]{data/MLP_old.txt};
\addlegendentry{Train Acc. w/o clipping}
\addplot[line width=2pt, color={orange}] table [y=TsAcc1, x=epoch0]{data/MLP_old.txt};
\addlegendentry{Test Acc. w/o clipping}
\addplot[line width=2pt, color=red] table [y=TrAcc, x=epoch]{data/MLP02dc.txt};
\addlegendentry{Train Acc. w/ clipping}
\addplot[line width=2pt, color={blue}] table [y=TsAcc, x=epoch]{data/MLP02dc.txt};
\addlegendentry{Test Acc. w/ clipping}
\end{axis}
\end{tikzpicture}
}
\caption{noise=0.2 (MNIST)}%
\end{subfigure}
\begin{subfigure}[b]{0.23\columnwidth}%
\centering%
\captionsetup{font=small}%
\scalebox{0.35}{
\begin{tikzpicture}
\begin{axis}
[%legend style={nodes={scale=1.25, transform shape}},
% ymode=log,
legend cell align={left},
legend pos=south east,
ymin=0, ymax=105,
xlabel=epoch,
% xmajorgrids=true,
% ymajorgrids=true,
grid style=dashed,
xmin=0,
    xmax=200,
ylabel=Acc.
]
% \addplot[line width=2pt, color={rgb:red,1;green,2;blue,5}] table [y=TrAcc0, x=epoch0]{MLP.txt};
% \addlegendentry{corruption=$0$}
\addplot[line width=2pt, color=black!30!green] table [y=TrAcc2, x=epoch0]{data/MLP_old.txt};
% \addlegendentry{Train Acc. no clipping}
\addplot[line width=2pt, color={orange}] table [y=TsAcc2, x=epoch0]{data/MLP_old.txt};
% \addlegendentry{Test Acc. no clipping}
\addplot[line width=2pt, color=red] table [y=TrAcc, x=epoch]{data/MLP04dc.txt};
% \addlegendentry{Train Acc. clipping}
\addplot[line width=2pt, color=blue] table [y=TsAcc, x=epoch]{data/MLP04dc.txt};
% \addlegendentry{Test Acc. clipping}
\end{axis}
\end{tikzpicture}
}
\caption{noise=0.4 (MNIST)}%
\end{subfigure}
\caption{Dynamic Gradient Clipping.}
\label{fig:DyGC}
% \vspace{-5pt}
% \end{figure}
\end{wrapfigure}
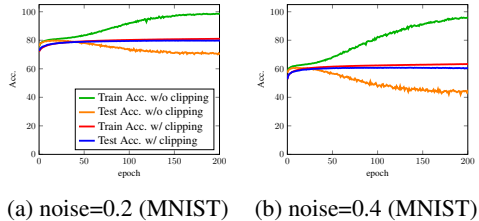

The effectiveness of this scheme is best demonstrated when the labels contain noise. As shown in
Figure \ref{fig:DyGC} (and Figure \ref{fig:DyGC-alex} in Appendix \ref{sec:DYG-appendix}), dynamic gradient clipping significantly closes the gap between the training accuracy and the testing accuracy.  The models trained with this scheme maintain a near-optimal testing accuracy (e.g., about 80\% when the label noise level of MNIST is 0.2), without suffering from the severe memorization effect as seen in models trained without this scheme. Further understanding of the double-descent phenomenon of the gradient dispersion may enable more delicate design of such a dynamic clipping scheme  and potentially lead to  novel and powerful regularization techniques.

% dynamic gradient clipping is able to effectively alleviate overfitting. For example, MLP has a $99\%$ testing accuracy on the true MNIST data so it is expected to reach $80\%$ testing accuracy on the $20\%$ label corrupted dataset. Dynamic gradient clipping can keep the testing accuracy of MLP around $80\%$ by preventing the neural network memorizing random label data which will hurt generalization.

% The current design of hyper-parameter $\alpha$ highly depends on the datasets and model structures so more tuning efforts are needed for more complex datasets and models.

% We do not aim to provide the state-of-the-art by applying dynamic gradient clipping. The current hyper-parameter $\alpha$ highly depends on the datasets and model structures, and the better design could be developed in this direction.
\section{A Practical Implication: Gaussian Model Perturbation}
\label{sec:regularization}
%\paragraph{Gaussian Model Perturbation (GMP)}

The appearance of the flatness term in our generalization bounds suggests that for an empirical risk minimizer $w^*$ to generalize well, it is necessary that the empirical risk surface at $w^*$ is flat, or insensitive to a small perturbation of $w^*$. This naturally motivates a training scheme using the following regularized loss:
\[
\min_{w} L_s(w) + \rho\ex{\Delta\sim\mathcal{N}(0,\sigma^2\mathbf{I}_d)}{L_s(w+\Delta)-L_s(w)},
\]
where $\rho$ is a hyper-parameter. 
%It's impossible to marginalize the noise perturbation in practice, so instead we randomly sample $k$ Gaussian noises and take the average of $k$ perturbed losses for every iteration. 
Replacing the expectation above with its stochastic approximation using $k$ realizations of $\Delta$ gives rise to the following optimization problem.
%This leads to the following optimization objective,
\[
\min_{w} \frac{1}{b}\sum_{z\in B}\left(
(1-\rho)
\ell(w,z)+\rho\frac{1}{k}\sum_{i=1}^k\left(\ell(w+\delta_i,z)\right)\right).
\]
% \vspace{-10pt}

We refer to the SGD training scheme using this loss as
{\em Gaussian model perturbation} or 
GMP. 
% One may argue that we can directly let $\lambda$ be $1$ so we only need to minimize the loss of perturbed model. However, empirical evidence shows that letting $\lambda$ be smaller than $1$ would facilitate the generalization. 
Notably, GMP requires $k+1$ forward passes for every parameter update. 
%It would suffer from low efficiency if $k$ is large. 
Empirical evidence shows that a small $k$, for example, $k=3$, already gives competitive performance. Implementing the $k+1$ forward passes on parallel processors further reduces the computation load.

% Furthermore, it's worth mentioning that these $k+1$ forward passes can be executed independently so multiple CPU and multiple GPU would be immensely helpful.
\begin{wraptable}{l}{0.6\textwidth}
% \vspace{-10pt}
% \small
\begin{tabular}{l|c|c|c}
\toprule
Method & SVHN & CIFAR-10 & CIFAR-100\\
\midrule
ERM & 96.86$\pm$0.060 & 93.68$\pm$0.193 & 72.16$\pm$0.297 \\
Dropout & 97.04$\pm$0.049 & 93.78$\pm$0.147 & 72.28$\pm$0.337  \\
L. S. & 96.93$\pm$0.070 & 93.71$\pm$0.158 & 72.51$\pm$0.179 \\
Flooding & 96.85$\pm$0.085 & 93.74$\pm$0.145 & 72.07$\pm$0.271 \\
MixUp & 96.91$\pm$0.057  & \textbf{94.52$\pm$0.112} & 73.19$\pm$0.254 \\
Adv. Tr. & 97.06$\pm$0.091 & 93.51$\pm$0.130 & 70.88$\pm$0.145 \\
AMP & \textbf{97.27$\pm$0.015}  & {94.35$\pm$0.147} & {74.40$\pm$0.168} \\
\textbf{GMP$^3$} & \underline{97.18$\pm$0.057} & 94.33$\pm$0.094 & \underline{74.45$\pm$0.256} \\
% \textbf{Ours$^5$} & 97.17$\pm$0.072  \\
\textbf{GMP$^{10}$} & {97.09$\pm$0.068} & \underline{94.45$\pm$0.158}  & \textbf{75.09$\pm$0.285}  \\
\bottomrule
\end{tabular}
% \vspace{-5pt}
\caption{
Top-1 classification accuracy acc.($\%$) of VGG16. 
We run experiments 10 times and report the mean and the standard deviation of the testing accuracy. 
Superscript denotes the number of sampled Gaussian noises during training.
}
\label{tab:cvresults}
% \end{table*}
% \vspace{-10pt}
\end{wraptable}

We compare GMP with several major regularization 
schemes in the current art, including
Dropout \citep{srivastava2014dropout}, label smoothing (L.S.) \citep{szegedy2016rethinking}, Flooding \citep{ishida2020we}, MixUp \citep{zhang2018mixup}, adversarial training (Adv. Tr.) \citep{goodfellow2015explaining}, and AMP \citep{zheng2020regularizing}. 
We will also include ERM as the baseline, where no regularization method applied.
The compared schemes are evaluated on three popular benchmark image classification datasets SVHN \citep{netzer2011reading}, CIFAR-10 and CIFAR-100 \citep{krizhevsky2009learning}.
%to evaluate our proposed regularization technique. 
Two representative deep architectures VGG16 \citep{simonyan2015very} and PreActResNet18 \citep{he2016identity} are taken as the underlying model. We train the models for $200$ epochs by SGD.
%with a step-wise learning rate decay. To be precise, 
The learning rate is initialized as $0.1$ and divided by $10$ after $100$ and $150$ epochs. For all compared models, the batch size is set to $50$ and weight decay is set to $10^{-4}$. 
%Hyper-parameters are tuned by using $10\%$ of the training set as the validation set. 
For GMP, we choose $\rho=0.5$ and set the standard deviation of the Gaussian noise $\Delta$ to $0.03$. The value of $k$ is chosen as $3$ and $10$ respectively (referred to as ${\rm GMP}^3$ and 
${\rm GMP}^{10}$).
%we train models with the number of sampled Gaussian noises $k=3$, where $\rho$ is $0.5$ and the standard deviation of Gaussian distribution is $0.03$. As comparison, we also conduct the experiments by using $10$ noises.

The performances of all compared schemes are given in Table \ref{tab:cvresults} (and the results of PreActResNet18 are shown in Appendix \ref{sec:gmp-appendix}).
For the compared regularization schemes except GMP, we directly report their performances as given in \cite{zheng2020regularizing}. 
% Performances of vanilla ERM without regularization are also included as a reference.
%For Dropout, $10\%$ of neurons are selected to be deactivated in each layer. For label smoothing, the coefficient is $0.2$. For flooding, the level is set to $0.02$. For MixUp, we lineally combine random pairs of training data where the coefficient is drawn from $\mathrm{Beta}(1, 1)$. For adversarial training, the perturbation size is $1$ for each pixel and we take one step to generate adversarial examples. 
Table \ref{tab:cvresults} demonstrates the effectiveness of GMP. Overall GMP performs comparably to the current art of regularization schemes, although appearing slighly inferior to 
the most recent record given by 
AMP on SVHN and MixUp on CIFAR-10, respectively 
\citep{zheng2020regularizing}.  Noting that the key ingredient of AMP,  ``max-pooling'' in the parameter space, greatly resembles regularization term in GMP, which may be seen as ``average-pooling'' in the same space.

%Table \ref{tab:cvresults} shows that our method is competitive to the state-of-the-art regularization techniques. Specifically, our method achieves the best performance on SVHN for both models and on CIFAR-100 where VGG16 is employed. Particularly, the testing accuracy is improved by nealy $2\%$ on the CIFAR-100 datasets with VGG16. For other tasks, GMP is second only to MixUp. In addition, we find that increasing the number of sampled noises does not guarantee the improvement of testing accuracy and may even degrade the performance on some datasets (e.g., SVHN). This hints that we can use small number of noises to reduce the running time without losing performance. Moreover, we observe that GMP with $k=3$ usually takes around $1.76\times$ that of ERM training time, which is affordable. 
% \input{Extension}
\section{Conclusion and Outlook}

This paper presents new information theoretic generalization bounds for models (e.g., linear networks and two-layer ReLU neural networks) trained with SGD. Our bounds naturally point to new and effective regularization schemes. At the same time, our bounds and experimental study reveal interesting phenomena in the SGD training of neural networks. 
There are yet promising directions for further improving these bounds, for example, via exploiting conditional mutual information bounds (\cite{haghifam2020sharpened}), strong data processing inequalities (\cite{wang2021analyzing}), and the relationship between between the trajectory term and the flatness term. 
%First, it is possible to extend the theoretical results in this work by using the techniques introduced by other information-theoretic bounds such as \cite{haghifam2020sharpened} and \cite{wang2021analyzing}. 
% Additionally, it's worth mentioning that one may obtain better results by further developing the relationship between the trajectory term and the flatness term from an optimization perspective. Some recent works based on backward error analysis \citep{barrett2020implicit,smith2020origin} may provide analytic tools to investigate this.

\subsubsection*{Acknowledgments}
This work is supported partly by an NSERC Discovery Grant and Artificial Intelligence for Design Challenge program of National Research Council Canada. The authors would like to sincerely thank Gergely Neu and Borja Rodr\'iguez-G\'alvez for pointing out errors in the previous version of this paper. The authors would also like to thank Maia Fraser for stimulating discussions.

\bibliographystyle{iclr2022_conference}
\bibliography{ref}
\newpage
% \newpage
% \begin{appendices}
\appendix
\addcontentsline{toc}{section}{Appendix} % Add the appendix text to the document TOC
\part{Appendix} % Start the appendix part
\parttoc
\setcounter{equation}{0}
% \setcounter{cor}{0}

%\section{Additional results and Omitted Proofs for Section \ref{sec:theory}}
%\label{proof}
\section{HWI Inequality: Proof of Lemma \ref{lem:HWI}}
\label{prf:HWI}
The proof given here is a simple extension of the proof of \citet[Lemma~3.4.2]{raginsky2018concentration}, which is a special instance of the (weak) HWI inequality.
\begin{proof}

\begin{align}
    \mathrm{D_{KL}}(P_{X+\sqrt{t}N}||P_{Y+\sqrt{t'}N})\leq  &\mathrm{D_{KL}}(P_{X,Y,X+\sqrt{t}N}||P_{X,Y,Y+\sqrt{t'}N}) \label{ineq:kl-chainrule}\\
    = &  \ex{X,Y}{\mathrm{D_{KL}}(P_{X+\sqrt{t}N|X,Y}||P_{Y+\sqrt{t'}N|X,Y})}\notag\\
    = &  \ex{X,Y}{\mathrm{D_{KL}}(\mathcal{N}(X,t\mathrm{I}_d)||\mathcal{N}(Y,t'\mathrm{I}_d))}\label{eq:independence}\\
    =&\frac{1}{2t'}\ex{X,Y}{||X-Y||^2}+\frac{d}{2}(\log\frac{t'}{t}+\frac{t}{t'}-1),
    \label{eq:multi-gaussian}
\end{align}
where Eq.\ref{ineq:kl-chainrule} is by the chain rule of the KL divergence, Eq.\ref{eq:independence} holds since $N$ is independent of $(X,Y)$ and Eq.\ref{eq:multi-gaussian} is by the equality
\[
\mathrm{D_{KL}}(\mathcal{N}(\mu_1,\Sigma_1)||\mathcal{N}(\mu_2,\Sigma_2))=\frac{1}{2}\left[\log{\frac{|\Sigma_2|}{|\Sigma_1|}}-d+\mathrm{Tr}\left(\Sigma_2^{-1}\Sigma_1\right)+(\mu_2-\mu_1)^T\Sigma_2^{-1}(\mu_2-\mu_1)\right].
\]
Eq.\ref{eq:multi-gaussian} holds for any joint distribution of $(X,Y)$. If we let $t'=t$, by the definition of Wasserstein distance, we have
\begin{align*}
    \mathrm{D_{KL}}(P_{X+\sqrt{t}N}||P_{Y+\sqrt{t}N})\leq\frac{1}{2t}\mathbb{W}_2^2(P_X,P_Y) .
\end{align*}
This recovers the vector version of  \citet[Lemma~3.4.2]{raginsky2018concentration}.
\end{proof}

\section{A Key Lemma: Lemma \ref{lem:keylemma}}

\begin{proof}
We give two proofs of this lemma, the first proof is inspired by the techniques used in \cite{wang2021analyzing} and \cite{guo2005mutual}, and the second proof is similar to \cite{pensia2018generalization}. 
\begin{enumerate}
    \item 
Let $Z=Y+\Delta$. By the properties of mutual information,
\[
I(f(Z,X)+\sigma N;X|Y=y)=I({f(Z,X)-\Omega}+ {\sigma}N;X|Y=y).
\]
% where the expectation is taken over $(Z,X)\sim P_{Z,X}$.

Let $g(Z,X) \triangleq {f(Z,X)-\Omega}$. Let $P_{N'}=\mathcal{N}(0,\sigma'^2\mathbf{I}_d)$ for any $\sigma'>0$ and $N'$ is independent of other random variables for a fixed $y$. Then, for $Y=y$,
\begin{align}
    I(g(Z,X)+\sigma N;X|Y=y)=&\ex{X}{\mathrm{D_{KL}}\left(P_{g(Z,x)+{\sigma}N|X=x,Y=y}||P_{g(Z,X)+{\sigma}N|Y=y}\right)}\notag\\
    =& \ex{X}{\mathrm{D_{KL}}\left(P_{g(Z,x)+{\sigma}N|X=x,Y=y}||P_{N'|Y=y}\right)-\mathrm{D_{KL}}\left(P_{g(Z,X)+{\sigma}N|Y=y}||P_{N'|Y=y}\right)}\label{eq:golden}\\
    \leq& \inf_{\sigma'>0}\ex{X}{\mathrm{D_{KL}}\left(P_{g(Z,x)+{\sigma}N|X=x,Y=y}||P_{{\sigma'}N|Y=y}\right)}\label{ineq:klnonneg}\\
    % \leq&\inf_{\sigma'>0}\frac{1}{2} \ex{Y,X}{\mathbb{W}_2^2\left(P_{g(Z,x)|X=x,Y=y}||P_\Omega\right)}\label{ineq:hwi-origin}\\
    =& \inf_{\sigma'>0}\frac{1}{2\sigma'^2}\ex{X}{\ex{\Delta}{||g(y+\Delta,x)||^2|X=x,Y=y}}\label{ineq:hwi-origin}+\frac{d}{2}(\log{\frac{\sigma'^2}{\sigma^2}}+\frac{\sigma^2}{\sigma'^2}-1)\\
    =&\frac{d}{2}\log{\left(\frac{\ex{X,\Delta}{||g(y+\Delta,X)||^2|Y=y}}{d\sigma^2}+1\right)},\notag
\end{align}
where Eq. \ref{eq:golden} is the golden formula (see Theorem 3.3 in \cite{polyanskiy2019lecture} ), Eq. \ref{ineq:klnonneg} is by the non-negativity of relative entropy, Eq. \ref{ineq:hwi-origin} is due to Lemma \ref{lem:HWI} and the last equality holds when the optimal \[\sigma'^2 = \frac{\ex{X,\Delta}{||g(y+\Delta,X)||^2|Y=y}}{d}+\sigma^2\]
is achieved.

Thus,
\begin{align*}
    I(f(Z,X)+N;X|Y)&= \ex{Y}{I(f(Z,X)+N;X|Y=y)}\\
    &\leq\ex{Y}{\frac{d}{2}\log{\left(\frac{\ex{X,\Delta}{||g(y+\Delta,X)||^2|Y=y}}{d\sigma^2}+1\right)}}\\
    &=\ex{Y}{\frac{d}{2}\log{\left(\frac{\ex{X,\Delta}{||f(Z,X)-\Omega||^2|Y=y}}{d\sigma^2}+1\right)}},
\end{align*}
which completes the proof.

\item Alternatively, let $h(\cdot)$ denote the differential entropy, then
\begin{align}
    I(f(Z,X)+\sigma N;X|Y=y)&=I(g(Z,X)+\sigma N;X|Y=y)\notag\\
    &=h(g(Z,X)+\sigma N|Y=y)-h(g(Z,X)+\sigma N|X,Y=y)\notag\\
    &\leq h(g(Z,X)+\sigma N|Y=y)-h(g(Z,X)+\sigma N|\Delta,X,Y=y)\label{ineq:conditional-entropy}\\
    &= h(g(Z,X)+\sigma N|Y=y)-h(\sigma N)\notag\\
    &= h(g(Z,X)+\sigma N|Y=y)-\frac{d}{2}\log{2\pi e \sigma^2}\notag,
\end{align}
where Eq. \ref{ineq:conditional-entropy} is due to the fact that conditioning reduces entropy.

Notice that 
\[
\ex{\Delta,X,N}{||g(y+\Delta,X)+\sigma N||^2|Y=y}=\ex{\Delta,X}{||g(y+\Delta,X)||^2|Y=y}+d\sigma^2.
\]

Since the Gaussian distribution maximizes the entropy over all distributions with the same variance \citep{cover2012elements}, we have
\[
h(g(Z,X)+\sigma N|Y=y)\leq \frac{d}{2}\log{2\pi e \frac{\ex{\Delta,X}{||g(y+\Delta,X)||^2|Y=y}+d\sigma^2}{d}}.
\]

Therefore,
\begin{align}
    I(f(Z,X)+\sigma N;X|Y=y)&\leq \frac{d}{2}\log{2\pi e \frac{\ex{\Delta,X}{||g(y+\Delta,X)||^2|Y=y}+d\sigma^2}{d}}-\frac{d}{2}\log{2\pi e \sigma^2}\notag\\
    &=\frac{d}{2}\log{\left( \frac{\ex{\Delta,X}{||g(y+\Delta,X)||^2|Y=y}}{d\sigma^2}+1\right)}.\notag
\end{align}

The remaining part is straightforward and is the same with the previous proof.

\end{enumerate}
\end{proof}

\section {Mutual Information Bounds for SGD: Proof of Theorem \ref{thm:tightest-bound} and Theorem \ref{thm:re-neu-bound}}

In this section, we provide a proof for Theorem \ref{thm:tightest-bound} and Theorem \ref{thm:re-neu-bound}, while elaborating on why the local gradient sensitivity term in Lemma \ref{lem:neu-bound} can be removed. The key ingredient that makes this possible is a construction given in the proof of 
Lemma \ref{lem:sample-cmi-var}.

\subsection{Lemma \ref{lem:sgd-chain}}
We first give the following useful lemma.
\begin{lem}
\label{lem:sgd-chain}
The mutual information $I(\widetilde{W}_T;S)\leq\sum_{t=1}^T I\left(-\lambda_tg(W_{t-1},B_t)+N_t;S|\widetilde{W}_{t-1}\right)$.
\end{lem}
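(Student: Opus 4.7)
The plan is to bound $I(\widetilde{W}_T; S)$ by a telescoping argument, exploiting the fact that the auxiliary update equation
\[
\widetilde{W}_t = \widetilde{W}_{t-1} + U_t, \qquad U_t := -\lambda_t g(W_{t-1}, B_t) + N_t,
\]
expresses $\widetilde{W}_t$ as a deterministic function of the pair $(\widetilde{W}_{t-1}, U_t)$. The steps are standard manipulations of mutual information and should be routine.

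First I would invoke the data-processing inequality: because $\widetilde{W}_t$ is a (measurable, indeed additive) function of $(\widetilde{W}_{t-1}, U_t)$, we have
\[
I(\widetilde{W}_t; S) \;\leq\; I(\widetilde{W}_{t-1}, U_t; S).
\]
Next I would apply the chain rule for mutual information to split the right-hand side as
\[
I(\widetilde{W}_{t-1}, U_t; S) \;=\; I(\widetilde{W}_{t-1}; S) + I(U_t; S \mid \widetilde{W}_{t-1}),
\]
producing the one-step recursion $I(\widetilde{W}_t; S) \le I(\widetilde{W}_{t-1}; S) + I(U_t; S \mid \widetilde{W}_{t-1})$.

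Iterating from $t=T$ down to $t=1$ telescopes to
\[
I(\widetilde{W}_T; S) \;\leq\; I(\widetilde{W}_0; S) + \sum_{t=1}^T I(U_t; S \mid \widetilde{W}_{t-1}).
\]
Finally, since $\widetilde{W}_0 = W_0$ is drawn from $\mathcal{N}(0, \sigma_0^2 \mathbf{I}_d)$ independently of the training sample $S$, the base term vanishes, $I(\widetilde{W}_0; S) = 0$. Substituting the definition of $U_t$ yields the claimed inequality.

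I do not anticipate a genuine obstacle; the only point requiring a moment of care is making sure data processing is applied correctly given that $U_t$ itself depends on $W_{t-1}$ and $B_t$ (and hence on $S$). This causes no trouble because the data-processing step only uses that $\widetilde{W}_t$ is a function of $(\widetilde{W}_{t-1}, U_t)$ — the internal dependencies of $U_t$ on $S$ are exactly what we want to keep, and they show up (via the conditioning on $\widetilde{W}_{t-1}$) inside the per-step term $I(U_t; S \mid \widetilde{W}_{t-1})$ that the lemma asserts.
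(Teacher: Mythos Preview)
Your proposal is correct and follows essentially the same approach as the paper's proof: both use the data-processing inequality (since $\widetilde{W}_t$ is a function of $(\widetilde{W}_{t-1},U_t)$), then the chain rule for mutual information, telescope the resulting recursion, and finish with $I(\widetilde{W}_0;S)=0$ by independence of the initialization from $S$. Your notation $U_t$ makes the argument slightly cleaner, but the structure is identical.
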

\begin{proof}
The mutual information between the final output of SGD and the training sample can be upper bounded by the mutual information between the full trajectories and the training sample, which is shown below.
\begin{align}
    &I(\widetilde{W}_T;S)\notag\\
    =&I\left(\widetilde{W}_{T-1}-\lambda_Tg(W_{T-1},B_T)+N_T;S\right)\notag\\
    \leq& I\left(\widetilde{W}_{T-1},-\lambda_Tg(W_{T-1},B_T)+N_T;S\right)\label{ineq:sample-func-dpi}\\
    =&I(\widetilde{W}_{T-1};S)+I\left(-\lambda_Tg(W_{T-1},B_T)+N_T;S|\widetilde{W}_{T-1}\right)\label{eq:sample-chain-rule}\\
    \leq& I(\widetilde{W}_{T-2};S)+I\left(-\lambda_{T-1}g(W_{T-2},B_{T-1})+N_{T-1};S|\widetilde{W}_{T-2}\right)+I\left(-\lambda_Tg(W_{T-1},B_T)+N_T;S|\widetilde{W}_{T-1}\right)\label{ineq:sample-repeat}\\
    \vdots&\notag\\
    \leq&I(\widetilde{W}_0;S)+\sum_{t=1}^T I\left(-\lambda_tg(W_{t-1},B_t)+N_t;S|\widetilde{W}_{t-1}\right),
    \label{ineq:sample-lasttwo}\\
    =&\sum_{t=1}^T I\left(-\lambda_tg(W_{t-1},B_t)+N_t;S|\widetilde{W}_{t-1}\right),
    \label{ineq:sample-last}
\end{align}
where Eq.\ref{ineq:sample-func-dpi} is by $I(f(X,Y);Z)\leq I(X,Y;Z)$ (i.e. $Z-(X,Y)-f(X,Y)$ forms a Markov chain and then use the data processing inequality), Eq.\ref{eq:sample-chain-rule} is by the chain rule of mutual information, Eq.\ref{ineq:sample-repeat} is by applying the similar procedure (namely, Eq.\ref{ineq:sample-func-dpi}-Eq.\ref{eq:sample-chain-rule}) to $I(\widetilde{W}_{T-1};S)$, Eq. \ref{ineq:sample-lasttwo} is by doing these steps recursively and Eq.\ref{ineq:sample-last} is due to the fact that $\widetilde{W}_0$ is independent of $S$ (i.e. $I(\widetilde{W}_0;S)=0$).
\end{proof}

\subsection{Proof of Theorem \ref{thm:tightest-bound}}
\begin{proof}
We first follow the similar decomposition of the expected generalization error and apply Lemma \ref{lem:xu's-bound} as in \cite{Neu2021InformationTheoreticGB},
\begin{align}
    |{\rm gen}(\mu,P_{W_T|S})|&=\left|{\rm gen}(\mu, P_{\widetilde{W}_T|S})+\ex{W_T,\Delta_T}{L_\mu(W_T)-L_\mu(\widetilde{W}_T)}+\ex{W_T,\Delta_T,S}{L_S(\widetilde{W}_T)-L_S(W_T)}\right|\notag\\
    &\leq\sqrt{\frac{2R^2}{n}I(\widetilde{W}_T;S)}+\left|\ex{W_T,S,S'}{\gamma(W_T,S)-\gamma(W_T,S')}\right|\label{ineq:xu-bound}.
\end{align}

The remaining task is to bound the mutual information $I(\widetilde{W}_T;S)$. 

Let $X=S$, $Y=\widetilde{W}_{t-1}$ and $\Omega = \ex{\Delta_{t-1},Z}{g(\widetilde{w}_{t-1}-\Delta_{t-1},Z)}$, by applying Lemma \ref{lem:keylemma}, we have
\begin{align*}
&I\left(-\lambda_tg(W_{t-1},B_t)+N_t;S|\widetilde{W}_{t-1}\right)\\\leq&\ex{\widetilde{W}_{t-1}}{\frac{d}{2}\log{\left(\frac{\lambda_t^2\ex{S,\Delta_{t-1}}{||g(\widetilde{w}_{t-1}-\Delta_{t-1},B_t)-\ex{\Delta_{t-1},Z}{g(\widetilde{w}_{t-1}-\Delta_{t-1},Z)}||^2|\widetilde{W}_{t-1}=\widetilde{w}_{t-1}}}{d\sigma^2}+1\right)}}.    
\end{align*}

By Lemma \ref{lem:sgd-chain} and putting everything together, we have
\begin{align*}
&|\mathrm{gen}(\mu,P_{W_T|S})|\\\leq& \sqrt{\frac{R^2d}{n}\sum_{t=1}^T\ex{\widetilde{W}_{t-1}}{\log\left(\frac{\lambda_t^2\ex{S,\Delta_{t-1}}{||g(W_{t-1},B_t)-\ex{Z,\Delta_{t-1}}{g(W_{t-1},Z)}||^2}}{d\sigma_t^2}+1\right)}}+\left|\ex{}{\gamma(W_T,S)-\gamma(W_T,S')}\right|.
\end{align*}
This completes the proof.

\end{proof}

\subsection{Corollary \ref{cor:recover-bound}: Recovering Lemma \ref{lem:neu-bound}}
\label{appendix:recover-bound}
We can recover the result of Lemma \ref{lem:neu-bound} in \cite{Neu2021InformationTheoreticGB} from Theorem \ref{thm:tightest-bound}, which is re-stated in the following corollary.

\begin{cor}
\label{cor:recover-bound}
The generalization error of SGD is upper bounded by
\begin{align*} 
|\mathrm{gen}(\mu,P_{W_T|S})|\leq\sqrt{\frac{2R^2}{n}\sum_{t=1}^T{\frac{\lambda_t^2}{\sigma_t^2}}\ex{W_{t-1}}{3\Psi(W_{t-1})+ 2\mathbb{\widetilde{V}}_t(W_{t-1})}}+\left|\ex{}{\gamma(W_T,S)-\gamma(W_T,S')}\right|.
\end{align*}
\end{cor}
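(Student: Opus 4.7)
The plan is to derive the corollary from Theorem \ref{thm:tightest-bound} in two weakening steps. First, I would linearize the logarithm via the elementary inequality $\log(1+x) \le x$; second, I would split the squared gradient deviation through triangle inequalities into one piece controlled by $\mathbb{\widetilde{V}}_t$ and others controlled by $\Psi$.

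Applying $\log(1+x)\le x$ to each summand in Theorem \ref{thm:tightest-bound} simultaneously removes the logarithm and cancels the dimension factor $d$, leaving
\begin{align*}
|\mathrm{gen}(\mu,P_{W_T|S})|\le \sqrt{\frac{R^2}{n}\sum_{t=1}^T\frac{\lambda_t^2}{\sigma_t^2}\ex{}{\|g(W_{t-1},B_t)-\Omega(\widetilde W_{t-1})\|^2}} + \bigl|\ex{}{\gamma(W_T,S)-\gamma(W_T,S')}\bigr|,
\end{align*}
where $\Omega(\widetilde W_{t-1})=\ex{\Delta',Z}{g(\widetilde W_{t-1}-\Delta',Z)}$ is the reference chosen inside the proof of Theorem \ref{thm:tightest-bound}.

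Next, introducing the intermediate $\bar g(w)=\ex{Z}{g(w,Z)}$ and using $\|a+b\|^2\le 2\|a\|^2+2\|b\|^2$, I would split
\begin{align*}
\|g(W_{t-1},B_t)-\Omega(\widetilde W_{t-1})\|^2 \le 2\|g(W_{t-1},B_t)-\bar g(W_{t-1})\|^2 + 2\|\bar g(W_{t-1})-\Omega(\widetilde W_{t-1})\|^2.
\end{align*}
Taking expectation over $B_t$ turns the first term into $2\mathbb{\widetilde{V}}_t(W_{t-1})$, matching the $2\mathbb{\widetilde{V}}_t$ coefficient in the target bound. For the second term, Jensen's inequality pulls the expectation over $\Delta'$ (hidden inside $\Omega$) outside the norm, giving $\ex{\Delta'}{\|\bar g(W_{t-1})-\bar g(\widetilde W_{t-1}-\Delta')\|^2}$. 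A further triangle step with an intermediate such as $\bar g(\widetilde W_{t-1})$ yields two $\Psi$-flavored terms; identifying them as $\Psi(W_{t-1})$ relies on the independence of $W_{t-1}$ from $\Delta_{t-1}$ in the auxiliary process together with the distributional identity that $\Delta_{t-1}$ and $\Delta'$ each have the same law as $\zeta\sim\mathcal N(0,\sum_{i<t}\sigma_i^2\mathbf I_d)$.

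The main obstacle will be bookkeeping of the constants while making sure every residual $\Psi$-type contribution can be written as $\Psi(W_{t-1})$ rather than as $\Psi$ evaluated at a perturbed point like $\widetilde W_{t-1}$; the naive intermediates tend to produce the latter. Absorbing those perturbed-$\Psi$ terms back into $\ex{W_{t-1}}{\Psi(W_{t-1})}$ requires one more triangle inequality that costs an additional factor of two, which is precisely how the coefficients $3\Psi$ and $2\mathbb{\widetilde{V}}_t$ (rather than the $2\Psi+2\mathbb{\widetilde{V}}_t$ produced by the most straightforward decomposition) arise in the stated bound. The outer $\sqrt{2R^2/n}$ factor is then just the subgaussian factor from Lemma \ref{lem:xu's-bound} re-emerging after the $\log$ has been linearized.
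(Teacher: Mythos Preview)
Your plan mirrors the paper's proof: remove the logarithm via $\log(1+x)\le x$ (the paper precedes this by a Jensen step on the concave $\log$, which is redundant once one linearizes anyway), then iteratively apply $\|a+b\|^2\le 2\|a\|^2+2\|b\|^2$ and Jensen to peel off $\mathbb{\widetilde V}_t$ and $\Psi$ contributions. The only substantive difference is the \emph{order} in which the two intermediates $\bar g(W_{t-1})$ and $\bar g(\widetilde W_{t-1})$ are inserted, and that order is precisely what determines whether the constants come out as $3\Psi+2\mathbb{\widetilde V}_t$.

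The paper inserts $\bar g(\widetilde W_{t-1})$ \emph{first}. After that split, the piece $2\|\bar g(\widetilde W_{t-1})-\Omega(\widetilde W_{t-1})\|^2$, once Jensen pulls the inner $\Delta$-average out, becomes $2\Psi(W_{t-1})$ in expectation directly --- no perturbed-$\Psi$ ever appears --- because under the full expectation the pair $(\widetilde W_{t-1},\,\widetilde W_{t-1}-\Delta)$ has the same law as $(W_{t-1}+\Delta_{t-1},\,W_{t-1})$. Only the remaining piece $2\|g(W_{t-1},B_t)-\bar g(\widetilde W_{t-1})\|^2$ is then split at $\bar g(W_{t-1})$, yielding $4\mathbb{\widetilde V}_t(W_{t-1})+4\Psi(W_{t-1})$. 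The grand total $4\mathbb{\widetilde V}_t+6\Psi = 2(2\mathbb{\widetilde V}_t+3\Psi)$ matches the corollary, and that factor of $2$ from the decomposition is what produces the $\sqrt{2R^2/n}$ in the final display (not Lemma~\ref{lem:xu's-bound} ``re-emerging'' --- the subgaussian constant is already absorbed in the $\sqrt{R^2d/n}$ prefactor of Theorem~\ref{thm:tightest-bound}).

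Your order --- $\bar g(W_{t-1})$ first --- isolates $2\mathbb{\widetilde V}_t$ cleanly but leaves $2\|\bar g(W_{t-1})-\Omega\|^2$; after Jensen and a triangle at $\bar g(\widetilde W_{t-1})$ one of the two resulting pieces is genuinely $\Psi(\widetilde W_{t-1})$, as you anticipate. However, the fix you propose, ``one more triangle inequality that costs an additional factor of two,'' does \emph{not} convert $\ex{}{\Psi(\widetilde W_{t-1})}$ into $\ex{}{\Psi(W_{t-1})}$: re-inserting $\bar g(W_{t-1})$ into $\|\bar g(\widetilde W_{t-1})-\bar g(\widetilde W_{t-1}-\Delta')\|^2$ produces a residual $\|\bar g(W_{t-1})-\bar g(W_{t-1}+\Delta_{t-1}-\Delta')\|^2$, and $\Delta_{t-1}-\Delta'$ has \emph{twice} the variance of $\zeta$, so this is not $\Psi(W_{t-1})$. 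Your route still gives a valid bound, but with different constants than the stated $3\Psi+2\mathbb{\widetilde V}_t$. The remedy is simply to swap the order of the two intermediates, as the paper does.
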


\begin{proof}
From the first term in Theorem \ref{thm:tightest-bound}, we have
\begin{align}
&\sqrt{\frac{R^2d}{n}\sum_{t=1}^T\ex{\widetilde{W}_{t-1}}{\log\left(\frac{\lambda_t^2\ex{S,\Delta_{t-1}}{\left|\left|g(W_{t-1},B_t)-\ex{Z,\Delta_{t-1}}{g(W_{t-1},Z)}\right|\right|^2}}{d\sigma_t^2}+1\right)}}\notag\\
\leq&\sqrt{\frac{R^2d}{n}\sum_{t=1}^T{\log\left(\frac{\lambda_t^2\ex{\widetilde{W}_{t-1},\Delta_{t-1},S}{\left|\left|g(W_{t-1},B_t)-\ex{Z,\Delta_{t-1}}{g(\widetilde{W}_{t-1}-\Delta_{t-1},Z)}\right|\right|^2}}{d\sigma_t^2}+1\right)}}\label{ineq:jsineq-recover}\\
\leq&\sqrt{\frac{R^2}{n}\sum_{t=1}^T{\frac{\lambda_t^2}{\sigma_t^2}}\ex{\widetilde{W}_{t-1},\Delta_{t-1},S}{\left|\left|g(W_{t-1},B_t)-\ex{Z,\Delta_{t-1}}{g(\widetilde{W}_{t-1}-\Delta_{t-1},Z)}\right|\right|^2}}\label{ineq:logx-recover}\\
=&\sqrt{\frac{R^2}{n}\sum_{t=1}^T{\frac{\lambda_t^2}{\sigma_t^2}}\ex{W_{t-1},S}{\left|\left|g(W_{t-1},B_t)-\ex{Z}{g(\widetilde{W}_{t-1},Z)}+\ex{Z}{g(\widetilde{W}_{t-1},Z)}-\ex{Z,\Delta_{t-1}}{g(\widetilde{W}_{t-1}-\Delta_{t-1},Z)}\right|\right|^2}}\notag\\
\leq&\sqrt{\frac{R^2}{n}\sum_{t=1}^T{\frac{\lambda_t^2}{\sigma_t^2}}\ex{{W}_{t-1},S}{2\left|\left|g(W_{t-1},B_t)-\ex{Z}{g(\widetilde{W}_{t-1},Z)}\right|\right|^2+2\left|\left|\ex{Z}{g(\widetilde{W}_{t-1},Z)}-\ex{Z,\Delta_{t-1}}{g(\widetilde{W}_{t-1}-\Delta_{t-1},Z)}\right|\right|^2}}\label{ineq:triangle-recover}\\
\leq&\sqrt{\frac{R^2}{n}\sum_{t=1}^T{\frac{\lambda_t^2}{\sigma_t^2}}\ex{W_{t-1},S}{2\left|\left|g({W}_{t-1},B_t)-\ex{Z}{g(\widetilde{W}_{t-1},Z)}\right|\right|^2+2\ex{\Delta_{t-1}}{\left|\left|\ex{Z}{g(\widetilde{W}_{t-1},Z)}-\ex{Z}{g(\widetilde{W}_{t-1}-\Delta_{t-1},Z)}\right|\right|^2}}}\label{ineq:squarednorm-convex}\\
=&\sqrt{\frac{R^2}{n}\sum_{t=1}^T{\frac{\lambda_t^2}{\sigma_t^2}}\ex{W_{t-1},S}{2\left|\left|g({W}_{t-1},B_t)-\ex{Z}{g({W}_{t-1},Z)}+\ex{Z}{g({W}_{t-1},Z)}-\ex{Z}{g(\widetilde{W}_{t-1},Z)}\right|\right|^2+2\Psi(W_{t-1})}}\notag\\
\leq&\sqrt{\frac{R^2}{n}\sum_{t=1}^T{\frac{\lambda_t^2}{\sigma_t^2}}\ex{W_{t-1},S}{4\left|\left|g({W}_{t-1},B_t)-\ex{Z}{g({W}_{t-1},Z)}\right|\right|^2+4\left|\left|\ex{Z}{g({W}_{t-1},Z)}-\ex{Z}{g(\widetilde{W}_{t-1},Z)}\right|\right|^2+2\Psi(W_{t-1})}}\label{ineq:triangle-recover-2}\\
=&\sqrt{\frac{R^2}{n}\sum_{t=1}^T{\frac{\lambda_t^2}{\sigma_t^2}}\ex{W_{t-1},S}{4\left|\left|g({W}_{t-1},B_t)-\ex{Z}{g({W}_{t-1},Z)}\right|\right|^2+6\Psi(W_{t-1})}}\notag\\
=&\sqrt{\frac{2R^2}{n}\sum_{t=1}^T{\frac{\lambda_t^2}{\sigma_t^2}}\ex{W_{t-1}}{3\Psi(W_{t-1})+ 2\mathbb{\widetilde{V}}_t(W_{t-1})}},\notag
\end{align}
where Eq. \ref{ineq:jsineq-recover} and \ref{ineq:squarednorm-convex} are by Jensen's inequality (i.e. the concavity of logarithm and the convexity of squared norm), Eq. \ref{ineq:logx-recover} is by $\log(x+1)\leq x$, and Eq. \ref{ineq:triangle-recover} and \ref{ineq:triangle-recover-2} are by $||x+y||^2\leq2||x||^2+2||y||^2$.

This completes the proof.
\end{proof}

\subsection{Proof of Lemma \ref{lem:sample-cmi-var}}
\label{sec:neu-diff}
% \begin{lem}
% Let $G_t = -\lambda_t g(W_{t-1},B_t)$. Then,
% $
%     I(G_t+N_t;S|\widetilde{W}_{t-1})\leq \frac{\lambda_t^2}{2\sigma_t^2}\ex{}{\mathbb{V}(W_{t-1})}.
% $
% \label{lem:sample-cmi-var}
% \end{lem}
\begin{proof}
We first notice that
\[
I(G_t+N_t;S|\widetilde{W}_{t-1})=I(-\lambda_t g(W_{t-1},B_t)+\sigma_t N; S|\widetilde{W}_{t-1}),
\]
where $N\sim \mathcal{N}(0,\mathrm{I}_d)$.

Then let $X=S$, $Y=\widetilde{W}_{t-1}$ and $\Omega=\ex{W_{t-1},Z}{\nabla\ell(W_{t-1},Z)}$, by applying Lemma \ref{lem:keylemma}, we have
\begin{align*}
    I(-\lambda_t g(W_{t-1},B_t)+\sigma_t N; S|\widetilde{W}_{t-1})\leq& \frac{d}{2}\ex{\widetilde{W}_{t-1}}{\log\left(\frac{\lambda_t^2\ex{S,\Delta_{t-1}}{||g(W_{t-1},B_t)-\ex{W_{t-1},Z}{\nabla_w\ell(W_{t-1},Z)}||^2}}{d\sigma_t^2}+1\right)}\\
    \leq&\frac{d}{2}{\log\left(\frac{\lambda_t^2\ex{W_{t-1}}{\mathbb{V}_{t}(W_{t-1})}}{d\sigma_t^2}+1\right)},
    % =&\frac{\lambda_t^2}{2\sigma_t^2}\ex{}{\mathbb{V}_{t}(W_{t-1})}
\end{align*}
where the second inequality is by Jensen's inequality.

This completes the proof.
\end{proof}

\subsection{Proof of Theorem \ref{thm:re-neu-bound}}
% \label{prf:re-neu-bound}

\begin{proof}

% We first notice that,
% \[
% I(\widetilde{W}_T;S)\leq I(\widetilde{W}_T,\Delta_{1:T};S),
% \]
% since ``more data give more information''.

Applying Lemma \ref{lem:sgd-chain} and Lemma \ref{lem:sample-cmi-var} and putting everything together, we have
\begin{align*}
|\mathrm{gen}(\mu,P_{W_T|S})|\leq \sqrt{\frac{R^2d}{n}\sum_{t=1}^T{\log\left(\frac{\lambda_t^2\ex{W_{t-1}}{\mathbb{V}_{t}(W_{t-1})}}{d\sigma_t^2}+1\right)}}+\left|\ex{}{\gamma(W_T,S)-\gamma(W_T,S')}\right|.
\end{align*}

Next, to handle the mismatch between the outputs of perturbed SGD and SGD, we apply Taylor expansion around $\Delta_T=\vec 0$,
\begin{align}
    \ex{W_T,S,\Delta_T}{L_S(W_T+\Delta_T)-L_S(W_T)}&=\frac{1}{n}\sum_{i=1}^n \ex{W_T,Z_i,\Delta_T}{\ell(W_T+\Delta_T,Z_i)-\ell(W_T,Z_i)}\notag\\
    &\approx  \ex{W_T,Z,\Delta_T}{\langle \nabla_w \ell(W_T,Z), \Delta_T \rangle + \frac{1}{2}\Delta_T^T\mathrm{H}_{W_T}(Z)\Delta_T}\notag\\
    & = \ex{W_T,Z,\Delta_T}{\frac{1}{2}\Delta_T^T\mathrm{H}_{W_T}(Z)\Delta_T}\label{eq:noise-mean}\\
    &=\frac{1}{2}\langle \ex{W_T,Z}{\mathrm{H}_{W_T}(Z)}, \ex{\Delta_T}{\Delta_T\Delta_T^T}\rangle\notag\\
    &=\frac{1}{2}\langle \ex{W_T,Z}{\mathrm{H}_{W_T}(Z)}, \mathrm{diag}(\sum_{t=1}^T \sigma_t^2)\rangle\label{eq:noise-independent}\\
    &=\frac{\sum_{t=1}^T \sigma_t^2}{2}\mathrm{Tr} (\ex{W_T,Z}{\mathrm{H}_{W_T}(Z)}),\notag
\end{align}
where Eq.\ref{eq:noise-mean} is by the zero mean of the perturbation, Eq.\ref{eq:noise-independent} is by the independence of the coordinates of $\Delta_T$, $\langle \cdot,\cdot \rangle$ denotes the inner product of two matrices, $\mathrm{diag}(A)$ is the diagonal matrix with element $A$ and $\mathrm{Tr}(\cdot)$ is the trace of a matrix.

Under the condition $\ex{W_T,S'}{\gamma(W_T,S')}\geq 0$, we now bound $\mathrm{gen}(\mu,{\rm P}_{\widetilde{W}_T|S})$ instead of its absolute value,  $|\mathrm{gen}(\mu,{\rm P}_{\widetilde{W}_T|S})|$. With the inequality $\log(x+1)\leq x$, the following is straightforward,
\begin{align}
    \mathrm{gen}(\mu,{\rm P}_{\widetilde{W}_T|S})\leq& \sqrt{\frac{R^2}{n}\sum_{t=1}^T\frac{\lambda_t^2}{\sigma_t^2}\ex{W_{t-1}}{\mathbb{V}(W_{t-1})}}+\ex{W_T,S,S'}{\gamma(W_T,S)-\gamma(W_T,S')}\notag\\
    \leq & \sqrt{\frac{R^2}{n}\sum_{t=1}^T\frac{\lambda_t^2}{\sigma_t^2}\ex{W_{t-1}}{\mathbb{V}(W_{t-1})}}+\ex{W_T,S}{\gamma(W_T,S)}\notag\\
    =& \sqrt{\frac{R^2}{n}\sum_{t=1}^T\frac{\lambda_t^2}{\sigma_t^2}\ex{W_{t-1}}{\mathbb{V}(W_{t-1})}}+\frac{\sum_{t=1}^T \sigma_t^2}{2}\mathrm{Tr} (\ex{W_T,Z}{\mathrm{H}_{W_T}(Z)}).\notag
\end{align}

Since every choice of $\sigma$ gives a valid generalization bound. The optimal bound can be found by simply utilizing the fact $A/\sigma+\sigma^2B\geq 3(A/2)^{2/3}B^{1/3}$ for any positive $A$ and $B$, where the equality is achieved by the optimal $\sigma$. 
Finally, rearranging the terms will complete the proof.
\end{proof}

\subsection{Corollary \ref{cor:smooth-bound}}
\label{prf:smooth-bound}

\begin{cor}
If the loss function is differentiable and $\beta$-smooth with respect to $w$, then, 
%the expected generalization error is bounded by
\begin{align*}
|\mathrm{gen}(\mu,P_{W_T|S})|\leq&  
\sqrt{\frac{R^2d}{n}\sum_{t=1}^T{\log\left(\frac{\lambda_t^2\ex{W_{t-1}}{\mathbb{V}_{t}(W_{t-1})}}{d\sigma_t^2}+1\right)}}
+\beta d \sum_{t=1}^T\sigma_t^2.
\end{align*}
\label{cor:smooth-bound}
\end{cor}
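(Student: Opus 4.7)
The plan is to start from the bound in Eq.\ref{ineq:sgd-bound} of Theorem \ref{thm:re-neu-bound}, keep the trajectory term (square-root) unchanged, and replace the flatness term $|\ex{}{\gamma(W_T,S)-\gamma(W_T,S')}|$ with the deterministic bound $\beta d \sum_{t=1}^T\sigma_t^2$. So the whole corollary reduces to showing that, under $\beta$-smoothness, $|\gamma(w,s)| \leq \frac{\beta d}{2}\sum_{t=1}^T \sigma_t^2$ uniformly in $w$ and $s$, then applying the triangle inequality.

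First I would invoke the standard consequence of $\beta$-smoothness of $\ell(\cdot,z)$: for any $w$, $z$, and any perturbation $\delta$,
\[
\left|\ell(w+\delta,z)-\ell(w,z)-\langle \nabla_w \ell(w,z),\delta\rangle\right|\leq \frac{\beta}{2}\|\delta\|^2.
\]
Averaging this over $z\in s$ gives the same inequality with $\ell$ replaced by $L_s$ and $\nabla_w\ell$ by $\nabla_w L_s$. Next I would take expectation with respect to $\Delta_T=\sum_{t=1}^T N_t$, which is a zero-mean Gaussian independent of $(W_T,S)$ with covariance $(\sum_{t=1}^T\sigma_t^2)\mathbf{I}_d$. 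The linear-in-$\delta$ term vanishes in expectation, and $\mathbb{E}\|\Delta_T\|^2 = d\sum_{t=1}^T \sigma_t^2$, so
\[
|\gamma(w,s)| = \left|\mathbb{E}_{\Delta_T}[L_s(w+\Delta_T)-L_s(w)]\right|\leq \frac{\beta d}{2}\sum_{t=1}^T \sigma_t^2.
\]

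With this uniform bound in hand, the triangle inequality yields
\[
\left|\ex{}{\gamma(W_T,S)-\gamma(W_T,S')}\right|\leq \ex{}{|\gamma(W_T,S)|} + \ex{}{|\gamma(W_T,S')|}\leq \beta d \sum_{t=1}^T \sigma_t^2,
\]
and plugging this back into Eq.\ref{ineq:sgd-bound} of Theorem \ref{thm:re-neu-bound} gives exactly the stated inequality.

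There is essentially no obstacle here: the only nontrivial step is the smoothness-based quadratic upper bound on $|\gamma(w,s)|$, which is a textbook consequence of $\beta$-smoothness combined with the fact that $\Delta_T$ is zero-mean Gaussian with trace-covariance $d\sum_{t=1}^T\sigma_t^2$. The rest is just assembly. One minor thing worth flagging is that the corollary does not require the assumption $L_\mu(w_T)\leq \mathbb{E}_{\Delta_T}[L_\mu(w_T+\Delta_T)]$ used in the second half of Theorem \ref{thm:re-neu-bound}; we only need the first statement Eq.\ref{ineq:sgd-bound}, whose flatness term is an absolute value and hence can be replaced directly by our $\beta d \sum_{t=1}^T\sigma_t^2$ bound.
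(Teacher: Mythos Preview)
Your proposal is correct and follows essentially the same approach as the paper: bound the flatness term via the quadratic growth inequality from $\beta$-smoothness, use that $\Delta_T$ has zero mean so the linear term vanishes, and that $\mathbb{E}\|\Delta_T\|^2=d\sum_t\sigma_t^2$, then combine via the triangle inequality. The paper phrases the split as bounding $|\mathbb{E}[L_\mu(W_T)-L_\mu(W_T+\Delta_T)]|$ and $|\mathbb{E}[L_S(W_T+\Delta_T)-L_S(W_T)]|$ separately (which is equivalent to your $|\gamma(W_T,S')|$ and $|\gamma(W_T,S)|$ since $S'\perp W_T$), but the content is the same; your explicit use of the two-sided smoothness inequality is if anything slightly cleaner.
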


\begin{proof}
Recall the smoothness implies $f(\mathbf{v})\leq f(\mathbf{w}) + \langle \nabla f(\mathbf{w}), \mathbf{v}-\mathbf{w}\rangle + \frac{\beta}{2}||\mathbf{v}-\mathbf{w}||^2$ for all $\mathbf{v}$ and $\mathbf{w}$. By the triangle inequality, we have 
\[
    |\mathbb{E}\left[L_\mu(W_T)-L_\mu(W_T+ \Delta_T)\right]|\leq |\mathbb{E}\left[ \langle \nabla_w\ell(W_T,Z), \Delta_T \rangle \right]|+\frac{\beta}{2}\mathbb{E}\left[||\Delta_T||^2\right]=\frac{\beta d \sum_{t=1}^T\sigma_t^2}{2}
\]
Thus, we can see that $|\mathbb{E}\left[L_\mu(W_T)-L_\mu(W_T+ \Delta_T)\right]|+|\mathbb{E}\left[L_S(W_T+ \Delta_T)-L_S(W_T)\right]|\leq \beta d \sum_{t=1}^T\sigma_t^2$.

This completes the proof.
\end{proof}

\begin{rem}
In Corollary \ref{cor:smooth-bound}, we note that the dependence of $d$ in the bound results from the spherical Gaussian noise used in the construction of the weight process $\widetilde{W}_T$. It is possible to replace the spherical Gaussian with a Gaussian noise having a non-diagonal covariance that reflects the geometry of the loss landscape. With this replacement, the dimension $d$ in the flatness term will be replaced by the trace of $\sum_{t=1}^T\kappa_t$, where $\kappa_t$ is the covariance matrix of the noise added at step $t$. Please refer to \cite{Neu2021InformationTheoreticGB} for a similar development.
\end{rem}

% \section{Omitted Proof For Section \ref{sec:NNbound}}
\section{Application in Neural Networks: Proof of Theorems \ref{thm:optimal-bound-linear} and \ref{thm:optimal-bound-relu}}

\subsection{Lemma \ref{lem:sgd-bound-norm}}
\begin{lem}
Under the same conditions of Theorem \ref{thm:re-neu-bound}, the generalization error of SGD is upper bounded by
\[
    \mathrm{gen}(\mu,P_{W_T|S})\leq\frac{3}{2}\left(\sum_{t=1}^T\frac{R^2\lambda_t^2T}{n} \ex{}{||g(W_{t-1}, B_t)||^2_2}\ex{}{\mathrm{Tr}\left({\mathrm{H}_{W_T}(Z)} \right)}\right)^{\frac{1}{3}}
\]
\label{lem:sgd-bound-norm}
\end{lem}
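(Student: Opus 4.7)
\textbf{Proof proposal for Lemma \ref{lem:sgd-bound-norm}.} My plan is to mirror the proof of Theorem \ref{thm:re-neu-bound} almost verbatim, changing only the choice of the reference vector $\Omega$ in Lemma \ref{lem:keylemma}. Recall that in Lemma \ref{lem:sample-cmi-var} the authors apply Lemma \ref{lem:keylemma} with $X=S$, $Y=\widetilde W_{t-1}$, $\Delta=\Delta_{t-1}$, $f(\widetilde w_{t-1},S) = -\lambda_t g(\widetilde w_{t-1}-\Delta_{t-1},B_t)$, and take $\Omega = \ex{}{g(W,Z)}$ to recover the gradient dispersion $\mathbb V_t$. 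Since Lemma \ref{lem:keylemma} holds for \emph{any} $\mathbb R^d$-valued function of $Y$, I would instead set $\Omega = \mathbf{0}$. This gives
\[
I(G_t+N_t;S\mid \widetilde W_{t-1}) \;\le\; \tfrac{d}{2}\log\!\left(\tfrac{\lambda_t^2\,\ex{}{\lVert g(W_{t-1},B_t)\rVert_2^2}}{d\sigma_t^2}+1\right),
\]
where I also pulled the outer expectation inside the logarithm using Jensen, exactly as in the proof of Lemma \ref{lem:sample-cmi-var}.

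Next, the plan is to invoke Lemma \ref{lem:sgd-chain} to get $I(\widetilde W_T;S)\le\sum_{t=1}^T I(G_t+N_t;S\mid\widetilde W_{t-1})$, then combine with Xu's bound (Lemma \ref{lem:xu's-bound}) along the same decomposition
\[
|\mathrm{gen}(\mu,P_{W_T|S})| \le \sqrt{\tfrac{2R^2}{n} I(\widetilde W_T;S)} + |\ex{}{\gamma(W_T,S)-\gamma(W_T,S')}|
\]
used in the proof of Theorem \ref{thm:tightest-bound}. Applying $\log(1+x)\le x$ inside the square root yields the preliminary bound
\[
\sqrt{\tfrac{R^2}{n}\sum_{t=1}^T \tfrac{\lambda_t^2}{\sigma_t^2}\,\ex{}{\lVert g(W_{t-1},B_t)\rVert_2^2}} \;+\; |\ex{}{\gamma(W_T,S)-\gamma(W_T,S')}|.
\]

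For the flatness term, I would reuse the Taylor expansion step from the proof of Theorem \ref{thm:re-neu-bound}: under twice differentiability of $\ell$, the linear term vanishes because $\Delta_T$ has zero mean, and the hypothesis $L_\mu(w_T)\le\ex{\Delta}{L_\mu(w_T+\Delta_T)}$ lets me bound $\ex{}{\gamma(W_T,S)-\gamma(W_T,S')}$ from above by $\tfrac{\sum_t\sigma_t^2}{2}\,\mathrm{Tr}\!\left(\ex{}{\mathrm H_{W_T}(Z)}\right)$ (using independence of coordinates of $\Delta_T$ and $\ex{}{\Delta_T\Delta_T^\top}=\mathrm{diag}(\sum_t\sigma_t^2)$).

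Finally, under the assumption that $\sigma_t=\sigma$ is constant in $t$, I would optimize over $\sigma$ using the elementary inequality $A/\sigma+\sigma^2 B\ge 3(A/2)^{2/3}B^{1/3}$ with
\[
A=\sqrt{\tfrac{R^2}{n}\sum_{t=1}^T\lambda_t^2\,\ex{}{\lVert g(W_{t-1},B_t)\rVert_2^2}}, \qquad B=\tfrac{T}{2}\,\mathrm{Tr}\!\left(\ex{}{\mathrm H_{W_T}(Z)}\right),
\]
which after rearrangement produces exactly the claimed $\tfrac{3}{2}\bigl(\sum_{t=1}^T \tfrac{R^2\lambda_t^2 T}{n}\ex{}{\lVert g(W_{t-1},B_t)\rVert_2^2}\,\ex{}{\mathrm{Tr}(\mathrm H_{W_T}(Z))}\bigr)^{1/3}$. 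I do not expect any real obstacle here; the only care needed is the bookkeeping of constants in the $\sigma$-optimization and confirming that the extra factor of $T$ in the cube root comes from the $\sum_t\sigma^2=T\sigma^2$ accumulation in the Hessian/flatness term, exactly as in Theorem \ref{thm:re-neu-bound}.
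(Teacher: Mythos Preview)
Your proposal is correct and matches the paper's own proof essentially verbatim: the paper also says to rerun the proof of Theorem~\ref{thm:re-neu-bound} with the single change $\Omega=0$ in Lemma~\ref{lem:keylemma}, after which all remaining steps (Lemma~\ref{lem:sgd-chain}, the decomposition via Lemma~\ref{lem:xu's-bound}, the Taylor expansion of the flatness term, and the $A/\sigma+\sigma^2B\ge 3(A/2)^{2/3}B^{1/3}$ optimization) are identical. The only minor slip is notational: in your identification of $f$ you wrote $f(\widetilde w_{t-1},S)$ where the first argument should be $Y+\Delta=W_{t-1}$ rather than $Y=\widetilde W_{t-1}$, but this does not affect the argument.
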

\begin{proof}
The proof of this lemma follows the same steps in the proof of Theorem \ref{thm:re-neu-bound} except that we require a different use of Lemma \ref{lem:keylemma} here. 

Specially, we let $\Omega=0$ in Lemma \ref{lem:keylemma}. The remaining steps are the same in the proof of Theorem \ref{thm:re-neu-bound} and should be straightforward.
\end{proof}

\begin{rem}
The bound in Lemma \ref{lem:sgd-bound-norm} is weaker than Eq. \ref{eq:optimal-bound} in Theorem \ref{thm:re-neu-bound} as the centralized expected vector norm should be smaller than the original expected vector norm.
\end{rem}

\subsection{Proof of Theorem \ref{thm:optimal-bound-linear}}

\begin{proof}
% From Eq.\ref{eq:diff-neu} in the proof of Lemma \ref{lem:cmi-var}, we find that the square norm of gradient incoherence $||\nabla_w\ell(W,Z_i)-\nabla_w\ell(W,Z_j)||^2$ plays a key role in the derivation. Indeed, this quantity is upper bounded by using the gradient dispersion in the proof. With a slight modification in the proof of Lemma \ref{lem:cmi-var} and Theorem \ref{thm:sgd-bound-random}, we can have the following bound,
% \[
%     \mathrm{gen}(\mu,P_{W_T|S})\leq
% 3\left(\frac{R}{2nb}\sum_{i=1}^n\sqrt{\sum_{t\in\mathcal{T}_i}\lambda_t^2 \ex{W_{t-1},Z_i,Z_j}{||\nabla_w\ell(W_{t-1},Z_i)-\nabla_w\ell(W_{t-1},Z_j)||^2}}\right)^{\frac{2}{3}}\left(\frac{T }{2}\ex{W_{T},Z}{\mathrm{Tr}\left({\mathrm{H}_{W_T}(Z)} \right)}\right)^\frac{1}{3}\label{ineq:bound-incoherence}.
% \]
%  By using the first order Taylor expansion, the square norm of gradient incoherence is
% \begin{align}
%     ||\nabla_w\ell(W,Z_i)-\nabla_w\ell(W,Z_j)||^2\approx&||\nabla_w\ell(W,Z_i)+\nabla_z \nabla_w\ell(W,Z_i)(Z_j-Z_i)-\nabla_w\ell(W,Z_i)||^2\notag\\
%     =&||\nabla_z \nabla_w\ell(W,Z_i)(Z_j-Z_i)||^2.\label{eq:linear-taylor}
% \end{align}
By Cauchy–Schwarz inequality, we have 
\begin{align}
    \ex{}{||g(W_{t-1},B_t)||_2^2}=\ex{}{\left|\left|\frac{1}{b}\sum_{Z_i\in B_t}\nabla_w\ell(W_{t-1},Z_i)\right|\right|_2^2}\leq\ex{}{||\nabla_w\ell(W_{t-1},Z)||_2^2}.\label{ineq:c-s}
\end{align}

Notice that $\nabla_w\ell(W,Z) = (W^TX-Y)X$. Let $\hat{Y}=f(W,X)$. Then,

\begin{align}
    ||\nabla_w\ell(W_{t-1},Z)||_2^2=||(W_{t-1}^TX-Y)X||_2^2\leq||W_{t-1}^TX-Y||_2^2=(\hat{Y}-Y)^2=2\ell(W_{t-1},Z).\label{ineq:ln-gradient}
\end{align}
% Then,
% \[
% \nabla_z \nabla_w\ell(W,Z) = \begin{pmatrix}
% \hat{Y}-Y+W_1X_1  & \cdots &  \hat{Y}-Y+W_1X_{d_0} & -X_1\\
% \hat{Y}-Y+W_2X_1  & \cdots &  \hat{Y}-Y+W_2X_{d_0} & -X_2\\
% \vdots&\vdots&\vdots&\vdots\\
% \hat{Y}-Y+W_{d_0}X_1  & \cdots &  \hat{Y}-Y+W_{d_0}X_{d_0} & -X_{d_0}
% \end{pmatrix}\in\mathbb{R}^{d_0\times(d_0+1)}.
% \]

% Thus, 
% \begin{align}
%     ||\nabla_w\ell(W,Z_i)-\nabla_w\ell(W,Z_j)||^2\leq&||\nabla_z \nabla_w\ell(W,Z_i)||_{F}^2||Z_j-Z_i||^2\notag\\=&
%     \left(\sum_{j,k=1}^{d_0}\left(\hat{Y}_i-Y_i+W_jX_{i,k}\right)^2+\sum_{j=1}^{d_0}X_{i,j}^2\right)||Z_j-Z_i||^2\notag\\
%     \leq&\left(4\sum_{j,k=1}^{d_0}\ell(W,Z_i)+2\sum_{j,k=1}^{d_0}(W_jX_{i,k})^2+1\right)||Z_j-Z_i||^2\notag\\
%     =&\left(4d_0^2\ell(W,Z_i)+2||W||^2+1\right)||Z_j-Z_i||^2\label{ineq:linear-incoherence-bound},
% \end{align}
% where $||\cdot||_F$ is the Frobenius norm. 

For the Hessian matrix, it's easy to see that $\mathrm{Tr}(\mathrm{H}_{W_T}(Z))=1$.

% And also notice that
% $
%     ||Z_j-Z_i||^2\leq 2||Z_j||^2+2||Z_i||^2
%     = 2(1+Y_j^2)+2(1+Y_i^2)\leq 8.
% $

Plugging everything into Lemma \ref{lem:sgd-bound-norm}, we have
\[
    \mathrm{gen}(\mu,P_{W_T|S})\leq3\left(\sum_{t=1}^T\frac{R^2\lambda_t^2T}{4n} \ex{W_{t-1},Z}{\ell(W_{t-1},Z)}\right)^{\frac{1}{3}}.
\]
This completes the proof.
\end{proof}

\subsection{Proof of Theorem \ref{thm:optimal-bound-relu}}
\begin{proof}

Since $ \nabla_{w_r}\ell(W,Z_i)=\frac{1}{\sqrt{m}}A_r(f(W,X_i)-Y_i)X_i\mathbb{I}_{r,i}$, where $\mathbb{I}_{r,i}=\mathbb{I}\{W_r^TX_i\geq 0\}$, we have

\begin{align}
    &||\nabla_w\ell(W_{t-1},Z)||_2^2\\
    =&\sum_{r=1}^m||\frac{1}{\sqrt{m}}A_r(f(W_{t-1},X_i)-Y_i)X_i\mathbb{I}_{r,i}||_2^2\notag\\
    \leq&\sum_{r=1}^m||\frac{1}{\sqrt{m}}A_r(f(W_{t-1},X_i)-Y_i)\mathbb{I}_{r,i}||_2^2\notag\\
    =& \frac{1}{m}\sum_{r=1}^m\mathbb{I}_{r,i}||(f(W_{t-1},X_i)-Y_i)||_2^2\notag\\
    =&\frac{1}{m}\sum_{r=1}^m2\mathbb{I}_{r,i}\ell(W_{t-1},Z_i).
    \label{ineq:nn-gradient}
\end{align}
% \begin{align}
%     &||\nabla_w\ell(W,Z_i)-\nabla_w\ell(W,Z_j)||^2\notag\\\leq&||\nabla_z \nabla_w\ell(W,Z_i)||_{F}^2||Z_j-Z_i||^2\notag\\=&
%     \sum_{r=1}^m\left(\sum_{j,k=1}^{d_0}(\frac{A_r\mathbb{I}_{r,i}}{\sqrt{m}})^2\left(\hat{Y}_i-Y_i+\frac{A_r\mathbb{I}_{r,i}}{\sqrt{m}}W_{r,j}X_{i,k}\right)^2+\sum_{j=1}^{d_0}(\frac{A_r\mathbb{I}_{r,i}}{\sqrt{m}}X_{i,j})^2\right)||Z_j-Z_i||^2\notag\\
%     \leq&\sum_{r=1}^m\left(4d_0^2\frac{\mathbb{I}_{r,i}}{m}\ell(W,Z_i)+\frac{2\mathbb{I}_{r,i}}{m^2}||W_{r}||^2+\frac{\mathbb{I}_{r,i}}{m}\right)||Z_j-Z_i||^2\notag\\
%     =&\frac{1}{m}\sum_{r=1}^m\mathbb{I}_{r,i}\left(4d_0^2\ell(W,Z_i)+\frac{2||W_{r}||^2}{m}+1\right)\label{ineq:relu-incoherence-bound}.
% \end{align}

In addition, we notice that $\mathrm{Tr}(\mathrm{H}_{W_T}(Z))=\frac{1}{m}\sum_{r=1}^m\mathbb{I}_{r,i,T}$.
Plugging everything into Lemma \ref{lem:sgd-bound-norm}, we have
\[
    \mathrm{gen}(\mu,P_{W_T|S})\leq3\left(\sum_{t=1}^T\frac{R^2\lambda_t^2T}{4nm} \ex{W_{t-1},Z_i}{\sum_{r=1}^m\mathbb{I}_{r,i,t}\ell(W_{t-1},Z_i)}\ex{W_T,Z_i}{\frac{1}{m}\sum_{r=1}^m\mathbb{I}_{r,i,T}}\right)^{\frac{1}{3}}.
\]
This completes the proof.
\end{proof}

% \underline{Corrupted Label.} Theoretically, our bound is a distribution-dependent bound so it should be capable of characterizing the gap change with different data distribution. We now train MLP and AlexNet on the random label datasets with different corrupted level. Figure \ref{fig:bound-gap} shows that the the change of the bound is consistent with the generalization gap.

\section{Experiment Details}
\subsection{Architectures and Hyperparameters}
In Section \ref{sec:experiment}, MLP has one hidden layer with 512 hidden units, and AlexNet has five  convolution layers (conv. $3\times3$ (64 filters) $\rightarrow$ max-pool $3\times3\rightarrow$ conv. $5\times5$ (192 filters) $\rightarrow$ max-pool $3\times3\rightarrow$ conv. $3\times3$ (384 filters) $\rightarrow$ conv. $3\times3$ (256 filters) $\rightarrow$ conv. $3\times3$ (256 filters) $\rightarrow$ max-pool $3\times3$) followed by two fully connected layers both with 4096 units and a 10-way linear layer as the output layer. All of the convolution layers and the fully connected layers use standard rectified linear activation functions (ReLU).

The fixed learning rates used for MLP and AlexNet are $0.01$ and $0.001$, respectively. The batch size is set to $60$. For the corrupted label experiment, we train the models until the models achieve $100\%$ training accuracy. For other cases, we train the neural networks until the training loss converges (e.g., $<0.0001$). Other settings are either described in Section \ref{sec:experiment} or apparent in the figures. Standard techniques such as weight decay and batch normalization are not used. 

To choose the variance proxy $R$, we first collected all the per-instance losses $\ell(W_{t-1},Z_i)$ that were observed during training, then we let $R=(max_{i,t}\ell(W_{t-1},Z_i)-min_{i,t}\ell(W_{t-1},Z_i))/2$ in our experiments.

In Section \ref{sec:regularization}, we compare GMP with other advanced regularization methods. The results of other methods are reported directly from \cite{zheng2020regularizing}, and we now give their hyperparameter settings here for completeness. For Dropout, $10\%$ of neurons are randomly selected to be deactivated in each layer. For label smoothing, the coefficient is $0.2$. For flooding, the level is set to $0.02$. For MixUp, we lineally combine random pairs of training data where the coefficient is drawn from $\mathrm{Beta}(1, 1)$. For adversarial training, the perturbation size is $1$ for each pixel and we take one step to generate adversarial examples. For AMP, the number of inner iteration is $1$, and the $L_2$ norm ball radius values are $0.5$ for PreActResNet18 and $0.1$ for VGG16, respectively.

The implementation in this paper is on PyTorch, and all the experiments are carried out on NVIDIA Tesla V100 GPUs (32 GB).

% \subsection{Compare Theorem \ref{thm:re-neu-bound} and Theorem \ref{thm:sgd-bound-random}}

\subsection{More Experiments on the Comparison Between Theorem \ref{thm:re-neu-bound} and Lemma \ref{lem:neu-bound}}
\label{appendix:sec-more-compare}
\begin{figure}[ht!]
% \begin{wrapfigure}{r}{0.45\textwidth} %this figure will be at the right
    \centering
    \includegraphics[width=0.8\textwidth]{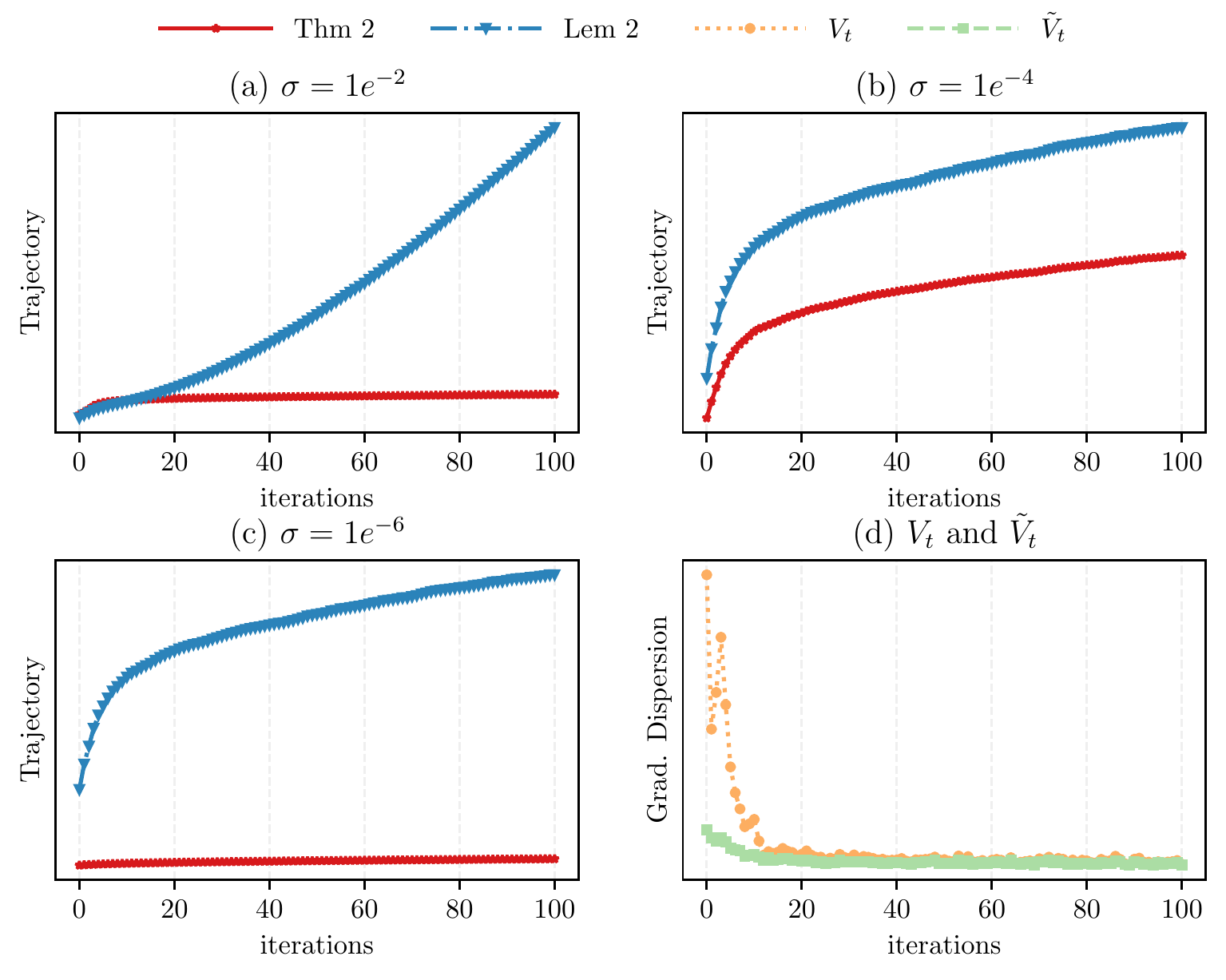}
    \caption{Comparison Between Theorem \ref{thm:re-neu-bound} and Lemma \ref{lem:neu-bound} in stochastic setting.}
\label{fig:CompareMore}
% \end{wrapfigure}
\end{figure}

In Figure \ref{fig:bound-compare}, we compare $\sqrt{\frac{2}{n}\sum_{t=1}^T\ex{}{\Psi(W_{t-1})+ \mathbb{\widetilde{V}}_t(W_{t-1})}}$ and $\sqrt{\frac{1}{n}\sum_{t=1}^T\ex{}{\mathbb{{V}}_t(W_{t-1})}}$ for two different values of $\sigma$. Notice that we indeed use a weaker version of the trajectory term in Theorem \ref{thm:re-neu-bound}, and the same constants like $R$, $\lambda_t$ and $\sigma_t$ in the two bounds are ignored here.
We choose the full dataset of MNIST and CIFAR10 to train the models with the fixed initialization. We also fix the sampling of batches, making the training completely deterministic. To estimate the local gradient sensitivity term, we randomly sample $20$ Gaussian noises from $\mathcal{N}(0,\sum_{\tau=1}^{t-1}\sigma_\tau)$ to perturb the model parameters and compute the average perturbed gradient. With this single draw and deterministic setting, the estimated $\widetilde{\mathbb{V}}_t$ and ${\mathbb{V}}_t$ are the same, so the bound of Eq. \ref{ineq:sgd-bound} in Theorem \ref{thm:re-neu-bound} should be smaller than the bound in Lemma \ref{lem:neu-bound}, as shown in Figure \ref{fig:bound-compare}.

We also provide the experiments under the stochastic setting. Specially, We randomly choose $1/10$ of the MNIST data and train the MLP model with a fixed learning rate and batch size. To estimate $\ex{W_{t-1},Z}{\nabla\ell(W_{t-1},Z)}$, we conduct $20$ runs with different random seeds and save $W$ after every iteration. In Figure \ref{fig:CompareMore} (d), we can see that at the beginning of training, our gradient dispersion $\mathbb{V}_t$ is much larger than $\widetilde{\mathbb{V}}_t$ in \cite{Neu2021InformationTheoreticGB}, but in the later training phase, the magnitude of gap between these two terms is very small. This is because the gradient norm will become tiny when $W$ is near local minima. 
In Figure \ref{fig:CompareMore} (a-c),  we compare $\sqrt{\frac{4}{n}\sum_{t=1}^T\frac{\lambda_t^2 }{\sigma_t^2}\ex{}{\Psi(W_{t-1})+ \mathbb{\widetilde{V}}_t(W_{t-1})}}$ and $\sqrt{\frac{d}{n}\sum_{t=1}^T\log\left(\frac{\lambda_t^2\ex{}{\mathbb{V}_t(W_{t-1})}}{d\sigma_t^2}+1\right)}$ under three different settings of $\sigma$. When $\sigma$ is small (e.g., $\sigma=1e^{-6}$), the local gradient sensitivity term will become small, but the factor $1/\sigma^2$ will be very large, making the gap between $\log(x+1)$ and $x$ be extremely large, as shown in Figure \ref{fig:CompareMore} (c). In this case, the improvement upon \cite{Neu2021InformationTheoreticGB} is significant. When $\sigma$ is large (e.g., $\sigma=1e^{-2}$), at the beginning of training, the trajectory term in Lemma \ref{lem:neu-bound} will be smaller than the trajectory term in Theorem \ref{thm:re-neu-bound} since our $\mathbb{V}_t$ is relatively large. However, since $\Psi(W_{t-1})$ has the cumulative variance, and $\mathbb{V}_t$ and $\widetilde{\mathbb{V}}_t$ become closer soon or later, the bound in Lemma \ref{lem:neu-bound} will be greater than the bound in Theorem \ref{thm:re-neu-bound} in the later training phase.

\subsection{Learning Rate and Batch Size.}
\label{sec:lr-bs}
The learning rate and batch size have some impact on Eq. \ref{eq:optimal-bound} in Theorem \ref{thm:re-neu-bound}. We now investigate this by performing experiments with varying learning rates and batch sizes. In our experiments, the model is continuously updated until the average training loss drops below 0.0001. We separate trajectory and flatness terms of the bound and plot them in Figure \ref{fig:lr-bs}. 
%of Theorem \ref{thm:sgd-bound-random}, the learning rate and batch size have some explicit impact on the trajectory term in our bound. 

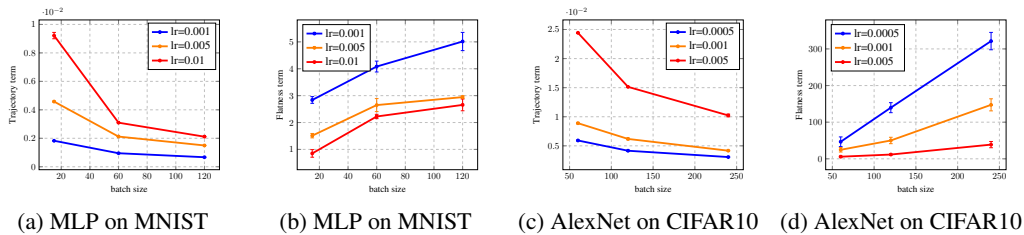
\begin{figure}[ht!]
\begin{subfigure}[b]{0.25\columnwidth}%
\centering%
\captionsetup{font=small}%
\scalebox{0.35}{
\begin{tikzpicture}
\begin{axis}
[legend style={nodes={scale=1.25, transform shape}},
% ymode=log,
legend cell align={left},
% ymode=log,
% ymax=0.1,
% ymin=0.05,
xlabel=batch size,
grid style=dashed,
table/col sep=comma,
xmajorgrids=true,
ymajorgrids=true,
ylabel=Trajectory term,
% xmin=0.0025,
%     xmax=0.042,
% width=5cm,
%     height=7cm,
%   ylabel=Values
]
\addplot[mark=star, mark options={scale=1}, line width=2pt, color=blue, error bars/.cd, y dir=both, y explicit] table [y=mv1, x=mbatch, y error=mv1_error]{data/Bound.csv};
\addlegendentry{lr=0.001}
\addplot[mark=star, mark options={scale=1}, line width=2pt, color=orange, error bars/.cd, y dir=both, y explicit] table [y=mv2, x=mbatch, y error=mv2_error]{data/Bound.csv};
\addlegendentry{lr=0.005}
\addplot[mark=star, mark options={scale=1}, line width=2pt, color=red, error bars/.cd, y dir=both, y explicit] table [y=mv3, x=mbatch, y error=mv3_error]{data/Bound.csv};
\addlegendentry{lr=0.01}
\end{axis}
\end{tikzpicture}
}
\caption{MLP on MNIST}%
\label{fig:var-bslr-mnist}
\end{subfigure}%
\begin{subfigure}[b]{0.25\columnwidth}%
\centering%
\captionsetup{font=small}%
\scalebox{0.35}{
\begin{tikzpicture}
\begin{axis}
[legend style={nodes={scale=1.25, transform shape}},
% ymode=log,
legend cell align={left},
legend pos=north west,
% ymode=log,
% ymax=0.1,
% ymin=0.05,
xlabel=batch size,
grid style=dashed,
table/col sep=comma,
xmajorgrids=true,
ymajorgrids=true,
ylabel=Flatness term,
% xmin=0.0025,
%     xmax=0.042,
% width=5cm,
%     height=7cm,
%   ylabel=Values
]
\addplot[mark=star, mark options={scale=1}, line width=2pt, color=blue, error bars/.cd, y dir=both, y explicit] table [y=mh1, x=mbatch, y error=mh1_error]{data/Bound.csv};
\addlegendentry{lr=0.001}
\addplot[mark=star, mark options={scale=1}, line width=2pt, color=orange, error bars/.cd, y dir=both, y explicit] table [y=mh2, x=mbatch, y error=mh2_error]{data/Bound.csv};
\addlegendentry{lr=0.005}
\addplot[mark=star, mark options={scale=1}, line width=2pt, color=red, error bars/.cd, y dir=both, y explicit] table [y=mh3, x=mbatch, y error=mh3_error]{data/Bound.csv};
\addlegendentry{lr=0.01}
\end{axis}
\end{tikzpicture}
}
\caption{MLP on MNIST}%
\label{fig:flat-bslr-mnist}
\end{subfigure}%
\begin{subfigure}[b]{0.25\columnwidth}%
\centering%
\captionsetup{font=small}%
\scalebox{0.35}{
\begin{tikzpicture}
\begin{axis}
[legend style={nodes={scale=1.25, transform shape}},
% ymode=log,
legend cell align={left},
% ymode=log,
% ymax=0.1,
% ymin=0.05,
xlabel=batch size,
grid style=dashed,
table/col sep=comma,
xmajorgrids=true,
ymajorgrids=true,
ylabel=Trajectory term,
% xmin=0.0025,
%     xmax=0.042,
% width=5cm,
%     height=7cm,
%   ylabel=Values
]
% \addplot[mark=star, mark options={scale=1}, line width=2pt, color=blue, error bars/.cd, y dir=both, y explicit] table [y=mv1, x=mbatch, y error=mv1_error]{Bound.csv};
% \addlegendentry{lr=0.001}
\addplot[mark=star, mark options={scale=1}, line width=2pt, color=blue, error bars/.cd, y dir=both, y explicit] table [y=cv1, x=cbatch, y error=cv1_error]{data/Bound.csv};
\addlegendentry{lr=0.0005}
\addplot[mark=star, mark options={scale=1}, line width=2pt, color=orange, error bars/.cd, y dir=both, y explicit] table [y=cv2, x=cbatch, y error=cv2_error]{data/Bound.csv};
\addlegendentry{lr=0.001}
\addplot[mark=star, mark options={scale=1}, line width=2pt, color=red, error bars/.cd, y dir=both, y explicit] table [y=cv3, x=cbatch, y error=cv3_error]{data/Bound.csv};
\addlegendentry{lr=0.005}
\end{axis}
\end{tikzpicture}
}
\caption{AlexNet on CIFAR10}%
\label{fig:var-bslr-cifar}
\end{subfigure}%
\begin{subfigure}[b]{0.25\columnwidth}%
\centering%
\captionsetup{font=small}%
\scalebox{0.35}{
\begin{tikzpicture}
\begin{axis}
[legend style={nodes={scale=1.25, transform shape}},
% ymode=log,
legend cell align={left},
legend pos=north west,
% ymode=log,
% ymax=0.1,
% ymin=0.05,
xlabel=batch size,
grid style=dashed,
table/col sep=comma,
xmajorgrids=true,
ymajorgrids=true,
ylabel=Flatness term,
% xmin=0.0025,
%     xmax=0.042,
% width=5cm,
%     height=7cm,
%   ylabel=Values
]
% \addplot[mark=star, mark options={scale=1}, line width=2pt, color=blue, error bars/.cd, y dir=both, y explicit] table [y=mh1, x=mbatch, y error=mh1_error]{Bound.csv};
% \addlegendentry{lr=0.001}
\addplot[mark=star, mark options={scale=1}, line width=2pt, color=blue, error bars/.cd, y dir=both, y explicit] table [y=ch1, x=cbatch, y error=ch1_error]{data/Bound.csv};
\addlegendentry{lr=0.0005}
\addplot[mark=star, mark options={scale=1}, line width=2pt, color=orange, error bars/.cd, y dir=both, y explicit] table [y=ch2, x=cbatch, y error=ch2_error]{data/Bound.csv};
\addlegendentry{lr=0.001}
\addplot[mark=star, mark options={scale=1}, line width=2pt, color=red, error bars/.cd, y dir=both, y explicit] table [y=ch3, x=cbatch, y error=ch3_error]{data/Bound.csv};
\addlegendentry{lr=0.005}
\end{axis}
\end{tikzpicture}
}
\caption{AlexNet on CIFAR10}%
\label{fig:flat-bslr-cifar}
\end{subfigure}%
\caption{ The impact of learning rate and batch size on the trajectory term and the flatness term in Eq. \ref{eq:optimal-bound}}
\label{fig:lr-bs}
\end{figure}
% From the way they appear in the bound, one may be tempted to assert that a small learning rate or large batch size will improve generalization. This would then contradict some previous observations \citep{jastrzkebski2017three,wu2018sgd,HeLT19}, in which increasing the ratio of learning rate to batch size will benefit generalization.  

%We empirically show that there is de facto no contradiction under the empirical risk minimization principle, where both $T$ and the flatness term also depend on learning rate and batch size. 
% Specifically, both flatness term and number of total iteration $T$ depends on learning rate and batch size under the empirical risk minimization (ERM) principle.

A key observation in Figure \ref{fig:lr-bs}  is that the learning rate impacts the trajectory term and the flatness term in opposite ways, as seen, for example, in (a) and (b), where the two set of curves swap their orders in the two figures. On the other hand, the batch size also impacts the two terms in opposite ways, as seen in (a) and (b) where curves decrease in (a) but increase in (b). This makes the generalization bound, i.e., the sum of the two terms, have a rather complex relationship with the settings of learning rate and batch size. This relationship is further complicated by the fact that a small learning rate requires a longer training time, or a larger number $T$ of training iterations, which increases the number that are summed over in the trajectory term.
% In Figure \ref{fig:lr-bs}, we see that  increasing the learning rate or decreasing the batch size 
% will increase the trajectory term and reduce the flatness term. On the flip side, using a small learning rate or large batch size will decrease the trajectory term and enlarge the flatness term. 
Nonetheless, we do observe that a smaller batch size gives a lower value of the flatness term ((b) and (d)), confirming the previous wisdom that small batch sizes enable the neural network to find a flat minima \citep{keskar2017large}.

\begin{figure}[ht!]
% \begin{wrapfigure}{r}{0.45\textwidth} %this figure will be at the right
    \centering
    \includegraphics[width=0.4\textwidth]{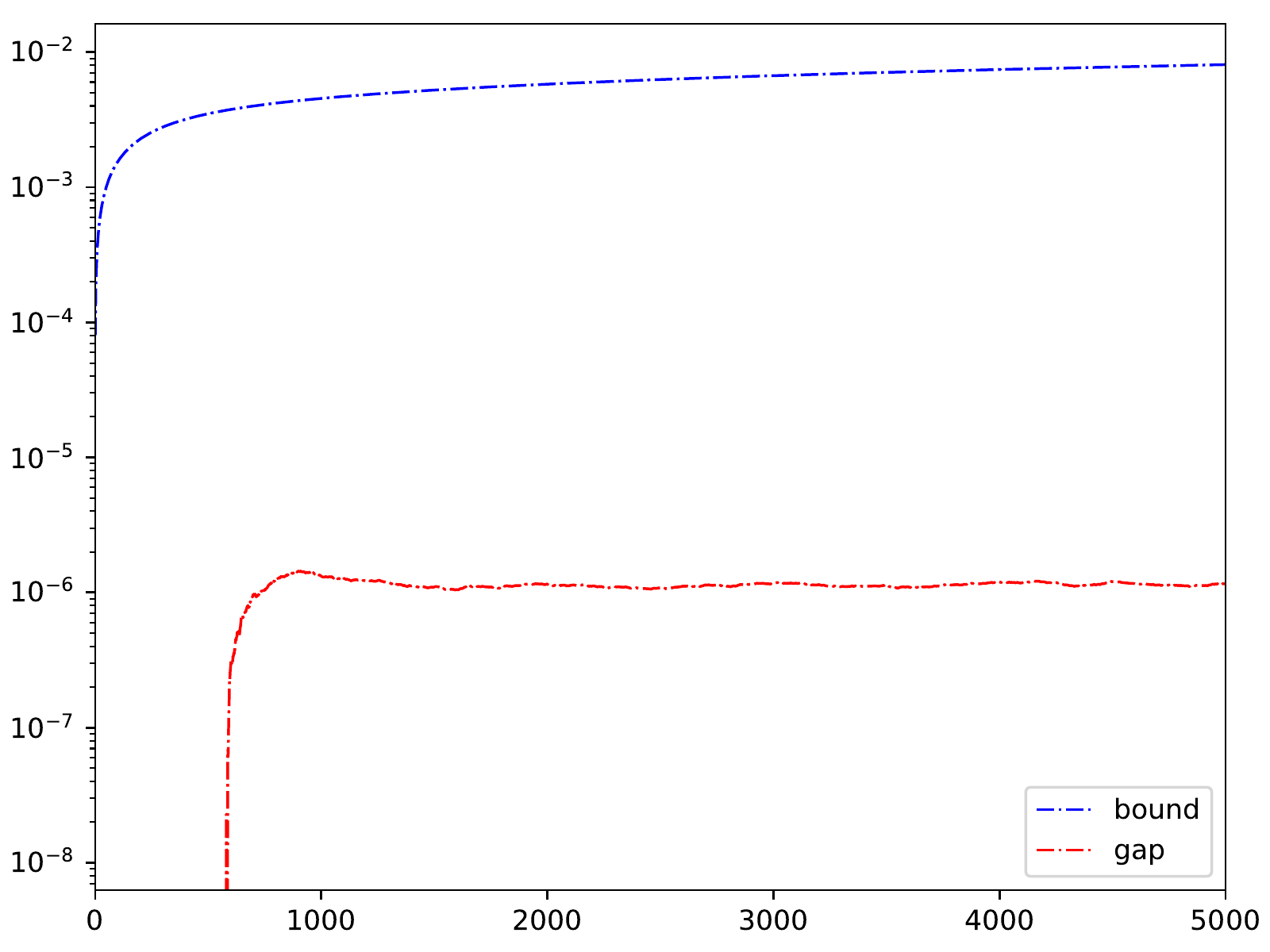}
    \caption{Generalization gap of a linear network v.s. Theorem \ref{thm:optimal-bound-linear}. Note y-axe is log-scale.}
\label{fig:LNbound}
% \end{wrapfigure}
\end{figure}
\begin{figure}[ht!] %this figure will be at the right
% \vspace{-10pt}
    \centering
    \includegraphics[width=0.4\textwidth]{}
    \caption{Generalization gap of a two-layer ReLU network v.s. Theorem \ref{thm:optimal-bound-relu}.}
\label{fig:NNbound}
% \vspace{-20pt}
\end{figure}

\subsection{Implementation of Theorem \ref{thm:optimal-bound-linear} and Theorem \ref{thm:optimal-bound-relu}}
\label{appendix:sec-NN-bound}

We let $d_0=200$ and use a two-layer ReLU network with hidden units $10000$ to generate $Y$. Moreover, we apply a $\tanh$ function to the output of this network so that $|Y|\leq1$. The input data $X\sim\mathcal{N}(0,\mathrm{I})$ and we normalize $X$ before training so that $||X||=1$. 
% In this case, we don't compute $||Z_i-Z_j||^2$ in the bound. Instead, we use its upper bound $||Z_i-Z_j||^2\leq 8$ to simplify the computation. 
For the training phase, we choose $m=100$, let data size be $20,000$, batch size be $100$ and learning rate be $0.5$. Figure \ref{fig:LNbound} and Figure \ref{fig:NNbound} compare the empirical generalization gap with the bound in Theorem \ref{thm:optimal-bound-linear} and that in Theorem \ref{thm:optimal-bound-relu}, respectively.

\subsection{Algorithm of Dynamic Gradient Clipping and Additional Results}
\label{sec:DYG-appendix}

The dynamic gradient clipping algorithm is described in Algorithm \ref{alg:dygc}. For both MLP and AlexNet, we let $\alpha = 0.1$. The start step for clipping, $T_c$, is also an important hyperparameter. However, it can be removed by detecting the evolution of the average gradient norm for each epoch. Specifically, whenever the average gradient norm of epoch $j$ is larger than the average gradient norm of epoch $j-1$, the clipping operation begins.

\begin{algorithm}[!htbp]
\caption{Dynamic Gradient Clipping}
\label{alg:dygc}
\begin{algorithmic}[1]
\REQUIRE Training set $S$, Batch size $b$, Loss function $\ell$, Initial model parameter $\boldsymbol{w}_0$, Learning rate $\lambda$, Initial minimum gradient norm $\mathcal{G}$, Number of iterations $T$, Clipping parameter $\alpha$, Clipping step $T_c$
\FOR{$t\gets1\text{ to }T$}
\STATE Sample $\mathcal{B}=\{\boldsymbol{z}_i\}_{i=1}^b$ from training set $S$
\STATE Compute gradient: \\ $g_{\mathcal{B}}\gets\sum_{i=1}^b\nabla_{\boldsymbol{w}}\ell(\boldsymbol{w}_{t-1},\boldsymbol{z}_i)/b$
\IF{$t>T_c$}
\IF{$||g_{\mathcal{B}}||_2>\mathcal{G}$}
    \STATE $g_{\mathcal{B}}\gets \alpha\cdot\mathcal{G}\cdot g_{\mathcal{B}}/||g_{\mathcal{B}}||_2$
\ELSE
    \STATE $\mathcal{G}\gets||g_{\mathcal{B}}||_2$
\ENDIF
\ENDIF
\STATE Update parameter: $\boldsymbol{w}_{t}\gets\boldsymbol{w}_{t-1}-\lambda\cdot g_{\mathcal{B}}$
\ENDFOR
\end{algorithmic}
\end{algorithm}

% \begin{wrapfigure}{r}{0.48\textwidth}
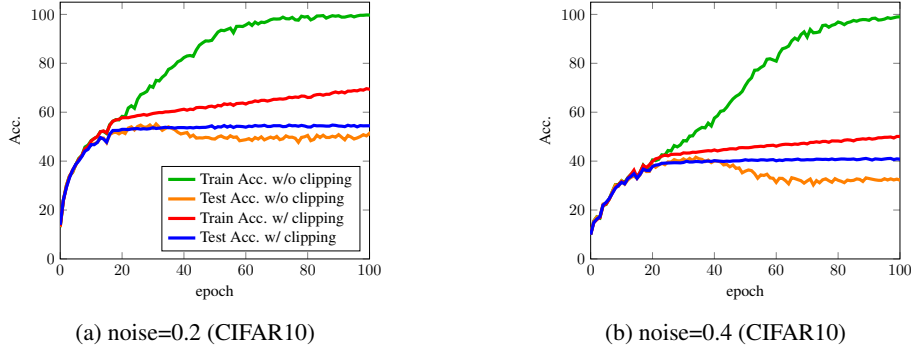
\begin{figure}[ht!]
\begin{subfigure}[b]{0.5\columnwidth}%
\centering%
\captionsetup{font=small}%
\scalebox{0.6}{
\begin{tikzpicture}
\begin{axis}
[%legend style={nodes={scale=1.25, transform shape}},
% ymode=log,
legend cell align={left},
legend pos=south east,
ymin=0, ymax=105,
xlabel=epoch,
% xmajorgrids=true,
% ymajorgrids=true,
grid style=dashed,
xmin=0,
    xmax=100,
ylabel=Acc.
]
% \addplot[line width=2pt, color={rgb:red,1;green,2;blue,5}] table [y=TrAcc0, x=epoch0]{MLP.txt};
% \addlegendentry{corruption=$0$}
\addplot[line width=2pt, color=black!30!green] table [y=TrAcc1, x=epoch0]{data/Alex_old.txt};
% \addlegendentry{Train Acc. no clipping}
\addlegendentry{Train Acc. w/o clipping}
\addplot[line width=2pt, color=orange] table [y=TsAcc1, x=epoch0]{data/Alex_old.txt};
% \addlegendentry{Test Acc. no clipping}
\addlegendentry{Test Acc. w/o clipping}
\addplot[line width=2pt, color=red] table [y=TrAcc, x=epoch]{data/Alex02dc.txt};
% \addlegendentry{Train Acc. clipping}
\addlegendentry{Train Acc. w/ clipping}
\addplot[line width=2pt, color=blue] table [y=TsAcc, x=epoch]{data/Alex02dc.txt};
% \addlegendentry{Test Acc. clipping}
\addlegendentry{Test Acc. w/ clipping}
\end{axis}
\end{tikzpicture}
}
\caption{noise=0.2 (CIFAR10)}%
\end{subfigure}
\begin{subfigure}[b]{0.5\columnwidth}%
\centering%
\captionsetup{font=small}%
\scalebox{0.6}{
\begin{tikzpicture}
\begin{axis}
[%legend style={nodes={scale=1.25, transform shape}},
% ymode=log,
legend cell align={left},
legend pos=south east,
ymin=0, ymax=105,
xlabel=epoch,
% xmajorgrids=true,
% ymajorgrids=true,
grid style=dashed,
xmin=0,
    xmax=100,
ylabel=Acc.
]
% \addplot[line width=2pt, color={rgb:red,1;green,2;blue,5}] table [y=TrAcc0, x=epoch0]{MLP.txt};
% \addlegendentry{corruption=$0$}
\addplot[line width=2pt, color=black!30!green] table [y=TrAcc2, x=epoch0]{data/Alex_old.txt};
% \addlegendentry{Train Acc. no clipping}

\addplot[line width=2pt, color=orange] table [y=TsAcc2, x=epoch0]{data/Alex_old.txt};

% \addlegendentry{Test Acc. no clipping}
\addplot[line width=2pt, color=red] table [y=TrAcc, x=epoch]{data/Alex04dc.txt};

% \addlegendentry{Train Acc. clipping}
\addplot[line width=2pt, color=blue] table [y=TsAcc, x=epoch]{data/Alex04dc.txt};

% \addlegendentry{Test Acc. clipping}
\end{axis}
\end{tikzpicture}
}
\caption{noise=0.4 (CIFAR10)}%
\end{subfigure}
\caption{Dynamic Gradient Clipping (AlexNet).}
\label{fig:DyGC-alex}
\end{figure}
% \end{wrapfigure}

 From Figure \ref{fig:DyGC} and Figure \ref{fig:DyGC-alex} we can see that dynamic gradient clipping effectively alleviates overfitting by conspicuously slowing down the transition of training to the memorization regime, without changing the convergence speed of testing accuracy. 
 Unfortunately, the current design of the dynamic gradient clipping algorithm does not provide a significant improvement for models trained on a clean dataset (without label noise). Designing better regularization algorithms may require understanding the epoch-wise double descent curve of gradient dispersion where the model is trained on a clean dataset.
 
 \subsection{Discussion on Gradient Dispersion of Models Trained on Clean Datasets}
In the case of no noise injected, Figure \ref{fig:acc-var-mlp} shows that the model with good generalization property has a exponentially-decaying gradient dispersion. This is consistent with our discussion of Lemma \ref{lem:sample-cmi-var} in Section \ref{sec:theory}, that is, small $I(G_t+N_t;S|\widetilde{W}_{t-1})$ indicates good generalization. Notably, gradient dispersion of AlexNet trained on the real CIFAR10 data still has a epoch-wise double descent curve. The difference between Figure \ref{fig:acc-var-alex} with Figure \ref{fig:acc-var-alex02}-\ref{fig:acc-var-alex06} is that the testing accuracy does not decrease in the second phase/memorization regime for AlexNet trained on the true CIFAR10 data.  Loosely speaking, we conjugate that memorizing random labels will hurt the performance on unseen clean data but memorizing clean (or true) labels will not. This may explain why dynamic gradient clipping or preventing the training entering the memorization regime cannot improve the performance on a clean dataset.

\subsection{Algorithm of Gaussian Model Perturbation and Additional Results}
\label{sec:gmp-appendix}
\begin{algorithm}[!htbp]
\caption{Gaussian Model Perturbation Training}
\label{alg:gmp}
\begin{algorithmic}[2]
\REQUIRE Training set $S$, Batch size $b$, Loss function $\ell$, Initial model parameter $\boldsymbol{w}_0$, Learning rate $\lambda$, Number of noise $k$, Standard deviation of Gaussian distribution $\sigma$, Lagrange multiplier $\rho$ 
\WHILE{$\boldsymbol{w}_t$ not converged}
\STATE Update iteration: $t\gets t+1$
\STATE Sample $\mathcal{B}=\{\boldsymbol{z}_i\}_{i=1}^b$ from training set $S$
\STATE Sample $\Delta_j\sim\mathcal{N}(0,\sigma^2)$ for $j\in [k]$
\STATE Compute gradient: \\ 
$g_{\mathcal{B}} \gets \sum_{i=1}^b\left(
\nabla_{\boldsymbol{w}}\ell(\boldsymbol{w}_t,z_i)+\rho\sum_{j=1}^k\left(\nabla_{\boldsymbol{w}}\ell(\boldsymbol{w}_t+\Delta_j,z_i)-\nabla_{\boldsymbol{w}}\ell(\boldsymbol{w}_t,z_i)\right)/k\right)/b$
\STATE Update parameter: $\boldsymbol{w}_{t+1}\gets\boldsymbol{w}_{t}-\lambda\cdot g_{\mathcal{B}}$
\ENDWHILE
\end{algorithmic}
\end{algorithm}

The GMP algorithm is given in Algorithm \ref{alg:gmp}. Table \ref{tab:cvresults} and Table \ref{tab:cvresults-appendix} show that our method is competitive to the state-of-the-art regularization techniques. Specifically, our method achieves the best performance on SVHN for both models and on CIFAR-100 where VGG16 is employed. Particularly, testing accuracy is improved by nealy $2\%$ on the CIFAR-100 dataset with VGG16. For other tasks, GMP is always able to achieve the top-3 performance. In addition, we find that increasing the number of sampled noises does not guarantee the improvement of testing accuracy and may even degrade the performance on some datasets (e.g., SVHN). This hints that we can use small number of noises to reduce the running time without losing performance. 
Moreover, we observe that \textbf{GMP with $k=3$ usually takes around $1.76\times$ that of ERM training time}, which is affordable. 

For PreActResNet18, the performance of GMP appears slighly inferior to 
the most recent record given by 
AMP \citep{zheng2020regularizing}.  Noting that the key ingredient of AMP,  ``max-pooling'' in the parameter space, greatly resembles regularization term in GMP, which may be seen as ``average-pooling'' in the same space. 

One potential extension of GMP is to let the variance of the noise distribution be a function of the iteration step $t$. In other words, using the time-dependent $\sigma_t$ instead of a constant $\sigma$.

\begin{table*}[t]
\centering
\begin{tabular}{l|c|c|c}
\toprule
Method & SVHN & CIFAR-10 & CIFAR-100\\
\midrule
ERM & 97.05$\pm$0.063  & 94.98$\pm$0.212 & 75.69$\pm$0.303 \\
Dropout & 97.20$\pm$0.065  & 95.14$\pm$0.148 & 75.52$\pm$0.351\\
L.S. & 97.22$\pm$0.087  & 95.15$\pm$0.115 & 77.93$\pm$0.256 \\
Flooding & 97.16$\pm$0.047 & 95.03$\pm$0.082 & 75.50$\pm$0.234  \\
MixUp & 97.26$\pm$0.044 & \underline{95.91$\pm$0.117} & \underline{78.22$\pm$0.210}\\
Adv. Tr. & 97.23$\pm$0.080 & 95.01$\pm$0.085 & 74.77$\pm$0.229  \\
AMP & \textbf{97.70$\pm$0.025} &\textbf{96.03$\pm$0.091}  & \textbf{78.49$\pm$0.308}  \\
\textbf{GMP$^3$} & \underline{97.43$\pm$0.037} & {95.64$\pm$0.053}   & 78.05$\pm$0.208  \\
% \textbf{Ours$^5$} & 97.40$\pm$0.043  \\
\textbf{GMP$^{10}$} & {97.34$\pm$0.058} & {95.71$\pm$0.073} & {78.07$\pm$0.170}  \\
\bottomrule
\end{tabular}
\caption{Top-1 classification accuracy acc.($\%$) of PreActResNet18. 
We run experiments 10 times and report the mean and the standard deviation of the testing accuracy. 
Superscript denotes the number of sampled Gaussian noises during training.
}

% \end{table*}
\label{tab:cvresults-appendix}
\end{table*}

We also consider the following regularized scheme, which is a absolute value version.
\[
\min_{w} L_s(w) + \rho\ex{\Delta\sim\mathcal{N}(0,\sigma^2\mathbf{I}_d)}{\Big\lvert L_s(w+\Delta)-L_s(w)\Big\rvert}.
\]
This scheme can still perform well as shown in Table \ref{tab:cvresults-2-appendix}. In fact, it outperforms GMP$^3$ on CIFAR-100. This hints that it is possible to improve the performance by choosing other norm of $L_s(w+\Delta)-L_s(w)$.

\begin{table*}[t]
\centering
\begin{tabular}{l|c|c|c}
\toprule
Method & SVHN & CIFAR-10 & CIFAR-100\\
\midrule
\textbf{GMP$_{\rm abs}^3$} & {97.10$\pm$0.054
} & {94.21$\pm$0.139}   & 74.80$\pm$0.113  \\
\bottomrule
\end{tabular}
\caption{Top-1 classification accuracy acc.($\%$) of VGG16. 
}
% \end{table*}
\label{tab:cvresults-2-appendix}
\end{table*}

\end{document}